\DeclareMathOperator*{\argmax}{arg\,max}
\DeclareMathOperator*{\argmin}{arg\,min}
\begin{document}

\newtheorem{theorem}{Theorem}
\newtheorem{definition}{Definition}
\newtheorem{problem}{Problem}
\newtheorem{myexam}{Example}
\newtheorem{assumption}{Assumption}
\newtheorem{proposition}{Proposition}
\newtheorem{remark}{Remark}
\newtheorem{lemma}{Lemma}
\newtheorem{corollary}{Corollary}

\newcommand{\algcom}[1]{{\color{Peach} #1}}
\newcommand{\algcomdec}[1]{{\color{YellowGreen} #1}}
\newcommand{\algcompro}[1]{{\color{Orchid} #1}}
\newcommand{\MRW}[1]{{\color{blue}[MW: #1]}}
\newcommand{\ap}[1]{{\color{orange}[AP: #1]}}
\newcommand{\LL}[1]{{\color{red}[LL: #1]}}
\newcommand{\MK}[1]{{\color{green}[MK: #1]}}

\newcommand{\pX}{\mathbf{x}}
\newcommand{\dataset}{\mathcal{D}}

\newcommand{\BNN}[0]{{f}^{\mathbf{w}}(x)}
\newcommand{\post}[0]{p(w | \mathcal{D})}
\newcommand{\postprob}[0]{Prob_{w\sim p(w | \mathcal{D})}}
\newcommand{\psafe}[0]{P_{\text{safe}}(T,S)}
\newcommand{\dsafe}[0]{D_{\text{safe}}(T,S)}
\newcommand{\hrulealg}[0]{\vspace{1mm} \hrule \vspace{1mm}}

\title{Adversarial Robustness Certification\\  for Bayesian Neural Networks}

\author{\IEEEauthorblockN{Matthew Wicker\IEEEauthorrefmark{1},
Andrea Patane\IEEEauthorrefmark{2}, Luca Laurenti\IEEEauthorrefmark{3} and
Marta Kwiatkowska\IEEEauthorrefmark{1}}\\ \vspace{1em}
\IEEEauthorblockA{\IEEEauthorrefmark{1} Department of Computer Science,
University of Oxford, Oxford, United Kingdom\\
Email: \IEEEauthorrefmark{1}(matthew.wicker, marta.kwiatkowska)@cs.ox.ac.uk}\\ 
\IEEEauthorblockA{\IEEEauthorrefmark{2} School of Computer Science and Statistics, Trinity College Dublin, Ireland \\
Email: \IEEEauthorrefmark{2}apatane@tcd.ie} \\
\IEEEauthorblockA{\IEEEauthorrefmark{3} Delft Center for Systems and Control (DCSC), TU Delft, Delft, Netherlands \\
Email: \IEEEauthorrefmark{3}luca.laurenti@tudelft.nl}
\thanks{This project received funding from the ERC under the European Union’s Horizon 2020 research and innovation programme (FUN2MODEL, grant agreement No.~834115). MK further acknowledges funding from ELSA: European Lighthouse on Secure and Safe AI project (grant agreement
No. 101070617 under UK guarantee).}
\thanks{Preliminary work on this paper was done while  Andrea Patane and Luca Laurenti were at the University of Oxford funded by FUN2MODEL.}
}

% The paper headers
\markboth{Pre-print. Manuscript Under Review}{Wicker \MakeLowercase{\textit{et al.}}: Certification of Bayesian Neural Networks}

%\IEEEpubid{0000--0000/00\$00.00~\copyright~2021 IEEE}
% Remember, if you use this you must call \IEEEpubidadjcol in the second
% column for its text to clear the IEEEpubid mark.

\maketitle

%We formulate robustness certificates for BNNs as a guarantee that a compact set of inputs $T \subseteq  \mathbb{R}^m$ is mapped by the BNN into a safe output set $S \subseteq \mathbb{R}^n$. We focus on two main robustness properties of BNNs. 

\begin{abstract}
We study the problem of certifying the robustness of Bayesian neural networks (BNNs) to adversarial input perturbations. 
Given a compact set of input points $T \subseteq  \mathbb{R}^m$ and a set of output points $S \subseteq \mathbb{R}^n$, we define two notions of robustness for BNNs in an adversarial setting: probabilistic robustness and decision robustness. Probabilistic robustness is the probability that for all points in $T$ the output of a BNN sampled from the posterior is in $S$.
On the other hand, decision robustness considers the optimal decision of a BNN and checks if for all points in $T$ the optimal decision of the BNN for a given loss function lies within the output set $S$. Although exact computation of these robustness properties is challenging due to the probabilistic and non-convex nature of BNNs, we present a unified computational framework for efficiently and formally bounding them. Our approach is based on weight interval sampling, integration, and bound propagation techniques, and can be applied to BNNs with a large number of parameters, and independently of the (approximate) inference method employed to train the BNN. We evaluate the effectiveness of our methods on various regression and classification tasks, including an industrial regression benchmark, MNIST, traffic sign recognition, and airborne collision avoidance, and demonstrate that our approach enables certification of robustness and uncertainty of BNN predictions. %allows for the reliable certification of BNNs.

\end{abstract}

\begin{IEEEkeywords} 
Certification, Bayesian Neural Networks, Adversarial Robustness, Classification, Regression, Uncertainty 
\end{IEEEkeywords}

\section{Introduction}
%\LL{I have compacted quite a lot the intro as I felt it was a bit too vague and not enough on point. For instance, we were not clear on what we were doing on the paper exactly (definition of probabilistic and decision robustness) Old version is commented out in case you want it back.}

While neural networks (NNs) regularly obtain  state-of-the-art performance in many supervised machine learning problems \cite{aggarwal2021diagnostic, chen2021deep}, they have been found to be  vulnerable to adversarial attacks, i.e., imperceptible modifications of their inputs that trick the model into making an incorrect prediction \cite{IntruigingProperties}.
%Since their discovery,
{Along with several other vulnerabilities \cite{biggio2018wild},} the discovery of adversarial examples has made the deployment of NNs in real-world, safety-critical applications – such as autonomous driving or healthcare – increasingly challenging. The design and analysis of methods that can mitigate such vulnerabilities of NNs, or provide guarantees for their worst-case behaviour in adversarial conditions, has thus become of critical importance \cite{adams2022formal,wei2022safe}.

\begin{figure}
    \centering
    \subfigure{\includegraphics[width=0.4125\textwidth]{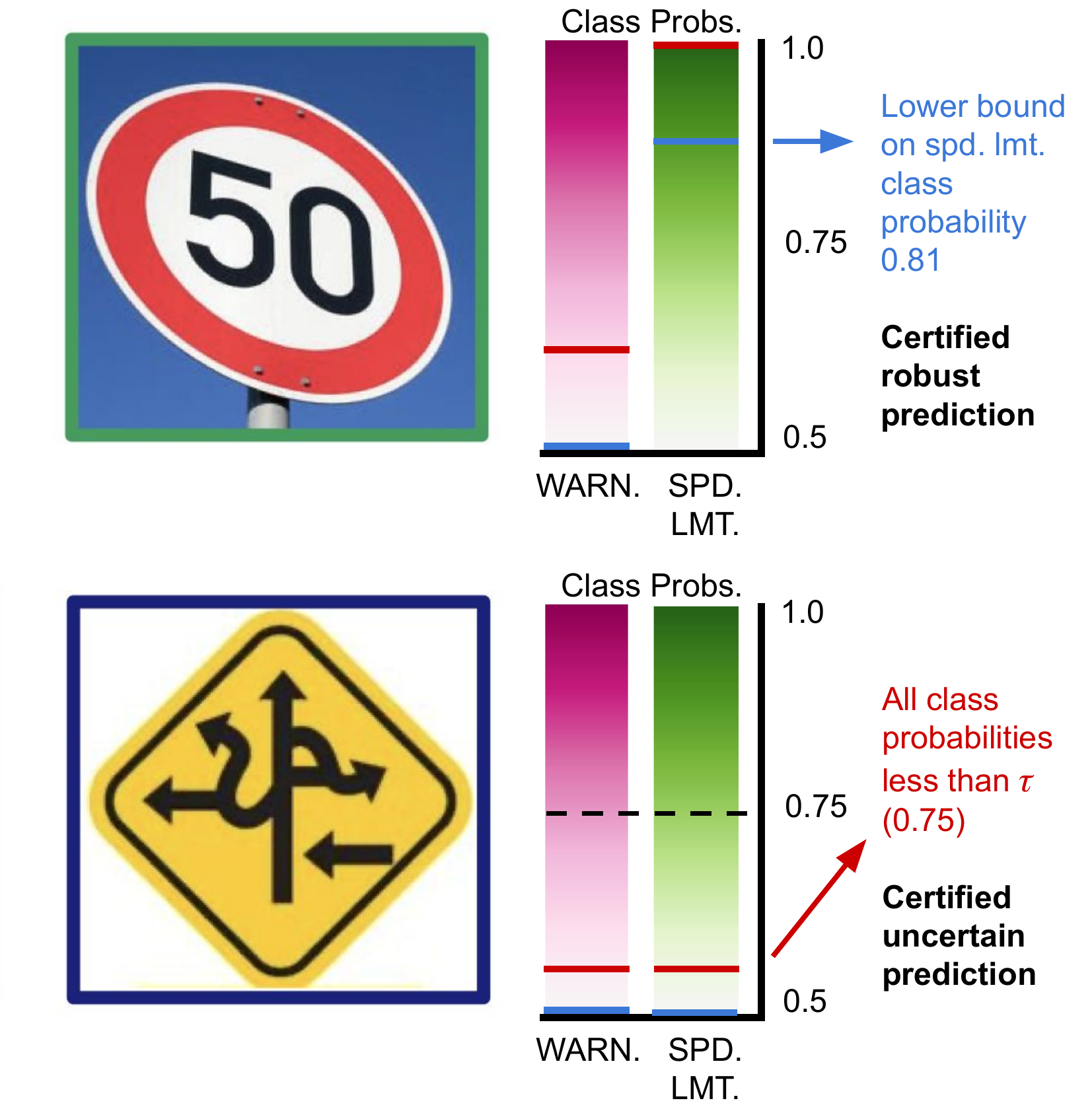}}
    \caption{Certifications for a traffic sign recognition benchmark with two classes: speed limit (spd.\ lmt.) and warning sign (warn.). We plot original images, the upper and lower-bound class probabilities as red and blue horizontal lines, respectively, %blue and red dashes, 
    and a description of the result. \textbf{Top Row:} A 50 km/hr sign from the GTSRB dataset. As the lower bound class probability is 0.81, we certify that all images in the ball are classified correctly as speed limit signs and therefore no adversarial examples exist. \textbf{Bottom Row:} A nonsense traffic sign. As the upper bound probability 
    for all classes is less than a threshold (0.75), we certify that the BNN is uncertain.  }
    \label{fig:GTSRBTitleFig}
\end{figure}

While retaining the advantages intrinsic to deep learning, Bayesian neural networks (BNNs), i.e., NNs with a probability distribution placed over their weights and biases \cite{neal2012bayesian},  enable probabilistically principled evaluation of model uncertainty. Since adversarial examples are intuitively related to uncertainty \cite{kendall2017uncertainties}, the application of BNNs is particularly appealing in safety-critical scenarios. 
In fact, model uncertainty of a BNN can, in theory, be taken into account at prediction time to enable safe decision-making \cite{michelmore2019uncertainty,bekasov2018bayesian,carbone2020robustness,yuan2020gradient}. Various techniques have been proposed for the evaluation of their robustness, including generalisation of gradient-based adversarial attacks (i.e., non-Bayesian) \cite{liu2018adv}, statistical verification techniques \cite{cardelli2019statistical}, as well as approaches based on pointwise (i.e., for a specific test point $x^*$) uncertainty evaluation \cite{smith2018understanding}.
However, to the best of our knowledge, a systematic approach for computing formal (i.e., with certified bounds)  guarantees on the behaviour of BNNs and their decisions against adversarial input perturbations is still missing.

%\LL{Integrate Figure~\ref{fig:GTSRBTitleFig} somewhere}\MRW{Done.}\LL{I would add it more in the experiments paragraph or where we introduce probabilistic robustness}\MRW{Okay, shifted.}
% The need for verification 

%\MK{Need to rework how the properties are defined (safe, robustness, input/output specification, etc), and introduce and refer to them consistently}

%\ap{The following few sentences make it sounds like we are inventing this S-T thing. I think we should point out that is a standard thing}
In this work, we develop a novel algorithmic framework to quantify the adversarial
robustness  of BNNs. %To this end, we adapt the notion of adversarial robustness commonly employed for deterministic NN models to the BNN setting, defined as the invariance of the decision in a small
%neighbourhood of a test point \cite{katz2017reluplex, IntruigingProperties}, and thus study the worst-case effect
%of bounded perturbations. 
In particular, following existing approaches for quantifying the robustness of deterministic neural networks \cite{goodfellow6572explaining, madry2017pgd, gowal2018effectiveness}, we model adversarial robustness as an \emph{input-output specification} defined by a given compact set of input points $T \subseteq  \mathbb{R}^m$ and a given convex polytope output set $S \subseteq \mathbb{R}^n $. A neural network satisfies this specification if all points in $T$ are mapped into $S$, called a safe set. Modelling specifications in this way encompasses many other practical properties such as classifier monotonicity \cite{dvijotham2018dual} and individual fairness \cite{benussi2022individual}. %; however, in this work we focus exclusively on adversarial robustness. 
For a particular specification, we focus on two main properties of a BNN of interest for adversarial prediction settings: \emph{probabilistic robustness} \cite{wicker2020probabilistic, cardelli2019statistical} and \emph{decision robustness} \cite{gowal2018effectiveness, berrada2021make}. %\ap{Cite, Cite, Cite!}. \ap{Imo, you need to make absolutely clear that those are both notions already used in the literature! A key selling point of this paper compared to Deep Mind's one is that we have a unified approach.}
The former, probabilistic robustness, is defined as the probability that a network sampled from the posterior distribution is robust (e.g., satisfies a robustness specification defined by a given $T$ and $S$). Probabilistic robustness attempts to provide a general measure of robustness of a BNN; in contrast, \emph{decision robustness} focuses on the decision step, and evaluates %computes 
the robustness of the optimal decision of a BNN. That is, a BNN satisfies decision robustness for a property if, for all points in $T$, the expectation of the output of the BNN in the case of regression, or the argmax of the expectation of the softmax w.r.t.\ the posterior distribution for classification, are contained in $S$. 

Unfortunately, evaluating probabilistic and decision robustness for a BNN is not trivial, as it involves computing distributions and expectations of high-dimensional random variables passed through a non-convex function (the neural network architecture). 
Nevertheless, we derive a unified algorithmic framework based on computations over the BNN weight space
that yields \emph{certified lower} and \emph{upper bounds} for both properties.
Specifically, we show that probabilistic robustness is equivalent to the measure, w.r.t.\ the BNN posterior, of the set of weights for which the resulting deterministic NN is \emph{robust}, i.e., it maps all points of $T$ to a subset of $S$.  Computing upper and lower bounds for the probability involves sampling compact sets of weights according to the BNN posterior, and {propagating} each of these weight sets, $H$, through the neural network architecture, jointly with the input region $T$, to check whether all the networks instantiated by weights in ${H}$ are safe.
To do so, we generalise bound propagation techniques developed for deterministic neural networks to the Bayesian settings and instantiate explicit schemes for Interval Bound Propagation (IBP) and Linear Bound Propagation (LBP) \cite{gehr2018ai2}.
Similarly, in the case of decision robustness, we show that formal bounds can be obtained by partitioning the weight space into different weight sets, and for each weight set ${J}$  of the partition we again employ bound propagation techniques to compute the maximum and minimum of the decision of the NN for all input points in $T$ and different weight configurations in ${J}$. The resulting extrema are then averaged according to the posterior measure of the respective weight sets to obtain sound lower and upper bounds on decision robustness. 

We perform a %\MK{since you say systematic, I was expecting to see tables of computational performance for different network sizes} 
systematic experimental investigation of our framework on a variety of tasks. We first showcase the behaviour of our methodology on a classification problem from an airborne collision avoidance benchmark \cite{julian2019guaranteeing} and on two  safety-critical industrial regression benchmarks \cite{ucidata}. %In both cases we show that our method can successfully certify BNNs against adversarial perturbations.
We then consider image recognition tasks and illustrate how  our method can scale to verifying BNNs on medium-sized computer vision problems, including MNIST and a two-class subset of the German Traffic Sign Recognition Benchmark (GTSRB) dataset \cite{stallkamp2012gtsrb}. %,  also accounting for uncertainty. 
On small networks, such as those used for airborne collision avoidance ($\sim \hspace{-0.2em} 5000$ parameters), our method is able to verify key properties in under a second, thus enabling comprehensive certification over a fine partition of the entire state space. Moreover, when employed in conjunction with adversarial training \cite{wicker2021bayesian}, we are able to obtain non-trivial certificates for convolutional NNs (471,000 parameters) on full-colour GTSRB images (2,352 dimensions).\footnote{An implementation to reproduce all the experiments can be found at: \url{https://github.com/matthewwicker/AdversarialRobustnessCertificationForBNNs}.}  As an example, we demonstrate the bounds on decision robustness in Figure~\ref{fig:GTSRBTitleFig}, where we plot the upper and lower bound class probabilities (in red and blue respectively) for a BNN trained on a two-class traffic sign recognition benchmark. The bounds are computed for all images within a $\ell_\infty$ ball with radius 2/255 of the two images in the left column of the figure. For the top image of a speed limit sign, our lower bound allows us to verify that the all images within the 2/255 are correctly classified by the BNN as a 50 km/hr sign. For the bottom image of a nonsense traffic sign, our upper bound allows us to verify that the BNN is uncertain for this image and all images in the ball.

In summary, this paper makes the following contributions.

\begin{itemize}
    \item  We present  an algorithmic framework based on convex relaxation techniques for the robustness analysis of BNNs in adversarial settings. 
    \item We derive explicit lower- and upper-bounding  procedures based on IBP and LBP for the propagation of input and weight intervals through the BNN posterior function.
    \item We empirically show that our method can be used to certify BNNs consisting of multiple hidden layers and with hundreds of neurons per layer. 
\end{itemize}
A preliminary version of this paper appeared as \cite{wicker2020probabilistic}. This work extends \cite{wicker2020probabilistic} in several aspects. 
In contrast to \cite{wicker2020probabilistic}, which focused only on probabilistic robustness, here we also tackle decision robustness and embed the calculations for the two properties in a common computational framework. Furthermore, while the method in \cite{wicker2020probabilistic} only computes lower bounds, in this paper we also develop a technique for upper bounds computation. Finally, we substantially extend the empirical evaluation to include additional datasets, 
%Compared to the conference paper, in this work, we introduce several new contributions, including: algorithms for analysis of decision robustness; explicit bound formulation for sample-based posteriors; and an improved method for computing posterior probability of safe weight sets. Further, the experimental evaluation has been consistently extended, to include, 
 evaluation of convolutional architectures and scalability analysis, as well as %\MK{verification not defined, as it was not used before}\MRW{You are right we only describe our method as certification throughout now.}
 certification of  out-of-distribution uncertainty.

%The remainder of the paper is structured as follows. In the next section, we review pertinent works related to certification of neural networks and adversarial robustness of Bayesian machine learning methods. In Section~\ref{sec:background} we give an overview of Bayesian neural networks, and we define the key concepts of probabilistic robustness and decision robustness for Bayesian neural networks in Section~\ref{sec:problemstatement}. In Section~\ref{sec:theorethical_bounds} we demonstrate theoretically how the robustness quantities can be bounded and how these bounds yield sound certification for BNNs. Practical algorithms to compute these bounds, including convex relaxation, are given in Section~\ref{sec:methodology} and Section~\ref{sec:algoirthm}. Finally, we validate our certification framework on a number of datasets in Section~\ref{sec:experiments}.

\section{Related Work}
%\ap{Style note: stuff like "Blah blah. [9] tackles a similar property" - is imho very ugly and difficult to read. The text should be readable withouth reading the brakets - at least put something like ". In [9]"}

%\LL{Maybe I would start with this paragraph describing potential of BNNs and Bayesian models for adv examples. Then, you can have the paragraph on certification of NN where you can focus especially on BNNs after an introductory paragraph on NN verification.}

%\LL{Let's be careful to compare with deepmind. In their empirical evaluation they do compare with our approach showing how their method can be more precise but it is generally more computationally demanding.}\MW{Yes I agree. Perhaps reviewers will ask for this comparison and then we can give it. We need to submit soon otherwise this paper dies in internal revision.}

% BNNs as more robust methods
Bayesian uncertainty estimates have been shown to empirically flag adversarial examples, often with remarkable success \cite{rawat2017adversarial, smith2018understanding}.
These techniques are, however, empirical and can be circumvented by specially-tailored attacks that also target the uncertainty estimation \cite{carlini2017cwattack}. % \ap{Citation missing. I remember there was a work by Carlini that looked into this.}\MRW{Added.} 
%Additionally, BNNs have been shown to be more resistant to adversarial attacks \cite{bekasov2018bayesian, carbone2020robustness}. %Gradient-based attacks for BNNs have been developed in \cite{liu2018adv, zimmermann2019comment}. 
Despite these attacks, it has been shown that BNN posteriors inferred by Hamiltonian Monte Carlo tend to be more robust to attacks than their deterministic counterparts \cite{bekasov2018bayesian}. Further, under idealised conditions of infinite data, infinitely-wide neural networks and perfect training, BNNs are provably robust to gradient-based adversarial attacks \cite{carbone2020robustness}.  % show that in a noiseless setting BNN posteriors \MK{HMC not defined}\MRW{Updated} inferred by Hamiltonian Monte Carlo are more robust to attacks than their counterpart. \MK{reword, unclear} \MRW{Updated} 
%In \c, it is shown that 
%gradient-based methods provably fail to attack BNNs but only this only holds for under idealised conditions including infinite data, exact inference and infinitely wide, fully-trained BNNs \cite{carbone2020robustness}. 
%\MK{not clear what 'purpose' means} \MRW{Updated}
However, while showing that BNNs are promising models for defending against adversarial attacks, the arguments in \cite{bekasov2018bayesian} and \cite{carbone2020robustness} do not provide concrete bounds or provable guarantees for when an adversarial example does not exist for a given BNN posterior.

%However, methods for flagging adversarial examples via uncertainty and for computing adversarial examples are empirical and cannot guarantee the existence or absence of adversarial points. Moreover, the theoretical argument in \cite{carbone2020robustness} does not provide us with practical bounds for when adversarial examples do not exist for a given BNN posterior.  
%However, these approaches only consider pointwise uncertainty estimates, that is, specific to a particular test point.
In \cite{cardelli2019statistical, michelmore2019uncertainty}, the authors tackle similar properties of BNNs to those discussed in this paper. Yet these methods only provide bounds on probabilistic robustness and the bounds are \textit{statistical}, i.e., only valid up to a user-defined, finite probability $1-\delta$, with $\delta > 0$. In contrast, the method in this paper covers both probabilistic and decision robustness and computes bounds that are sound for the whole BNN posterior (i.e., hold with probability $1$).   % However, their methods are \MK{this needs to be finessed - you also use sampling, so why are your bounds formal? You also use Bonferoni bounds, which are statistical. So what is the difference? Is it in confidence bound guarantees?} statistical in nature, and hence in practice cannot provide formal guarantees on the obtained bounds, which are important for safety-critical applications.
%In contrast, the approach we develop in this paper builds on convex relaxation techniques and iterative sampling from the posterior to compute certified guarantees over the BNN behaviour in adversarial settings. 
%\LL{From here I would move to the previous paragraph, as in that paragraph we were already discussing about verification of BNNs.}
In \cite{wicker2021bayesian}, the authors incorporate worst-case information via bound propagation into the likelihood in order to train BNNs that are more adversarially robust; while that work develops a principled defense for BNNs against attack, it does not develop or study methods for analyzing or guaranteeing their robustness.   %\ap{But...? Otherwise it seems you are just adding the self-citation for the sake of it}. 
%\MK{unclear what this is saying}\MK{you don't state what you mean by 'specification'}\MRW{Removed as it is not fully relevant.}

Since the publication of our preliminary work    \cite{wicker2020probabilistic}, the study of  \cite{berrada2021verifying} has further investigated certifying the posterior predictive of BNNs. The definition in \cite{berrada2021verifying} corresponds to a subset of what we refer to as \textit{decision} robustness, but their method only applies to BNNs whose posterior support has been clipped to be in a finite range. Here, we pose a more general problem of certifying decision and probabilistic robustness of BNNs, 
and can handle posteriors on continuous, unbounded support, which entails the overwhelming majority of those commonly employed for BNNs.  Furthermore, following the preliminary version of this paper \cite{wicker2020probabilistic}, %the preliminary version of this work,
\cite{wicker2021certification} introduced %used 
a technique for probabilistic robustness certification 
implemented via a recursive algorithm  the %inside of a backwards recursion algorithm 
that operates over the state-space of a model-based control scenario. %Similarly, 
\cite{lechner2021infinite} uses similar methods to those presented in \cite{wicker2020probabilistic} %the preliminary version of this paper, 
to study infinite-time horizon robustness properties of BNN control policies by checking for safe weight sets and modifying the posterior so that only safe weights have non-zero posterior support. %\ap{I have not read these two papers but we should really careful how we phrase this! At the moment, it sounds like they do something more general than us, and that that is that! Do they have any assumption we don't have? Do they tackle decision and probabilistic? Is their method fully formal? Be specific otehrwise this sentence cuts the ground off of our paper.}%\LL{Yes, this must be improved. Also because if I remember correctly, those methods only consider a fixed and unique weight interval and propagate that through the network. Of course, that will not work in many setting where a single interval to contain most of the probability mass would be too large to be propagated through the neural network due to the conservativism of IBP. }%\MK{Cite Henzinger NeurIPS paper? UAI 2021?} \MRW{Added,}
%In contrast, probabilistic robustness aims at  computing the uncertainty for compact subspaces of
%input points, thus taking into account worst-case adversarial perturbations of the input point when considering its neighbourhood. A probabilistic property analogous to that considered in this paper has been studied for BNNs in \cite{cardelli2019statistical, michelmore2019uncertainty}.
%However,  the solution methods presented in \cite{cardelli2019statistical, michelmore2019uncertainty} are statistical, with confidence bounds,  and in practice cannot give certified guarantees for the computed values, which are important for safety-critical applications.
%Our approach, instead building on non-linear optimisation relaxation techniques, computes a certified lower bound.

%\LL{Would not this paragraph go before of the previous one logically?}
Most existing certification methods in the literature
are designed for deterministic NNs.
Approaches studied %in the literature 
include abstract interpretation \cite{gehr2018ai2}, mixed integer linear programming \cite{tjeng2017evaluating, raghunathan2018semidefinite, dvijotham2018verification, wong2018provable}, game-based approaches \cite{wicker2018feature, wu2018game}, and SAT/SMT \cite{katz2017reluplex, julian2019guaranteeing}.
In particular, \cite{weng2018towards,zhang2018efficient,gowal2018effectiveness}  employ relaxation techniques from non-convex optimisation to compute guarantees over deterministic neural network behaviours, specifically using Interval Bound Propagation (IBP) and Linear Bound Propagation (LBP) approaches. %that have been employed for certification of deterministic neural networks.
However, these methods cannot be used for BNNs because they all assume that the weights of the networks are deterministic, i.e., fixed to a given value, while in the Bayesian setting we need to certify the BNN for a continuous range of values for weights that are not fixed, but distributed according to the BNN posterior.

%GP methods
In the context of Bayesian learning, methods to compute adversarial robustness measures have been explored for Gaussian processes (GPs), both for regression \cite{cardelli2018robustness} and classification tasks  \cite{smith2019adversarial,patane2021adversarial}.
However, because of the non-linearity in NN architectures, GP-based approaches cannot be directly employed for BNNs. Furthermore, the vast majority of approximate Bayesian inference methods for BNNs do not employ Gaussian approximations over latent space \cite{blundell2015weight}. 
%the vast majority of inference methods employed for BNNs do not have a Gaussian approximate distribution in latent/function space, because of the non-linearities in NN architectures that lead to a non-Gaussian predictive distribution for BNNs \cite{blundell2015weight}. 
%Hence, certification techniques for GPs cannot be directly employed for BNNs. 
%This also includes BNNs trained with variational inference, where a Gaussian distribution is assumed in the weight space (e.g., Bayes by Backprop \cite{blundell2015weight}).
%In fact, because of the non-linearity of the BNN structure, even in this case the distribution of the BNN is not Gaussian in function space. \LL{Delete above sentence}
%Though methods to approximate BNN inference with GP-based inference have been proposed \cite{emtiyaz2019approximate}, the guarantees obtained on a GP aimed to approximate a BNN would not provide us with sound guarantees on the true BNN. % would apply to the approximation and not the actual BNN, and would not provide error bounds. \LL{Above sentence is not very clear}
%without guarantees on the approximation error.
%Thus, methods designed for GPs can only be applied approximately, and without guarantees on the approximation error.
In contrast, our method is specifically tailored to take into account the non-linear nature of BNNs and can be directly applied to a range of approximate Bayesian inference techniques used in the literature.

\section{Background}\label{sec:background}

In this section, we overview the necessary background and introduce the notation we use throughout the paper. We focus on neural networks (NNs) employed in a supervised learning scenario, where we are given a dataset of $n_{\mathcal{D}}$ pairs of inputs and labels, $\mathcal{D} = \{(x_i, y_i)\}_{i=1}^{n_{\mathcal{D}}}$, with $x_i \in \mathbb{R}^m$, and where each target output $y \in \mathbb{R}^{n}$ is %\ap{this was $n_c$ in the intro, check for consistency of notation throughout}
either a one-hot class vector for classification or a real-valued vector for regression. %\LL{Define $y$. Is it regression or classification?} 
%We then proceed to define the local robustness problem. %and briefly discuss problems with direct application of prior definitions to the Bayesian setting. 

\subsection{Bayesian Deep Learning}

Consider a feed forward neural network $f^w:\mathbb{R}^{m}\to\mathbb{R}^n$, parametrised by a vector  $w \in \mathbb{R}^{n_w}$ containing all its weights and biases. 
We denote with $f^{w,1},...,f^{w,K}$ the $K$ layers of $f^w$ and % and with W_i$ or $b_i$. %,
%Given a NN $f^w$ composed of $K$ layers, we denote by $f^{w,1},...,f^{w,K}$ the layers of $f^w$ 
%and by $w=\big(\{W_{i}\}_{i=0}^{K-1}\big) \cup \big(\{b_{i}\}_{i=0}^{K-1}\big)$ the aggregate vector of the neural network's parameters, where $W_i$ and $b_i$ represent weights and biases of the $i$th layer of $f^w$. %We denote fully-connected and convolutional neural network architectures respectively with FCNN and CNN. 
take the activation function of the $i$th layer to be $\sigma^{(i)}$, abbreviated to just $\sigma$ in the case of the output activation.\footnote{We assume that the activation functions have a finite number of inflection points, which holds for activation functions commonly used in practice \cite{GoodfellowBook}.}
%
%We assume that all activation functions of the NN are continuous monotonic, \LL{Do we need monotonicuty?}\MRW{Yes. Otherwise convex relaxations do not work.}\LL{Well, not really. As long as it is continuous I can always find a linear upper and lower bound.}
%which holds for the vast majority of activation functions used in practice such as sigmoid, ReLu, and tanh \cite{deepbook}. This guarantees that $f^w$ is a continuous function. 
Throughout this paper, we will use $f^w(x)$ to represent pre-activation of the last layer. % and the output of the last layers activation function to be $\sigma(f^w(x))$.

%\paragraph{Bayesian Learning}

Bayesian learning of deep neural network starts with a prior distribution, $p_{\mathbf{w}}(w)$, over the vector of random variables associated to the weights, $\mathbf{w}$.
%with $\mathbf{w}$ being the vector of random variables associated to the parameter vector $w$.
Placing a distribution over the weights defines a stochastic process indexed by the input space, which we denote as $f^{\mathbf{w}}$.
After the data set  $\mathcal{D}$ has been observed, the BNN prior distribution is updated according to the likelihood, $p(\dataset | w) = \prod_{i=1}^{n_\dataset} p(y_i | x_i, w)$, which models how likely (probabilistically speaking) we are to observe an output under the stochasticity of our model parameters and observational noise \cite{bishop1995neural}. The likelihood function,
$p(y_i | x_i, w)$, generally takes the shape of a softmax for multiclass classification and a multivariate Gaussian for regression.
The posterior distribution over the weights given the dataset is then computed by means of the Bayes formula, i.e., $p(w | \mathcal{D}) \propto p(\mathcal{D} | w) p(w)$.
The cumulative distribution of $\post$ we denote as $P(\cdot)$, so that for $R \subseteq \mathbb{R}^{n_w}$ we have:
\begin{align}\label{eq:cumulative}
\int_R \post dw = P(R).
\end{align}
%The Bayesian posterior, denoted $p(w | \mathcal{D})$, is then computed by integrating over the space of parameters with respect to the prior and a likelihood: $p(w | \mathcal{D}) \propto p(\mathcal{D} | w) p(w)$. 
%\LL{Define it} 
The posterior $p(w | \mathcal{D})$ is in turn  used to calculate the output of a BNN on an unseen point, $x^*$. The distribution over outputs is called the posterior predictive distribution and is defined as: %predict the output for unseen inputs through the posterior predictive distribution:
\begin{align}\label{eq:pred_distr}
    p(y^* | x^*, \mathcal{D}) = \int p(y^{*} | x^{*}, w)p(w | \mathcal{D}) dw.
\end{align}
Equation \eqref{eq:pred_distr} defines a distribution over the BNN output.
When employing a Bayesian model, the overall final prediction %\LL{Not clear what the ultimate prediction is? I guess here you are talking about decision?} 
is taken to be a single value, $\hat{y}$, that minimizes the Bayesian risk of an incorrect prediction according to the posterior predictive distribution and a loss function $\mathcal{L}$. Formally, the final decision of a BNN is computed as 
\begin{align*}
\hat{y} = \argmin_{y} \int_{\mathbb{R}^{n}} \mathcal{L}(y, y^*) p(y^* | x^*, \mathcal{D}) dy^*.
\end{align*}
This minimization is the subject of Bayesian decision theory \cite{berger2013statistical}, and the final form of $\hat{y}$ clearly depends on the specific loss function $\mathcal{L}$ employed in practice. In this paper, we focus on two standard loss functions widely employed for classification and regression problems.\footnote{In Appendix  \ref{appendix:decisionrules} we discuss how our method can be generalised to other losses  commonly employed in practice.} 
\paragraph{Classification Decisions} The 0-1 loss, $\ell_{0-1}$, assigns a penalty of 0 to the correct prediction, and 1 otherwise.
It can be shown that the optimal decision in this case is given by the class for which the predictive distribution obtains its maximum, i.e.:
\begin{align*}
    \hat{y} = \argmax_{i=1,\ldots,n} p_i(y^* | x^*, \mathcal{D}) = \argmax_{i=1,\ldots,n} \mathbb{E}_{w \sim \post}\left[ \sigma_i(f^w(x)) \right],
\end{align*}
where $\sigma_i$ represents the
$i$th output component of the softmax function.
\paragraph{Regression Decisions} The $\ell_2$ loss assigns a penalty to a prediction according to its $\ell_2$ distance from the ground truth. It can be shown that the optimal decision in this case is given by the expected value of the BNN output over the posterior distribution, i.e.:
\begin{align*}
    \hat{y} = \mathbb{E}_{w \sim \post}\left[f^{w}(x)\right].
\end{align*}
%\ap{As discussed, this is where we need a bit of introduction on decision theory for regression and classification so to properly introduce decision robustness and its relation to predictive distributions.}
%\LL{You are already citing \cite{berger2013statistical}, so no need to say that further details are there. Also, we cannot have an appendix (paper must be self contained within the page limit)}Details on Bayesian decision theory are given in  \cite{berger2013statistical} and a further discussion of its relevance to our problem setting is given in the Appendix. 
%\ap{Since we make a distinction between decision robustness and probabilistic robustness, we should here talk about model decision, and set clear notation for decision of the model vs. predictive posterior.}
%\ap{In doing this we can also make clear that $f$ does not include the final activation.}

Unfortunately, because of the non-linearity of neural
network architectures, the computation of the posterior distribution over weights, $p(w | \mathcal{D})$, is generally intractable \cite{neal2012bayesian}. Hence, various approximation methods have been studied to perform inference with BNNs in practice. Among these methods, we will consider   Hamiltonian Monte Carlo (HMC) \cite{neal2012bayesian} and Variational Inference (VI)  \cite{blundell2015weight}, which we now briefly describe. %in the following two paragraphs.

\subsubsection{Hamiltonian Monte Carlo (HMC)} HMC proceeds by defining
a Markov chain whose invariant distribution is $p_{\mathbf{w} } (w \vert \dataset),$ and
relies on Hamiltonian dynamics to speed up the exploration
of the space. Differently from VI discussed below, HMC does not make parametric assumptions on
the form of the posterior distribution, %\ap{I am no expert of this but it might make some assumption (e.g., finite variance or something similar), worth checking this with Matthew, maybe we can simply write "does not assume a particular analytical form for the posterior"?}
and is asymptotically
correct \cite{neal2012bayesian}. The result of HMC is a set of samples that approximates  $p_{\mathbf{w} } (w \vert \dataset)$.

\subsubsection{Variational Inference (VI)} VI proceeds by finding a Gaussian approximating distribution over the weight space $q(w)\sim p_{\mathbf{w} } (w \vert \dataset)$ 
in a trade-off between approximation accuracy and scalability. The core idea is that $q(w)$ depends on some hyperparameters that are then iteratively optimized by minimizing
a divergence measure between $q(w)$ and $p_{\mathbf{w} } (w \vert \dataset)$. Samples
can then be efficiently extracted from $q(w)$.

For simplicity of notation, in the rest of the paper we will indicate with $\post$ the posterior distribution estimated by either of the two methods, and clarify the methodological differences when they arise.

\begin{figure*}
    \centering
    \subfigure{\includegraphics[width=0.82\textwidth]{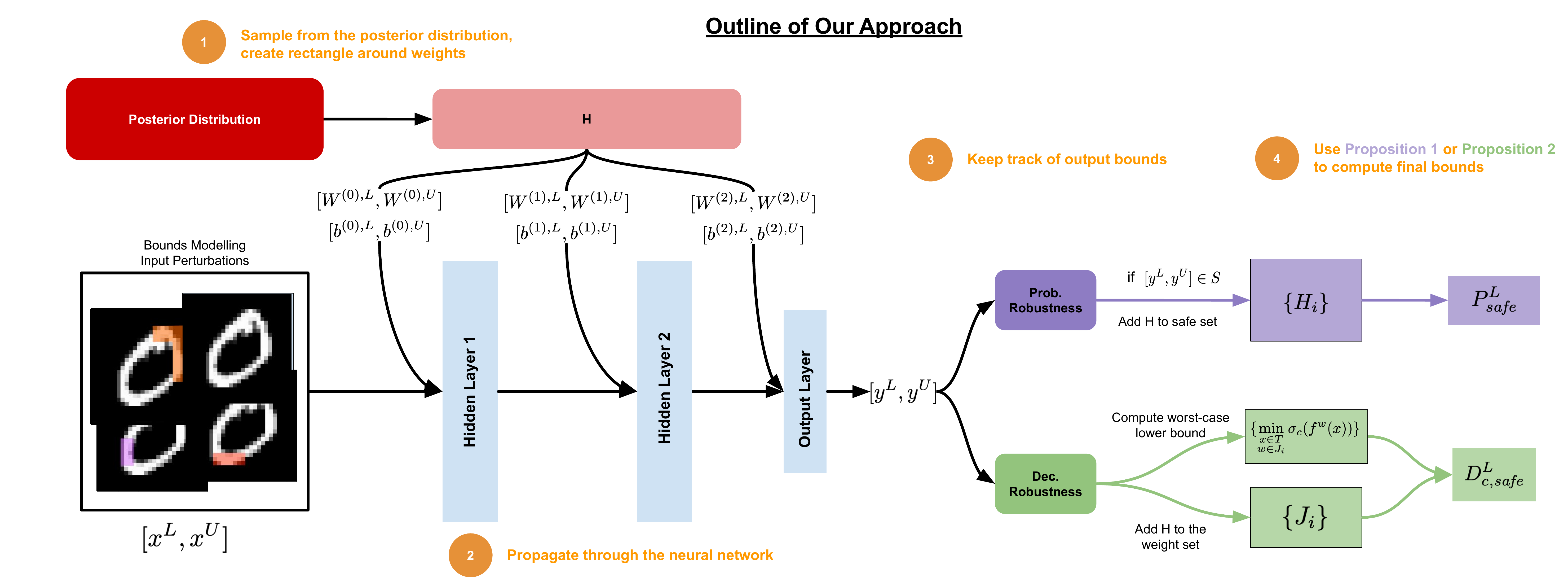}}
    \caption{
    %\MK{Improve by making capitalisation consistent and with the rest of the paper, define concepts and make sure the definitions are consistent (D notation is not)} 
    A diagram illustrating a single iteration of the computational flow for the certification process of a BNN w.r.t.\ decision robustness (green) and probabilistic robustness (purple). This process is summarised %further described by 
    in Algorithm~\ref{alg:psafealgorithm}.  %The first step is to sample a weight from the posterior and build a rectangle around this weight. The second step is to perform bound propagation with respect to the input specification and sampled weight rectangle. The third step is to keep track of safe weights in the case of probabilistic robustness and of all weights sets along with their corresponding worst-case output in the case of decision robustness. The final step is to combine these tracked values with probability computations to arrive at bounds on the specification of interest. This process is further described by Algorithm 1.
    }
    \label{fig:DiagramComputations}
\end{figure*}

\section{Problem Statements}\label{sec:problemstatement}

We focus on local specifications defined over an input compact set $T \subseteq \mathbb{R}^m$ %\LL{Note that in some the experiments on the VCAS-XU (at least in the conference version) we had a non-compact input set}\ap{I think they were also compact. Maybe not connected but definitely compact, as in closed and bounded}, 
and output set $S\subseteq \mathbb{R}^{n}$ in the form of a convex polytope:
%In order to leverage bound propagation techniques for certification we assume that $S$ is a convex polytope:
%
%We consider two notions to analyzing the sensitivity of BNN outputs. The first, called \textit{probabilistic robustness}, considers the probability that a NN sampled from the BNN posterior distribution is susceptible to adversarial attacks. The second, called \textit{decision robustness}, ensures that the final decision of the BNN, typically given by the mean of the predictive distribution, is not vulnerable to adversarial attacks.\footnote{Generalization to other decision theoretic quantities is given in the Appendix.}\LL{you already describe this in the previous section. also there is not this discussion mentioned in the footnote in appendix}  %\LL{This is vague becasue we consider the expectation of the predictive posterior, so I would try to be more clear.} \MRW{I have tried to clear this up, let me know if there is anything specifically still unclear here :)}
%
%Throughout these definitions we will always take local robustness properties to be flexibly modelled as input-output constraints.
%\LL{not clear what you mean} \MRW{Which part is unclear?}
%These constraints come in the form of compact input sets  $T \subseteq \mathbb{R}^m$ in the input space, and a set $S\subseteq \mathbb{R}^{n}$. In order to leverage bound propagation techniques for certification we restrict the form of $S$:
%\begin{assumption}\label{assumption}
%We assume that ${S}$ is described by $n_S$ linear constraints on the values of the final layer of the BNN, that is, 
\begin{align}\label{def:S}
S=\{y\in \mathbb{R}^{n} \, |  \, C_S y + d_S \geq 0 \} ,
\end{align} 
where $C_S \in \mathbb{R}^{n_S\times n}$ and $d_S\in \mathbb{R}^{n_S}$ are the matrix and vector encoding the polytope constraints, and $n_S$ is the number of output constraints considered.
For simplicity of presentation, we assume that $T$ is defined as a box (axis-aligned linear constraints).\footnote{Note that, where a specification is not in this form already, one can first compute a bounding box $R = [x^L,x^U]$ (or a finite sequence of them) such that $T \subseteq R$, and then proving that the output specification holds for $R$ also proves that it holds for $T$. In the case that we do not prove that an output specification holds, then we cannot guarantee it is violated by nature of our method being sound but not complete. %\MK{But what if it does not? Need to discuss, also state minimality}
} However, we stress that all the methods in this paper can be extended to the more general case where $T$ is a convex polytope.
%\end{assumption}
Our formulation of input-output specifications can be used to capture important properties such as classifier monotonicity \cite{stanforth2018dual} and individual fairness \cite{benussi2022individual}, but in this work we focus exclusively on adversarial robustness.  Targeted adversarial robustness, where one aims to force the neural network into a particular wrong classification, is captured in this framework by setting $T$ to be an over-approximation of an $\ell_p$ ball around a given test input, and setting $C_S$ to an $n_S \times n$ matrix of all zeros with a $-1$ entry in the diagonal entry corresponding to the true class and a $1$ on the diagonal entry corresponding to the target class or classes. For regression, one uses $C_S$ to encode the absolute deviation from the target value and $d_S$ to encode the maximum tolerable deviation.
%\LL{For the case above, in the paper, do we handle it specifically also for regression? }\MRW{Yes. Added note}
Throughout the paper we will refer to an input-output set pair, $T$ and $S$, as defined above as a \textit{robustness specification}. %A BNN is \textit{safe} if it satisfies the definitions of robustness presented below.  %\MRW{Maybe this makes things more clear in terms of the terminology I intended? Let me know :)}
\subsection{Probabilistic Robustness}
Probabilistic robustness accounts for the probabilistic behaviour of a BNN in adversarial settings. 
\begin{definition}[Probabilistic robustness]\label{Def:ProbSafety}
Given a Bayesian neural network $f^{\mathbf{w}}$, an input set $T \subseteq \mathbb{R}^{m}$ and a set $S\subseteq \mathbb{R}^{n}$ of safe outputs, then probabilistic robustness is defined as 
\begin{align}
\label{Eq:ProbSafety}
    P_{\text{safe}}(T,S):=\postprob(\forall x \in T, f^{{w}}(x) \in S).
\end{align}
Given $\eta \in [0,1]$, we then say that $f^{\mathbf{w}}$ is probabilistically robust, or safe, for specifications $T$ and $S$, with probability at least $\eta$ iff:
\begin{align*}
    P_{\text{safe}}(T,S) \geq \eta
\end{align*}
\end{definition}
Probabilistic robustness considers the adversarial behaviour of the model while accounting for the uncertainty arising from the posterior distribution. 
$P_{\text{safe}}(T,S)$ represents the (weighted) proportion of neural networks sampled from $f^{\mathbf{w}}$ that satisfy a given input-output specification (captured by $T$ and $S$) and can be used directly as a measure of compliance for Bayesian neural networks. 
As such, probabilistic robustness is %a measure 
particularly suited to quantification of the robustness of a BNN to adversarial perturbations \cite{berrada2021make, michelmore2019uncertainty, de2021adversarial}.
%a particular instance of notions of probabilistic safety generally employed for the formal analysis of stochastic processes \cite{abate2008probabilistic,9003411}. 
%\LL{This discussion is not super strong, as for regression and the setting described above, you would want to compute bounds on the predictive posterior and not on (1).}\ap{Agree – to do is a discussion that makes more sense, maybe talking about uncertainty and stuff. It is important that we stress that we are not making stuff up, but are using notions used in the literature – otherwise it becomes a book not a paper.}
Exact computation of $P_{\text{safe}}(T,S)$ is hindered by both the size and non-linearity of neural networks. As $P_{\text{safe}}(T,S)$ cannot be computed exactly for general BNNs, in this work we tackle the problem of computing provable bounds on probabilistic robustness. 
\begin{problem}[Bounding probabilistic robustness]\label{problem:probabilistic_safety}
Given a Bayesian neural network $f^{\mathbf{w}}$, an input set $T \subseteq \mathbb{R}^{m}$ and a set $S\subseteq \mathbb{R}^{n}$ of safe outputs, compute (non-trivial) $P_{\text{safe}}^L$ and $P_{\text{safe}}^U$ such that
\begin{align}
    P_{\text{safe}}^L \leq P_{\text{safe}}(T,S) \leq P_{\text{safe}}^U.
\end{align}
\end{problem}
%\LL{Following discussion seems more for related work}
We highlight %stress 
the difference between this problem definition and those discussed in prior works \cite{cardelli2019statistical, michelmore2019uncertainty}. In particular, prior works compute upper and lower bounds that hold with probability $1-\delta$ for some pre-specified $\delta$. 
While such statistical bounds can provide an estimation for $P_{\text{safe}}(T,S)$, these may not be sufficient in safety-critical contexts where strong, worst-case guarantees over the full behaviour of the BNN are necessary. The problem statement above holds with probability $1$ and represents sound guarantees on $P_{\text{safe}}(T,S)$.

\subsection{Decision Robustness}
While $P_{\text{safe}}$ attempts to measure the compliance of all functions in the support of a BNN posterior, we are often interested in evaluating robustness w.r.t.\ a specific decision. In order to do so, we consider \emph{decision robustness}, which is computed over the final decision of the BNN. %\ap{Not directly though for the way it's defined. Fix.} \MW{This comment is unclear. I tried to address it. Let me know if it is not addressed}
%The decision from the BNN is computed using by minimizing a loss function over the posterior predictive distribution.
In particular, given a loss function and a decision $\hat{y}$ we have the following.
%\LL{Note that here you say for a general predictive posterior, but then in the definition you just consider classification}\MRW{This isnt true, $\sigma$ is the output not just for classification. I see how this was unclear.}
\begin{definition}[Decision robustness]\label{Def:DecSafety}
Consider a Bayesian neural network $f^{\mathbf{w}}$, an input set $T \subseteq \mathbb{R}^{m}$ and a set $S\subseteq \mathbb{R}^{n}$ of safe outputs. Assume that the decision for a loss $\mathcal{L}$ for $x \in \mathbb{R}^m$ is given by $\hat{y}(x)$. %\LL{Note that this set $S$ is semantically different than the one above as in here we have a softmax that maps each components to $[0,1]$, so I would at least specify $S\subseteq [0,1]^{n}$} \MRW{Not sure where you got this from. This isnt the case as $\sigma$ isnt defined this way. Made this more clear at the beginning to avoid confusion with the prior version of the text.}
Then, the Bayesian decision is considered to be robust if:
\begin{align}
    \forall x \in T \quad \hat{y}(x) \in S.
\end{align}
\end{definition}
%Notice that, decision robustness is a direct analogue to the adversarial robustness studied for deterministic neural networks \cite{}.
%
We notice that, since the specific form of the decision depends on the loss employed in practice, the definition of decision robustness takes different form depending on whether the BNN is used for classification or for regression. 
In particular, we instantiate the definition for the two cases of standard loss discussed in Section~\ref{sec:background}. % \MK{proper reference} \MRW{added}

In the regression case, using the mean square loss we have that $\hat{y}(x) = \mathbb{E}[f^{w}(x)]$, so that if we find upper and lower bounds on $\mathbb{E}[f^{w}(x)]$ for all $x \in T$, i.e., for $i=1,...,m$:
\begin{align*}
       D_{\text{safe},i}^L &\leq \min_{x \in T} \mathbb{E}\left[ f^{w}_i(x) \right] , \; 
    D_{\text{safe},i}^U \geq \max_{x \in T} \mathbb{E}\left[ f^{w}_i(x) \right],  
\end{align*}
we can then simply check whether these are within $S$. 

For the classification case, where the decision is given by the $\argmax$ of the predictive posterior, note that, in order to check the condition in Definition \ref{Def:DecSafety}, it suffices to compute lower and upper bounds on the posterior predictive in $T$, i.e.:
\begin{align*}
    D_{\text{safe},i}^L &\leq \min_{x \in T} \mathbb{E}\left[ \sigma_i(f^{w}(x)) \right] , \; 
    D_{\text{safe},i}^U \geq \max_{x \in T} \mathbb{E}\left[ \sigma_i(f^{w}(x)) \right], 
\end{align*}
for $i = 1,\ldots,m$.
It is easy to see that the knowledge of $D_{\text{safe},i}^L$ and $D_{\text{safe},i}^U$ for all $i = 1,\ldots,m$ can be used to provide guarantees of the model decision belonging to $S$, as defined in Equation \eqref{def:S}, by simply propagating these bounds through the equations. 
Therefore, for both classification and regression we have to bound an expectation of the BNN output over the posterior distribution, with the additional softmax computations for classification. 
We thus arrive at the following problem for bounding decision robustness.
\begin{problem}[Bounding decision robustness]\label{prob:adv_pred}
Let $f^{\mathbf{w}}$ be a BNN with posterior distribution $\post$. Consider an input-output specification ($T$, $S$) and assume $\mathcal{L} = \ell_{0-1}$ for classification or $\mathcal{L} = \ell_2$ for regression. We aim at computing (non-trivial) lower and upper bounds $D_{\text{safe}}^L$ and $D_{\text{safe}}^U$ such that:
\begin{align*}
    D_{\text{safe}}^L \leq \mathbb{E}[s(f^w(x)) ] \leq D_{\text{safe}}^U \quad \forall x\in T,
\end{align*}
where $s = \sigma$ for classification and $s = \mathbb{I}$ for regression. 
\end{problem}

Note that, while for regression we bound the decision directly, for classification we compute the bounds on the predictive posterior and use these to compute bounds on the final decision.
This is similar to what is done for deterministic neural networks, where in the case of classification the bounds are often computed over the logit, and then used to provide guarantees for the final decision \cite{gowal2018effectiveness}. As with probabilistic robustness, our bounds on decision robustness are sound guarantees and do not have a probability of error as with statistical bounds.

\subsection{Outline of our Approach:} 

We design an algorithmic framework for worst-case and best-case bounds on local robustness properties in Bayesian neural networks, taking account of both the posterior distribution ($P_{\text{safe}}^L$ and $P_{\text{safe}}^U$) and the overall model decision ($D_{\text{safe}}^L$ and $D_{\text{safe}}^U$). %for probabilistic and for decision robustness, that is for given $S$, $T$ and BNN $f^{\mathbf{w}}$ we show how to compute two real values $P_{\text{safe}}^L$ and $D_{\text{safe}}^L$ such that:
%\begin{align*}
%   P_{\text{safe}}(T,S) \geq  P_{\text{safe}}^L, \quad D_{\text{safe}}(T,S) \geq  D_{\text{safe}}^L.
%\end{align*}
First, we show how the two robustness properties of Definitions \ref{Def:ProbSafety} and \ref{Def:DecSafety}  can be reformulated  in terms of computation over weight intervals. 
This allows us to derive a unified approach to
%It will turn out that, though defined over different quantities, 
the bounding of  the robustness of the BNN posterior (i.e., probabilistic robustness) and of the robustness of the overall model decision (i.e., decision robustness) %boils down to the same computational requirements, that is,
that is based on \textit{bound propagation} and \textit{posterior integral} computation over  hyper-rectangles. A visual outline for our framework is presented in Figure~\ref{fig:DiagramComputations}.
We organise the presentation of our framework by first introducing a general theoretical framework for bounding the robustness quantities of interest (Section \ref{sec:theorethical_bounds}). 
We will then show how the required integral computations can be achieved for Bayesian posterior inference techniques commonly used in practice (Section \ref{sec:weight_intervals}).
Hence, we will show how to extend bound propagation techniques to deal with both input variable intervals and intervals over the weight space, and will instantiate approaches based on Interval and Linear Bound Propagation techniques (Section \ref{subsec:convexrelax}). Finally (Section \ref{sec:algoirthm}), we will present an overall algorithm that produces the desired bounds.

\section{Formulating BNN Adversarial Robustness via Weight Sets}\label{sec:theorethical_bounds}

%In this section, we derive a theoretical framework for computing formal bounds on $P_{\text{safe}}(T,S)$ and $D_{\text{safe}}(T,S)$.
%\MK{prune this text, repetition of what said just before}
In this section, we show how a single computational framework can be leveraged to compute bounds on both definitions of BNN robustness. We start by converting the computation of robustness into weight space and then define a family of weight intervals that we leverage to bound the integrations required by both definitions.
Interestingly, we find that the resulting theoretical bounds in both cases depend on the same quantities. Proofs for the main results in this section are presented in Appendix~\ref{appendix:proofs}.

\subsection{Bounding Probabilistic Robustness}
%\ap{Sometimes we use $P_{\text{safe}}(T,S)$ sometimes $P_{\text{safe}}(T,S, \mathbf{w})$, make it consistent.}
We first show that the computation of $P_{\text{safe}}(T,S)$ is %\MK{I don't think you show equivalence}\MRW{This is by definition, nothing needs to be shown here I think, but let me know if it is unclear!} 
equivalent to computing a maximal set of safe weights $H$ such that each network associated to weights in $H$ is safe w.r.t.\ the robustness specification at hand. 
%Equivalently, one could compute the set of unsafe weights $K$. 
%
\begin{definition}[Maximal safe and unsafe sets]\label{Def:SafetyWeight}
    We say that ${H}\subseteq \mathbb{R}^{n_{w}}$ is the maximal safe set of weights %\MK{no restrictions on input/output sets?} \MRW{Nothing new, they are restricted above, do we need to restate that T is compact and S is a polytope/polyhedra?}
    from $T$ to $S$, or simply the maximal safe set of weights,  iff $H=\{ w \in  \mathbb{R}^{n_{w}}\, |\,\forall x \in T, f^{{w}}(x)\in S   \}.$ %\LL{ the iff in this context looks weird. I would say instead We say that $H=\{ w \in  \mathbb{R}^{n_{w}}\, |\,\forall \*x \in T, f^{{w}}(\*x)\in S   \}$ is the maximal set of...} \MRW{Tbh I think it is fine this way. If you feel strongly about it let me know. Also others, feel free to weigh in here.}
    Similarly, we say that ${K}\subseteq \mathbb{R}^{n_{w}}$ is the maximal unsafe set of weights from $T$ to $S$, or simply the maximal unsafe set of weights,  iff $K=\{ w \in  \mathbb{R}^{n_{w}}\, |\,\exists \*x \in T, f^{{w}}(\*x)\not\in S \}.$
\end{definition}
Intuitively, $H$ and $K$ simply encode the input-output specifications $S$ and $T$ in the BNN weight space.
%If $\hat{H}$ is a safe set of weights, then Definition \ref{Def:SafetyWeight} implies that for any  $w\in \hat{H}$  the corresponding neural network is safe, i.e., $\forall x \in T, f^w(x)\in S.$ Then, a trivial consequence of the definition of probabilistic safety is the following proposition.
The following lemma, which trivially follows from Equation~\ref{Eq:ProbSafety}, allows us to directly relate the maximal sets of weights to
%\MRW{Updated and I will add a clarifying statement} \MK{safety or robustness?}
the probability of robustness. %\LL{Either you say where it comes from or you add the proof}

\begin{lemma}
\label{Prop:SafetyComputation}
Let $H$ and $K$ be the maximal safe and unsafe sets of weights from $T$ to $S$. Assume that $w \sim \post$. Then, it holds that
\begin{align}
\label{Eq:IntegralSafety}
    P(H) = \int_{H} \post dw = P_{\text{safe}}(T,S)= \\ 1- \int_{K} \post dw = 1 - P(K) \nonumber . 
\end{align} 
\end{lemma}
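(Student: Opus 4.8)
The plan is to prove the chain of equalities by first recognizing that the logical predicate defining $\psafe$ is precisely the membership condition for the maximal safe set $H$, and then observing that $H$ and $K$ form a measurable partition of the weight space $\mathbb{R}^{n_w}$. The entire argument reduces to rewriting the defining probability as a posterior integral and applying the total-probability normalisation of the posterior density.

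First I would unfold Definition~\ref{Def:ProbSafety}: by \eqref{Eq:ProbSafety}, $\psafe = \postprob(\forall x \in T,\, f^{w}(x) \in S)$. By Definition~\ref{Def:SafetyWeight}, the predicate $\forall x \in T,\, f^{w}(x) \in S$ holds for a weight vector $w$ if and only if $w \in H$. Hence the event whose probability defines $\psafe$ is exactly $\{w \in H\}$, and rewriting this probability as an integral of the posterior density over $H$, together with the cumulative-distribution notation of \eqref{eq:cumulative}, gives
\begin{align*}
\psafe = \postprob(w \in H) = \int_H \post\, dw = P(H),
\end{align*}
which is the first equality.

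For the remaining equalities I would use that $H$ and $K$ are complementary. Negating the quantified predicate (De Morgan) shows that $w \notin H$ iff there exists $x \in T$ with $f^{w}(x) \notin S$, i.e.\ iff $w \in K$; thus $K = \mathbb{R}^{n_w} \setminus H$, and the two sets are disjoint with union $\mathbb{R}^{n_w}$. Since $\post$ integrates to $1$ over $\mathbb{R}^{n_w}$, additivity of the integral yields $P(H) + P(K) = 1$, so that
\begin{align*}
\psafe = P(H) = 1 - P(K) = 1 - \int_K \post\, dw,
\end{align*}
completing the chain.

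The only genuine subtlety — and the step I expect to require the most care — is ensuring that $H$ (and hence $K$) is measurable, so that the integrals above are well-defined. This is where I would invoke the continuity of $w \mapsto f^{w}(x)$ for each fixed $x$ together with the compactness of $T$ and the closedness of the polytope $S$: these guarantee that $H = \bigcap_{x \in T}\{w : f^{w}(x) \in S\}$ is an intersection of closed sets and therefore closed, in particular Borel measurable, with $K$ its complement. Once measurability is secured, every equality above follows immediately from the definitions and the normalisation of the posterior, consistent with the lemma being essentially a restatement of \eqref{Eq:ProbSafety} in weight-space terms.
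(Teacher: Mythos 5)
Your proof is correct and follows essentially the same route as the paper: identify the event defining $P_{\text{safe}}(T,S)$ with membership in $H$, rewrite as a posterior integral, and use complementarity of $H$ and $K$. Your additional remark on the measurability of $H$ is a point of rigor the paper's proof leaves implicit, but it does not change the argument.
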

%
%\textit{Proof:} $\blacksquare$. 
%
Lemma \ref{Prop:SafetyComputation} simply translates the robustness specification from being concerned with the input-output behaviour of the BNN to an integration on the weight space. %By computing the cumulative density of the safe and unsafe weights, Proposition \ref{Prop:SafetyComputation} tells us that we will arrive exactly at the quantity we desire.
%Proposition~\ref{Prop:SafetyComputation} tells us that computing probabilistic robustness is equivalent to computing the cumulative density of the safe and unsafe weights.

%\LL{Why this sentence is here in the middle of two paragraph? It comes out of the blue}\ap{Not sure where else to put it xD it is needed for the next Corollary.}\MRW{I think it flows nicer above this so I have moved it.}

%\ap{Is not exactly clear what function space and weight space are to the reader at this point. This could be introduced in the background.}
%\ap{It is not clear why you got from maximal sets to normal sets. You should put a sentence explaining that properly.}\MW{We say that computing them is infeasible, do you want a detailed reason as to why this is? If so let me know by reinstating this comment and then we can have a discussion about it.}
An exact computation of sets $H$ and $K$ is infeasible in general. However, we can easily compute subsets of $H$ and $K$. Such subsets can then be used to compute upper and lower bounds on the value of $\psafe$ by considering subsets of the maximal safe and unsafe weights. 
\begin{definition}[Safe and unsafe sets]\label{Def:SafetyWeight2}
 Given a maximal safe set ${H}$ or a maximal unsafe set $K$ of weights, we say that $\hat{H}$ and $\hat{K}$ are a safe and unsafe set of weights from $T$ to $S$ iff $\hat{H} \subseteq H $ and $\hat{K} \subseteq K$, respectively.
\end{definition}
$\hat{H}$ and $\hat{K}$ can include \textit{any} safe and unsafe weights, respectively, without requiring they are \textit{maximal}. Without maximality, we no longer have strict equality in Lemma~\ref{Prop:SafetyComputation}, but instead we arrive at bounds on the value of probabilistic robustness. %Clearly, the closer $\hat{H}$ and $\hat{K}$ resemble $H$ and $K$ (especially around the area of high probability mass for $p(w|\mathcal{D})$),  the tighter the obtained bounds will be. 

We proceed by defining $\hat{H}$ and $\hat{K}$ as the union of a family of disjoint weight intervals, as these can provide flexible approximations of $H$ and $K$. %\MRW{I am a bit confused at the distinction between $\mathcal{H}$ and $\hat{H}$, do we need both?}
That is, we consider $\mathcal{H} = \{H_i\}_{i=1}^{n_H}$, with $H_i= [w^{L,H}_i,w^{U,H}_i]$  and  $\mathcal{K} = \{K_i\}_{i=1}^{n_K}$, with $K_i= [w^{L,K}_i,w^{U,K}_i]$ such that $H_i \subset H$ and $K_i \subset K$, $\hat{H} = \bigcup_{i=1}^{n_H} H_i$, $\hat{K} = \bigcup_{i=1}^{n_K} K_i$, and $H_i \cap H_j = \emptyset $ and $K_i \cap K_j = \emptyset $, for any $i \neq j$.
Hence, as a consequence of Lemma \ref{Prop:SafetyComputation}, and by the fact that $\hat{H} = \bigcup_{i=1}^{n_H} H_i \subset H$ and $\hat{K} = \bigcup_{i=1}^{n_K} K_i \subset K$, we obtain the following.
\begin{proposition}[Bounds on probabilistic robustness]
\label{Prop:SafetyComputation2}
Let $H$ and $K$ be the maximal safe and unsafe sets of weights from $T$ to $S$. Consider two families of pairwise disjoint weight intervals $\mathcal{H} = \{H_i\}_{i=1}^{n_H}$, $\mathcal{K} = \{K_i\}_{i=1}^{n_K}$ , where for all $i$: 
\begin{align}\label{eq:condition_on_sets}
    H_i \subseteq H, \quad K_i \subseteq K.
\end{align}
Let $\hat{H} \subseteq H $ and $\hat{K} \subseteq K$ be non-maximal safe and unsafe sets of weights, with $\hat{H} = \bigcup_{i=1}^{n_H} H_i$ and $\hat{K} = \bigcup_{i=1}^{n_K} K_i$. Assume that $w \sim \post $. Then, it holds that
\begin{align}
\label{Eq:IntegralSafetyIneq}
%   P_{\text{safe}}^L := \int_{\hat{H}} \post dw \leq P_{\text{safe}}(T,S, \mathbf{w}) \\
%   \leq 1-\int_{\hat{K}} \post dw =: P_{\text{safe}}^U,  \nonumber
   P_{\text{safe}}^L :=  \sum_{i=1}^{n_H} P(H_i) \leq P_{\text{safe}}(T,S, \mathbf{w}) 
   \leq 1- \sum_{i=1}^{n_K} P(K_i)=: P_{\text{safe}}^U,  
\end{align} 
that is, $P_{\text{safe}}^L$ and $P_{\text{safe}}^U$ are, respectively, lower and upper bounds on probabilistic robustness. 
\end{proposition}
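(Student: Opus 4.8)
The plan is to reduce everything to Lemma~\ref{Prop:SafetyComputation} together with two elementary facts about the posterior measure $P(\cdot)$ defined in Equation~\eqref{eq:cumulative}: finite additivity over disjoint sets and monotonicity under inclusion. Since $P$ arises as the integral of the posterior density $\post$, it is a genuine probability measure on $\mathbb{R}^{n_w}$, so both properties hold immediately. I would state this explicitly up front, so that the remainder of the argument is purely set-theoretic bookkeeping resting on the exact identity already supplied by the lemma.

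For the lower bound, I would first observe that the family $\mathcal{H}=\{H_i\}_{i=1}^{n_H}$ is pairwise disjoint by hypothesis, so finite additivity gives $\sum_{i=1}^{n_H} P(H_i) = P\bigl(\bigcup_{i=1}^{n_H} H_i\bigr) = P(\hat{H})$. The condition $H_i \subseteq H$ for all $i$ yields $\hat{H}\subseteq H$, whence monotonicity of $P$ gives $P(\hat{H}) \leq P(H)$. Invoking Lemma~\ref{Prop:SafetyComputation}, which identifies $P(H)$ with $\psafe$, chains these into $P_{\text{safe}}^L = \sum_{i} P(H_i) \leq \psafe$, establishing the left inequality.

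The upper bound follows by the symmetric argument applied to the unsafe family, combined with the complementary identity in Lemma~\ref{Prop:SafetyComputation}. Pairwise disjointness of $\mathcal{K}$ gives $\sum_{i=1}^{n_K} P(K_i) = P(\hat{K})$, and $\hat{K}\subseteq K$ gives $P(\hat{K}) \leq P(K)$. Since the lemma states $\psafe = 1 - P(K)$, subtracting from $1$ reverses the inequality: $P_{\text{safe}}^U = 1 - \sum_{i} P(K_i) = 1 - P(\hat{K}) \geq 1 - P(K) = \psafe$.

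There is no serious obstacle here; the statement is essentially a corollary of Lemma~\ref{Prop:SafetyComputation}. The only point requiring care is the bookkeeping that separates the two distinct roles of the hypotheses: disjointness is what turns the sum of measures into the measure of the union (additivity), whereas the inclusions $H_i\subseteq H$ and $K_i \subseteq K$ are what permit passing from $\hat{H},\hat{K}$ to the maximal sets (monotonicity). Conflating these two steps, or forgetting that the complement in the upper bound flips the direction of the inequality, would be the likely source of an error.
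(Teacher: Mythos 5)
Your proof is correct and follows the same route the paper takes: the paper derives this proposition directly from Lemma~\ref{Prop:SafetyComputation} using precisely the inclusions $\hat{H}\subseteq H$, $\hat{K}\subseteq K$ and the disjointness of the interval families, which is what you spell out via additivity and monotonicity of $P$. Your write-up merely makes explicit the measure-theoretic bookkeeping that the paper leaves implicit.
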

Through the use of Proposition \ref{Prop:SafetyComputation2} we can thus bound probabilistic robustness by performing computation over sets of safe and unsafe intervals.
Note that the bounds are given in the case where $\mathcal{H}$ and $\mathcal{K}$ are families of pairwise disjoint weight sets. The general case can be tackled by using the Bonferroni bound, which is discussed in Appendix \ref{appendix:bonferroni} for hyper-rectangular weight sets. 

Before explaining in detail how such bounds can be explicitly computed, first, in the next section, we show how a similar derivation leads us to analogous bounds and computations for decision robustness.

\subsection{Bounding Decision Robustness}
%\ap{This section is (1): only specific to classification in its current form; (2) it is weirdly defined only for two-class classification. Some clarifications on indexes and stuff need to be done for multi-class. We need to fix these two things once we fix the definition of decision robustness.}

%We now show how the same techniques used for bounding probabilistic robustness can be employed to obtain a bound on decision robustness, whose explicit computation will rely on the same conditions highlighted in the previous section for probabilistic robustness.

The key difference between our formulation of probabilistic robustness and that of decision robustness is that, for the former, we are only interested in the behaviour of neural networks extracted from the BNN posterior that satisfy the robustness requirements (hence the distinction between $H$- and $K$-weight intervals), whereas for the computation of bounds on decision robustness we need to take into account the overall worst-case behaviour of an expected value computed for the BNN predictive distribution in order to compute sound bounds. As such, rather than computing safe and unsafe sets, we only need a family of weight sets, $\mathcal{J} = \{J_i\}_{i=1}^{n_J}$, and rely on that for bounding $\dsafe$.
We explicitly show how this can be done for classification with likelihood $\sigma$. The bound for regression follows similarly by using the identity function as $\sigma$.

\begin{proposition}[Bounding decision robustness]
\label{prop:pred_posterior_bound}
Let $\mathcal{J} = \{J_i\}_{i=1}^{n_J}$, with  $J_i \subset \mathbb{R}^{n_w}$ be a family of disjoint weight intervals. Let $\sigma^L$ and $\sigma^U$ be vectors that lower and upper bound the co-domain of the final activation function, and $c \in \{1,\ldots,m \}$ an index spanning the BNN output dimension. Define:
\begin{align}
     \hspace{-.5em} D_{\text{safe},c}^L \hspace{-.2em}:= \sum_{i=1}^{n_J}  P(J_i) \min_{\substack{x \in T\\ w \in J_i }} & \sigma_c(f^w(x))  + \label{eq:dsafel} \sigma^L \hspace{-.2em} \left( 1 - \sum_{i=1}^{n_J}  P(J_i)  \right)\\
     \hspace{-.5em} D_{\text{safe},c}^U  \hspace{-.2em} := \sum_{i=1}^{n_J}  P(J_i)\max_{\substack{x \in T\\ w \in J_i }} & \sigma_c(f^w(x))  +\label{eq:dsafeu}  \sigma^U \hspace{-.2em} \left( 1 - \sum_{i=1}^{n_J}  P(J_i) \right). 
\end{align}
Consider the vectors $D_{\text{safe}}^L = [D_{\text{safe},1}^L,\ldots,D_{\text{safe},m}^L]$ and $D_{\text{safe}}^U = [D_{\text{safe},1}^U,\ldots,D_{\text{safe},m}^U]$, then it holds that:
\begin{align*}
    D_{\text{safe}}^L \leq \mathbb{E}_{\post} [\sigma(f^w(x))] \leq D_{\text{safe}}^U \quad \forall x \in T,
\end{align*}
that is, $D_{\text{safe}}^L$  and $D_{\text{safe}}^U$ are lower and upper bounds on the predictive distribution in $T$.
\end{proposition}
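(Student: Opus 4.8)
The plan is to fix an arbitrary input $x \in T$ and an output coordinate $c \in \{1,\ldots,m\}$, express the quantity to be bounded, $\mathbb{E}_{\post}[\sigma_c(f^w(x))]$, as an integral of $\sigma_c(f^w(x))$ against the posterior over the whole weight space, and then bound it by partitioning the domain of integration. Since the intervals $\{J_i\}_{i=1}^{n_J}$ are pairwise disjoint, I would split the integral into the contributions of each $J_i$ together with the contribution of the complement $R := \mathbb{R}^{n_w} \setminus \bigcup_{i=1}^{n_J} J_i$:
\begin{align*}
\mathbb{E}_{\post}[\sigma_c(f^w(x))] = \sum_{i=1}^{n_J} \int_{J_i} \sigma_c(f^w(x)) \post \, dw + \int_{R} \sigma_c(f^w(x)) \post \, dw.
\end{align*}

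For the lower bound I would handle each piece separately. On each $J_i$, for the fixed $x \in T$ and any $w \in J_i$ we have $\sigma_c(f^w(x)) \geq \min_{x' \in T,\, w' \in J_i} \sigma_c(f^{w'}(x'))$, since the right-hand side minimizes jointly over all inputs in $T$ and all weights in $J_i$ and therefore lower-bounds the value at the particular pair $(x,w)$. Pulling this constant out of the integral and using $\int_{J_i} \post \, dw = P(J_i)$ recovers the first summand of $D_{\text{safe},c}^L$. On $R$, I would use that the co-domain of the final activation is bounded below componentwise, i.e.\ $\sigma_c(\cdot) \geq \sigma^L_c$ (for the softmax, $\sigma^L_c = 0$); together with $\int_{R} \post \, dw = 1 - \sum_{i=1}^{n_J} P(J_i)$, which holds because the $J_i$ are disjoint and $\post$ integrates to one, this yields the second term of $D_{\text{safe},c}^L$. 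Adding the two contributions gives $\mathbb{E}_{\post}[\sigma_c(f^w(x))] \geq D_{\text{safe},c}^L$. The upper bound is obtained symmetrically, replacing each minimum by the corresponding maximum and $\sigma^L_c$ by $\sigma^U_c$.

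Because $x \in T$ and $c$ were arbitrary, the componentwise inequalities hold simultaneously for every output coordinate and every input in $T$, which is precisely the vector statement $D_{\text{safe}}^L \leq \mathbb{E}_{\post}[\sigma(f^w(x))] \leq D_{\text{safe}}^U$ for all $x \in T$. The regression case follows along identical lines by taking $\sigma = \mathbb{I}$, with $\sigma^L,\sigma^U$ read as the available outer bounds on the network output.

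The argument is essentially measure-theoretic bookkeeping, so the subtle points are few. I would make sure the inequality directions are preserved when multiplying the complement mass $1 - \sum_{i} P(J_i)$ by $\sigma^L_c$ (resp.\ $\sigma^U_c$): this is legitimate because that mass is nonnegative. I would also emphasize that the joint extremum over $T \times J_i$ is what allows a single constant to bound the integrand uniformly over $w \in J_i$ for the fixed $x$, and that disjointness of the $J_i$ is what makes the split into $\{J_i\}$ and $R$ exact. No convexity or smoothness of the network is needed beyond the integrability already guaranteed by the boundedness of $\sigma$. Accordingly, this proposition poses no real obstacle on its own; the genuine difficulty of the framework is deferred to the later computation of the per-interval extrema $\min/\max_{x\in T,\, w\in J_i}\sigma_c(f^w(x))$ and of the posterior masses $P(J_i)$.
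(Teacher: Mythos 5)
Your proposal is correct and follows essentially the same route as the paper's proof: split the posterior integral over the disjoint $J_i$ and the complement, bound each $J_i$-contribution by the joint extremum over $T\times J_i$ times $P(J_i)$, and bound the residual mass by $\sigma^L$ (resp.\ $\sigma^U$). The only cosmetic difference is that you argue pointwise for an arbitrary fixed $x\in T$ whereas the paper phrases the same inequality via $\min_{x\in T}$ of the expectation; your explicit remark that the residual mass is nonnegative is a small but welcome addition.
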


Intuitively, the first terms in the bounds of Equations \eqref{eq:dsafel} and \eqref{eq:dsafeu} consider the worst-case output for the input set $T$ and each interval $J_i$, while the second term accounts for the worst-case value of the posterior mass not captured by the family of intervals $\mathcal{J}$ by taking a coarse, overall bound on that region.  The provided bound is valid for any family of intervals $\mathcal{J}$. Ideally, however, the partition should be finer around regions of high probability mass of the posterior distribution, as these make up the dominant term in the computation of the posterior predictive. 
We will discuss in Section \ref{sec:methodology} how we select these intervals in practice so as to empirically obtain non-vacuous bounds. 
%\MK{Discuss bound tighteness?}
%Notice that, as was the case for probabilistic robustness, while the bounds reported in the Proposition are valid, in order to explicitly compute them we need to computed $P(J_i) = \int_{J_i}\post dw$ (which is equivalent to Condition (1) in section \ref{sec:todo}) and the minimum and maximum  $\sigma(f^{w}(x))$  for $x \in T$ and  $w \in J_i$ (which is equivalent to Condition (2) highlighted above).
%The computation of these two quantities will be the subjects of the next two sections, where we go into the technical details of how to compute them for BNNs. An overall overview of our full methodology is given in Section \ref{todo}, where we given algorithms to wrap up the key results of the methodological sections.

\subsection{Computation of the Bounds}\label{sec:computation_sketch}
%Through the use of Proposition \ref{Prop:SafetyComputation2} and Proposition \ref{prop:pred_posterior_bound}, we can thus bound probabilistic and decision robustness for a BNN operating in adversarial settings.
We now propose a unified approach to computing these lower and upper bounds.  %Notice, however, that, though valid, the bounds reported still include quantities that require explicit computation. It turns out that the explicit computations needed to instantiate the bounds reduce to the same fundamental procedures in the two cases, which allows us to develop a general framework for the computation of robustness in BNNs.
%
%To see that, 
We first observe that the bounds in Equations \eqref{Eq:IntegralSafetyIneq}, \eqref{eq:dsafel} and \eqref{eq:dsafeu} require the integration of the posterior distribution over weight intervals, i.e., $P(H_i)$, $P(K_i)$ and $P(J_i)$. While this is in general intractable, we have built the bound so that $H_i$, $K_i$ and $J_i$ are axis-aligned hyper-rectangles, and so the computation can be done exactly for approximate Bayesian inference methods used in practice. This will be the topic of Section \ref{sec:weight_intervals}, where, given a rectangle in weight space of the form $R = [w^L, w^U]$, we will show how to compute $P(R) = \int_R \post dw$. %This assumption on the form of $H_i$, $K_i$ and $J_i$ may lead to approximation in the bounds, however, interval bounding techniques have been empirically shown to introduce reasonable levels of approximation \cite{gowal2018effectiveness}. 
%\ap{I remove the paragraph here about approx error, if you want to have something please re-write. Do not re-use what was written previously because it is wrong. Notice that the error of interval bounding techniques has nothing to do with the approximation introduced by $H_i$, $K_i$ and $J_i$. They are only rectangles used to approximate a non-rectangular set. IBP, NN, Gowal et al., etc, have nothing to do with it.}

For the explicit computation of decision robustness,  the only missing ingredient is then the computation of the minimum and maximum  $\sigma(f^{w}(x))$  for $x \in T$ and  $w \in J_i$. 
We do this by bounding the BNN output for any given rectangle in the  weight space $R$.
That is, we will compute upper and lower bounds $y^{L}$ and $y^{U}$ such that:
\begin{align}\label{eq:bnn_bounds}
    y^{L} \leq \min_{ \substack{ x \in T \\ w \in R }   } f^w(x) \quad y^{U} \geq \max_{ \substack{ x \in T \\ w \in R }   } f^w(x),
\end{align}
which can then be used to bound $\sigma(f^{w}(x))$ by simple propagation over the softmax (if needed).
The derivation of such bounds will be the subject of Section \ref{subsec:convexrelax}. 

Finally, observe that, whereas for decision robustness we can simply select any weight interval $J_i$, for probabilistic robustness one needs to make a distinction between safe sets ($H_i$) and unsafe sets ($K_i$). It turns out that this can be done by bounding the output of the BNN in each of these intervals.
For example, in the case of the safe sets,
%\MK{no need to repeat the def, just cite} 
by definition we have that $\forall w \in H_i, \forall x' \in T$ it follows that $f^{w}(x') \in S$. By
defining $y^{L}$ and $y^{U}$ as in Equation \eqref{eq:bnn_bounds}, we can see that it suffices to check whether $[y^{L}, y^{U}] \subseteq S$. Hence, the computation of probabilistic robustness also depends on the computation of such bounds (again, discussed in Section \ref{subsec:convexrelax}).

Therefore, once we have  shown how the computation of $P(R)$ for any weight interval and $y^{L}$ and $y^{U}$ can be done, the bounds in  Proposition \ref{Prop:SafetyComputation2} and Proposition \ref{prop:pred_posterior_bound} can be computed explicitly, and we can thus bound probabilistic and decision robustness. 
Section \ref{sec:algoirthm} will assemble these results together into an overall computational flow of our methodology.

\section{Explicit Bound Computation}\label{sec:methodology}

%\LL{This part is not very clear because you introduce the subroutines without describing the main algorithm. So, I would change the presentation. In particular, I would follow the approach we have in the conference paper where first we discuss how to check if a weight set is safe and then we introduce algorithms for both cases.} 
%\MRW{I agree that the motivation of the subroutines wasn't enough, so I tried to addressed that. The only thing that precedes the checking of weight sets here is how we sample. We could move that behind the propagation techniques, but I prefer it before. Let me know what you think of how I have updated things.}

In this section, we provide details on the specific computations needed to calculate the theoretical bound presented in Section \ref{sec:theorethical_bounds} for probabilistic and decision robustness.
We start by discussing how a weight intervals family can be generated in practice,  and how to integrate over them in Section \ref{sec:weight_intervals}.
 In Section \ref{subsec:convexrelax}, we then derive a scheme based on convex-relaxation techniques for bounding the output of BNNs.

\subsection{Integral Computation over Weight Intervals}\label{sec:weight_intervals}
Key to the computation of the bounds derived in Section \ref{sec:theorethical_bounds} is the ability to compute the integral of the posterior distribution over a combined set of weight intervals. Crucially, the shape of the weight sets $\mathcal{H} = \{H_i\}_{i=1}^{n_H}$, $\mathcal{K} = \{K_i\}_{i=1}^{n_K}$ and $\mathcal{J} = \{J_i\}_{i=1}^{n_J}$ is a parameter of the method, so that it can be chosen to simplify the integral computation depending on the particular form of the approximate posterior distribution used.
We build each weight interval as an axis-aligned hyper-rectangle of the form $R = [w^L,w^U]$ for $w^L$ and $ w^U \in \mathbb{R}^{n_w}$. 

\paragraph{Weight Intervals for Decision Robustness}
In the case of decision robustness it suffices to sample any weight interval $J_i$ to compute the bounds we derived in Proposition \ref{prop:pred_posterior_bound}. Clearly, the bound is tighter if the $\mathcal{J}$ family  is finer around the area of high probability mass for $\post$.
In order to obtain such a family we proceed as follows. 
First, we define a \textit{weight margin} $\gamma > 0$ that has the role of parameterising the radius of the weight intervals we define. We then iteratively sample weight vectors $w_i$ from $\post$, for $i=1,\ldots,n_J$, and finally define $J_i = [w^L_i,w^U_i] = [w_i -\gamma ,w_i + \gamma]$.
As such, thus defined weight intervals naturally hover around the area of greater density for $\post$, while asymptotically covering the whole support of the distribution.

\paragraph{Weight Intervals for Probabilistic Robustness}
On the other hand, for the computation of probabilistic robustness one has to make a distinction between safe weight intervals $H_i$ and unsafe ones $K_i$. As explained in Section \ref{sec:computation_sketch}, this can be done by bounding the output of the BNN in each of these intervals.
For example, in the case of the safe sets, by definition, $H_i$ is safe if and only if $\forall w \in H_i, \forall x' \in T$ we have that $f^{w}(x') \in S$. 
Thus, in order to build a family of safe (respectively unsafe) weight intervals $H_i$ ($K_i$), we proceed as follows. As for decision robustness, we iteratively sample weights $w_i$ from the posterior used to build hyper-rectangles of the form $R_i = [w_i - \gamma , w_i + \gamma]$. We then propagate the BNN through $R$ and check whether the output of the BNN in $R$ is (is not) a subset of $S$. 
The derivation of such bounds on propagation will be the subject of Section \ref{subsec:convexrelax}.

Once the family of weights is computed, there remains the computation of the cumulative distribution over such sets. The explicit computations depend on the particular form of Bayesian approximate inference that is employed.
We discuss explicitly the case of Gaussian variational approaches, and of sample-based posterior approximation (e.g., HMC), which entails the majority of the approximation methods used in practice \cite{nalisnick2018priors}.
%In the following paragraph, we discuss explicitly the case of Variational approximation. A treatment of sample-based posterior (e.g., MCMC) is given in Appendix \ref{sec:todo}.
\paragraph{Variational Inference}
For variational approximations, $\post$ takes the form of a multi-variate Gaussian distribution over the weight space. 
%So to define each weight interval $R$, we proceed by sampling $w^*$ from the posterior distribution and selecting a positive $\epsilon$ so that $w^L < w^U$ and the integral of the posterior over $R$ is non-null.
The resulting computations reduce to the integral of a multi-variate Gaussian distribution over a finite-sized axis-aligned rectangle, which can be computed using standard methods from statistics \cite{chang2011chernoff}.
In particular, under the common assumption of variational inference with a Gaussian distribution with diagonal covariance matrix \cite{khan2018fast}, i.e., $\post = \mathcal{N}(\mu,\Sigma)$, with $\Sigma = \textrm{diag}(\Sigma_1,\ldots,\Sigma_{n_w})$, we obtain the following result for the posterior integration:
\begin{align} \label{eq:vi_integral}
   P(R) =  \int_R &\post dw = \\ &\prod_{j=1}^{n_w} \frac{1}{2} \left( \text{erf}\left(   \frac{ \mu_j - w^L_i  }{ \sqrt{2 \Sigma_j } }  \right)-\text{erf}\left(  \frac{ \mu_j - w^u_i  }{ \sqrt{2 \Sigma_j } }\right) \right). \nonumber
\end{align}
By plugging this into the bounds of Equation \eqref{Eq:IntegralSafetyIneq} for $P(H_i)$ and $P(K_i)$ for probabilistic robustness and in Equations \eqref{eq:dsafel} and \eqref{eq:dsafeu} for decision robustness, one obtains a closed-form formula for the bounds given weight set interval families $\mathcal{H}$, $\mathcal{K}$ and $\mathcal{J}$.
\paragraph{Sample-based approximations}
In the case of sample-based posterior approximation (e.g., HMC), we have that $\post$ defines a distribution over a finite set of weights. 
In this case we can simplify the computations by selecting the weight margin $\gamma = 0$, so that each sampled interval will be of the form $R = [w_i,w_i]$ and its probability under the discrete posterior will trivially be:
\begin{align} \label{eq:hmc_integral}
P(R_i) = p(w_i | \mathcal{D}).
\end{align}

\subsection{Bounding Bayesian Neural Networks' Output}\label{subsec:convexrelax}
%\ap{$H$ or $\hat{H}$?}\MW{$H$, subsets of hat are by def subsets of H. H hat is an arbitrary subset of H that isnt defined. H is well defined.}

%\ap{If you want, I feel that this whole section could be put in the appendix and replaced by a brief explanation that we have methods to compute bounds. But not necessary.}
Given an input specification, $T$, and a weight interval, $R = [w^{L}, w^{U}]$, the second key step in computing probabilistic and decision robustness is the bounding of the output of the BNN over $R$ given $T$.
That is, we need to derive methods to compute $[y^{L}, y^{U}]$ such that, by construction, $\forall w \in [w^{L}, w^{U}], \forall x' \in T$ it follows that $f^{w}(x') \in [y^{L}, y^{U}]$. % An intuitve visualization of how this can be achieved for a single parameter BNN can be seen in Figure~\ref{fig:Diagram} where we show how an input interval and a weight interval can be propagated to arrive at an output interval.
%Given Assumption \ref{assumption} on the form of the safe set $S$, this is then equivalent to checking whether:
%\begin{equation}\label{eq:det_spec}
%    \min_{w \in \hat{H}, x \in T}  C_S f^{w}(x) + d_S \geq 0.
%\end{equation} 

In this section, we discuss interval bound propagation (IBP) and linear bound propagation (LBP) as methods for computing the desired output set over-approximations. Before discussing IBP and LBP in detail, we first introduce common notation for the rest of the section. 
We consider feed-forward neural networks of the form:
\begin{align}
    z^{(0)} &= x \label{eq:nn1} \\
    \zeta^{(k+1)}_i &=  \sum_{j=1}^{n_k} W^{(k)}_{ij} z^{(k)}_j + b^{(k)}_i  \quad i = 0,\ldots,n_{k+1}    \label{eq:nn2}\\
    z^{(k)}_i &= \sigma(\zeta^{(k)}_i )  \qquad  \qquad \qquad i = 0,\ldots,n_{k} \label{eq:nn3}
\end{align}
for $k=1,\ldots, K$, where $K$ is the number of hidden layers, $\sigma(\cdot)$ is a pointwise activation function, $W^{(k)} \in \mathbb{R}^{n_k \times n_{k-1}} $ and  $b^{(k)} \in \mathbb{R}^{n_k}$ are the matrix of weights and vector of biases that correspond to the $k$th layer of the network and $n_k$ is the number of neurons in the $k$th hidden layer.
Note that, while Equations \eqref{eq:nn1}--\eqref{eq:nn3} are written explicitly for fully-connected layers, convolutional layers can be accounted for by embedding them in fully-connected form \cite{zhang2018efficient}.

We write $W^{(k)}_{i:}$ for the vector comprising the elements from the $i$th row of $W^{(k)}$, and similarly $W^{(k)}_{:j}$ for that comprising the elements from the $j$th column.
 $\zeta^{(K+1)}$ represents the final output of the network (or the logit in the case of classification networks), that is, $\zeta^{(K+1)} = f^{w}(x)$.
We write $W^{(k),L}$ and $W^{(k),U}$ for the lower and upper bound induced by $R$ for $W^{(k)}$ and $b^{(k),L}$ and $b^{(k),U}$ for those of  $b^{(k)}$, for $k=0,\ldots,K$.
Observe that $z^{(0)}$, $\zeta^{(k+1)}_i$ and $z^{(k)}_i$ are all functions of the input point $x$ and of the combined vector of weights $w = [ W^{(0)}, b^{(0)} ,\ldots  , W^{(K)}, b^{(K)}  ]$.
We omit the explicit dependency for simplicity of notation.
Finally, we remark that, as both the weights and the input vary in a given set, % of value, 
Equation \eqref{eq:nn2} defines a quadratic form.

\iffalse
\begin{figure*}
    \centering
    \subfigure{\includegraphics[width=1.0\textwidth]{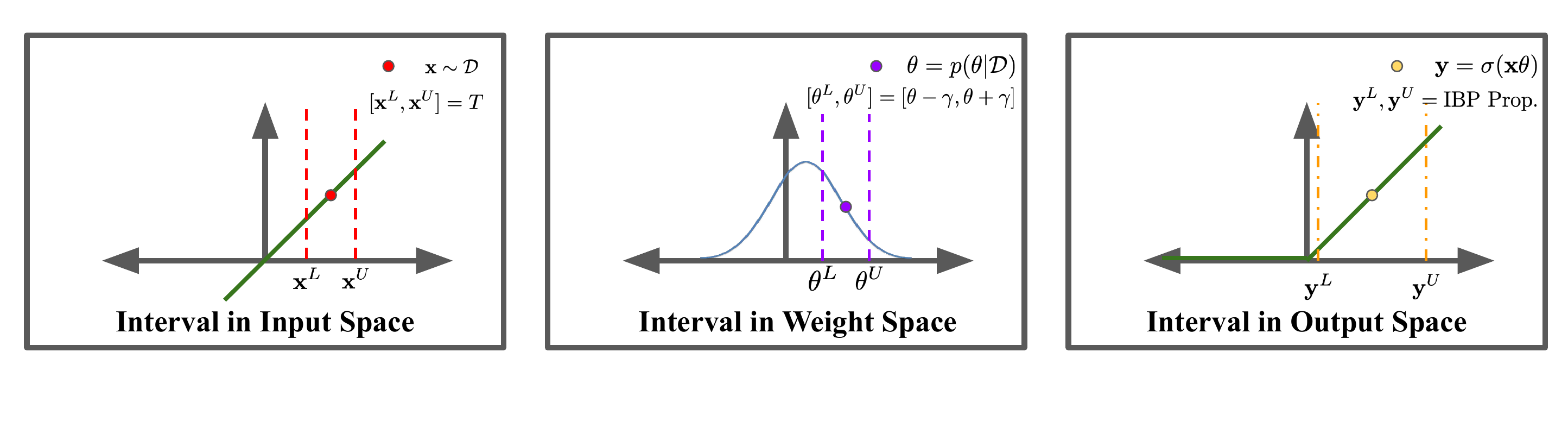}}
    \caption{A diagram explaining the interval bound propagation process for a single variable BNN. Given an interval in the input space (left image) and an interval in the weight space (center image), we use interval bound propagation to get an over approximation of the output of the BNN (right image). We can then reason about if this output region contains only safe outputs then we can rule out adversarial examples.}
    \label{fig:Diagram}
\end{figure*}
\fi

\paragraph{Interval Bound Propagation (IBP)}\label{sec:IBP}
%\MRW{Below is a more polished version of Andrea's text from the previous write-up.}
IBP has already been employed for fast certification of deterministic neural networks \cite{gowal2018effectiveness}.
For a deterministic network, the idea is to propagate the input box around $x$, i.e., $T = [x^L,x^U]$, through the first layer, so as to find values $z^{(1),L}$ and $z^{(1),U}$ such that  $z^{(1)} \in [ z^{(1),L}, z^{(1),U}]$, and then iteratively propagate the bound through each consecutive layer for $k=1,\ldots,K$. The final box constraint in the output layer can then be used to check for the specification of interest \cite{gowal2018effectiveness}.
The only adjustment needed in our setting %(that is,  for the bounding of Equation \eqref{eq:det_spec}) 
is that at each layer we also need to propagate the interval of the weight matrix $[W^{(k),L},W^{(k),U}]$ and that of the bias vector $[b^{(k),L},b^{(k),U}]$. 
This can be done by noticing that the minimum and maximum of each term of the bi-linear form of Equation \eqref{eq:nn2}, that is, of each monomial $W^{(k)}_{ij} z^{(k)}_j$, lies in one of the four corners of the interval $[W^{(k),L}_{ij},W^{(k),U}_{ij}] \times [z^{(k),L}_j,z^{(k),U}_j]$, and by adding the minimum and maximum values respectively attained by $b^{(k)}_i$.
As in the deterministic case, interval propagation through the activation function proceeds by observing that generally employed activation functions are monotonic, which permits %simply
the application of Equation \eqref{eq:nn3} to the bounding interval. Where monotonicity does not hold, we can bound any activation function that has finitely many inflection points by splitting the function into piecewise monotonic functions.
This is summarised in the following proposition.
\begin{proposition}
\label{Prop:IBP}
Let $f^{w}(x)$ be the network defined by the set of Equations \eqref{eq:nn1}--\eqref{eq:nn3}, let for $k = 0,\ldots,K$:
\begin{align*}
    t_{ij}^{(k),L} = \min &\{ W_{ij}^{(k),L} z_j^{(k),L} , W_{ij}^{(k),U} z_j^{(k),L} , \\ &W_{ij}^{(k),L} z_j^{(k),U} , W_{ij}^{(k),U} z_j^{(k),U}       \}  \\
    t_{ij}^{(k),U} = \max &\{ W_{ij}^{(k),L} z_j^{(k),L} , W_{ij}^{(k),U} z_j^{(k),L} , \\ &W_{ij}^{(k),L} z_j^{(k),U} , W_{ij}^{(k),U} z_j^{(k),U}       \} 
\end{align*}
where $i=1,\ldots,n_{k+1}$, $j=1,\ldots,n_k$, and $z^{(k),L} = \sigma(\zeta^{(k),L})$, $z^{(k),U} = \sigma(\zeta^{(k),U})$ and:
\begin{align*}
    \zeta^{(k+1),L} &=  \sum_j t_{:j}^{(k),L} + b^{(k),L} \\
    \zeta^{(k+1),U} &= \sum_j t_{:j}^{(k),U} + b^{(k),U}.
\end{align*}
Then we have that $\forall x \in T$ and $\forall w \in R$:
\begin{equation*}
    f^w(x) =  \zeta^{(K + 1)} \in \left[  \zeta^{(K+1),L} , \zeta^{(K+1),U} \right].
\end{equation*}
\end{proposition}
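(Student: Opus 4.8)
The plan is to prove the statement by induction on the layer index $k$, establishing at each layer the invariant that the computed intervals are sound over-approximations of the corresponding hidden variables uniformly over $x \in T$ and $w \in R$. Concretely, the inductive hypothesis at layer $k$ is that $z^{(k)}_j \in [z^{(k),L}_j, z^{(k),U}_j]$ for all $x \in T$, all $w \in R$, and all $j$. The base case is immediate: by Equation \eqref{eq:nn1} we have $z^{(0)} = x$, and since $T = [x^L, x^U]$ we may take $z^{(0),L} = x^L$ and $z^{(0),U} = x^U$, so the invariant holds at the input layer.

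For the inductive step, I would first handle the affine (pre-activation) map of Equation \eqref{eq:nn2}, which is the crux of the argument. Fix indices $i, j$ and observe that, for any admissible $x$ and $w$, the pair $(W^{(k)}_{ij}, z^{(k)}_j)$ lies in the rectangle $[W^{(k),L}_{ij}, W^{(k),U}_{ij}] \times [z^{(k),L}_j, z^{(k),U}_j]$ — the first inclusion because $w \in R$, the second by the inductive hypothesis. Since the monomial $(a,b) \mapsto a b$ is bilinear, it is affine in each argument separately, so its extrema over this rectangle are attained at one of the four vertices; hence $t^{(k),L}_{ij} \le W^{(k)}_{ij} z^{(k)}_j \le t^{(k),U}_{ij}$, with $t^{(k),L}_{ij}$ and $t^{(k),U}_{ij}$ exactly the four-corner minimum and maximum defined in the statement. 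Crucially, this inclusion does not require $W^{(k)}_{ij}$ and $z^{(k)}_j$ to be independent: it merely uses that their joint value is a point of the rectangle, so the dependency of $z^{(k)}_j$ on $w$ causes no difficulty and only incurs the usual interval over-approximation. Summing over $j$ and adding the bias $b^{(k)}_i \in [b^{(k),L}_i, b^{(k),U}_i]$ then yields $\zeta^{(k+1),L}_i \le \zeta^{(k+1)}_i \le \zeta^{(k+1),U}_i$, matching the definitions $\zeta^{(k+1),L} = \sum_j t^{(k),L}_{:j} + b^{(k),L}$ and $\zeta^{(k+1),U} = \sum_j t^{(k),U}_{:j} + b^{(k),U}$.

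To close the inductive step I would propagate the interval through the activation of Equation \eqref{eq:nn3}. When $\sigma$ is monotonically increasing, applying it preserves order, so $z^{(k+1)}_i = \sigma(\zeta^{(k+1)}_i) \in [\sigma(\zeta^{(k+1),L}_i), \sigma(\zeta^{(k+1),U}_i)]$, which re-establishes the invariant at layer $k+1$. For an activation that is not globally monotone but has finitely many inflection points (as assumed in the paper), I would partition its domain into the finitely many maximal intervals of monotonicity, bound the image on each such piece, and take the enclosing interval of these images; this yields a sound, if looser, bound and restores the invariant. Iterating the inductive step up to $k = K$ and recalling that $\zeta^{(K+1)} = f^w(x)$ gives the claimed enclosure $f^w(x) \in [\zeta^{(K+1),L}, \zeta^{(K+1),U}]$ for all $x \in T$ and $w \in R$.

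The main obstacle I anticipate is the rigorous justification of the bilinear bound in the presence of the shared dependence of weights and activations on $w$: one must resist the temptation to treat the interval product as requiring statistical or functional independence, and instead argue purely set-theoretically that each realized pair lies in the corner-bounded rectangle. A secondary technical point is the careful treatment of non-monotone activations, where the four-corner reasoning for the affine map must be paired with an explicit piecewise-monotone decomposition to guarantee soundness.
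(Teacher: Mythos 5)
Your proposal is correct and follows essentially the same route as the paper's proof: an iteration over layers in which each bilinear monomial $W^{(k)}_{ij} z^{(k)}_j$ is bounded by the four corners of its defining rectangle, the results are summed with the bias bounds, and the interval is pushed through the (piecewise) monotonic activation. Your explicit remark that the corner bound only needs the joint value to lie in the rectangle, not any independence of $W^{(k)}_{ij}$ and $z^{(k)}_j$, is a sound clarification of a point the paper leaves implicit.
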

The proposition above, whose proof is  Appendix~\ref{appendix:ibpproof} (Appendix C subsection C), yields a bounding box for the output of the neural network in $T$ and $R$. %\MK{check appendices, this does not exist} \MRW{Not sure if someone else corrected this but I checked and this appears to be correct now.}

%This can be used directly to find a lower bound for Equation \eqref{eq:det_spec}, and hence checking whether $\hat{H}$ is a safe set of weights.
%As noticed in \cite{gowal2018effectiveness} and discussed in the Supplementary Material \ap{Appendix, where}, for BNNs a slightly improved IBP bound can be obtained by eliding the last layer with the linear formula of the safety specification of set $S$.

\paragraph{Linear Bound Propagation (LBP)}\label{sec:LBP}
We now discuss how LBP can be used to lower-bound the BNN output over $T$ and $R$ as an alternative to IBP.
In LBP, instead of propagating bounding boxes, one finds lower and upper Linear Bounding Functions (LBFs) for each layer and then propagates them through the network.
As the bounding function has an extra degree of freedom w.r.t.\ the bounding boxes obtained through IBP, LBP usually yields tighter bounds, though at an increased computational cost.
Since in deterministic networks non-linearity comes only from the activation functions, LBFs in the deterministic case are computed by bounding the activation functions, and propagating the bounds through the affine function that defines each layer. 

Similarly, in our setting, given $T$ in the input space and $R$ for the first layer in the weight space, we start with the observation that LBFs can be obtained and propagated through commonly employed activation functions for Equation \eqref{eq:nn3}, as discussed in \cite{zhang2018efficient}.

%, lower and upper LBFs can be computed by first bounding the activation functions of each neuron and by then applying the McCormick's inequalities \cite{missing} to the bi-linear forms $w_{ij}^{(k)} z_{j}^{(k)}$ to propagate the bound to the pre-activation of the next layer, for each $k=0,\ldots,K$.

%The bound on the activation functions follows analogously as for the deterministic case. 
%In particular for networks that use ReLU, tanh, or sigmoid activation functions, that following Lemma holds.
\begin{lemma}
\label{lemmma:act_bound}
Let $f^{w}(x)$ be defined by  Equations \eqref{eq:nn1}--\eqref{eq:nn3}. For each hidden layer $k=1,\ldots,K$, consider a bounding box in the pre-activation function, i.e.\ such that $\zeta^{(k)}_i \in [ \zeta^{(k),L}_i ,\zeta^{(k),U}_i ]$ for $i=1,\ldots,n_k$. Then there exist coefficients $\alpha^{(k),L}_i$, $\beta^{(k),L}_i$, $\alpha^{(k),U}_i$ and  $\beta^{(k),U}_i$  of lower and upper LBFs on the activation function such that for all $\zeta^{(k)}_i \in  [ \zeta^{(k),L}_i ,\zeta^{(k),U}_i ]$ it holds that:
\begin{equation*}
    \alpha^{(k),L}_i \zeta^{(k)}_i + \beta^{(k),L}_i \leq \sigma(\zeta^{(k)}_i) \leq \alpha^{(k),U}_i \zeta^{(k)}_i + \beta^{(k),U}_i. 
\end{equation*}
\end{lemma}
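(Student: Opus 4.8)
Mere existence of linear bounds is immediate — the constant functions equal to the minimum and maximum of $\sigma$ on the compact interval already sandwich it — so the real content of the lemma is a construction tight enough to remain useful once propagated through subsequent layers. The plan is therefore to exhibit the four coefficients explicitly by exploiting the convexity structure of $\sigma$, which is controlled by the standing assumption that $\sigma$ has only finitely many inflection points. First I would fix an index $i$ and a layer $k$ and drop the sub/superscripts, writing simply $\sigma$ on $[\zeta^L,\zeta^U]$. I would then partition this interval at the finitely many inflection points of $\sigma$ lying inside it, obtaining finitely many closed subintervals on each of which $\sigma$ is either convex or concave. The whole argument reduces to producing, on each convexity-homogeneous piece, a pair of affine functions that sandwich $\sigma$.

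On a subinterval $[a,b]$ where $\sigma$ is convex I would take as the upper bound the secant (chord) line $\ell^U(\zeta) = \sigma(a) + \frac{\sigma(b)-\sigma(a)}{b-a}(\zeta-a)$, which dominates $\sigma$ by the definition of convexity (Jensen applied to the two endpoints), and as the lower bound any tangent line $\ell^L(\zeta) = \sigma(d) + \sigma'(d)(\zeta - d)$ at an interior point $d$, which is dominated by $\sigma$ via the first-order supporting-line characterisation of convexity. On a concave piece the two roles are simply swapped. Reading off the slope and intercept of $\ell^L$ and $\ell^U$ then yields the required $\alpha^{L},\beta^{L},\alpha^{U},\beta^{U}$ in these homogeneous cases; for the special case of ReLU this recovers the standard triangle relaxation.

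The main obstacle is the genuinely mixed case, in which $[\zeta^L,\zeta^U]$ straddles an inflection point, as happens for the sigmoid and $\tanh$ activations: there neither the pure chord nor a single tangent line is guaranteed to bound $\sigma$ over the whole interval. To handle it I would follow the construction of \cite{zhang2018efficient}: on the side where the chord fails, anchor a line at the hard endpoint, say $(\zeta^L,\sigma(\zeta^L))$, and rotate it until it becomes tangent to the curve at an interior point $d$. Existence of such a tangency point follows from an intermediate-value argument applied to the continuous map $d \mapsto \sigma'(d) - \frac{\sigma(d)-\sigma(\zeta^L)}{d-\zeta^L}$, whose root encodes exactly the tangency condition and which changes sign as $d$ ranges between the inflection point and $\zeta^U$. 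Once $d$ is located, a monotonicity argument on the residual $\sigma(\zeta) - \ell(\zeta)$ shows the anchored tangent bounds $\sigma$ over the entire interval, and its coefficients supply the remaining LBF. I expect this tangency/IVT step to be the only non-routine part; the convex and concave sub-cases are immediate, and the final claim follows by collecting the coefficients obtained in whichever case applies to $[\zeta^{(k),L}_i,\zeta^{(k),U}_i]$.
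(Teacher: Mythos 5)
Your proposal is correct, and it is worth noting that the paper itself never proves this lemma: it is stated as an observation imported from \cite{zhang2018efficient}, and the appendix proves only Propositions~\ref{Prop:IBP} and \ref{proposition:lbp} (the latter taking Lemma~\ref{lemmma:act_bound} as given). So you are not diverging from the paper's proof so much as supplying the argument the paper outsources. Your opening observation is the decisive one: since $\sigma$ is continuous on the compact interval $[\zeta^{(k),L}_i,\zeta^{(k),U}_i]$, the constants $\alpha^{L}=\alpha^{U}=0$, $\beta^{L}=\min\sigma$, $\beta^{U}=\max\sigma$ already witness the existence claim, which is all the lemma formally asserts; everything after that is about tightness, and there your chord/tangent construction on convexity-homogeneous pieces plus the anchored-tangent-via-IVT device for the straddling case is exactly the CROWN construction the paper points to. The one loose end is your final sentence: when $[\zeta^{(k),L}_i,\zeta^{(k),U}_i]$ contains \emph{several} inflection points, a single anchored tangent at one endpoint need not dominate (or be dominated by) $\sigma$ across all the pieces, so "collecting the coefficients from whichever case applies" does not yet produce one affine bound for the whole interval; you would need either to iterate the anchoring argument across consecutive inflection points, or to fall back on a coarser bound (e.g.\ the chord shifted by the maximal residual $\max_\zeta\,\lvert\sigma(\zeta)-\ell(\zeta)\rvert$, or ultimately the constant bound). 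Since the constant bound always closes that case, the lemma as stated is fully proved; the caveat only concerns how tight the certified bounds will be for exotic activations.
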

%\ap{Add brief explanation sketch and refer to the Supplementary.}
The lower and upper LBFs can thus be minimised and maximised to propagate the bounds of $\zeta^{(k)}$ in order to compute a bounding interval $[z^{(k),L},z^{(k),U}]$ for $z^{(k)} =  \sigma(\zeta^{(k)})$.
Then, LBFs for the monomials of the bi-linear form  of Equation \eqref{eq:nn2} can be derived using McCormick's inequalities \cite{mccormick1976computability}: 
\begin{align}
     W_{ij}^{(k)} z_j^{(k)} \geq W_{ij}^{(k),L} z^{(k)}_j  +  W_{ij}^{(k)} z^{(k),L}_j - W_{ij}^{(k),L} z^{(k),L}_j \label{eq:mc1} \\
     W_{ij}^{(k)} z_j^{(k)} \leq W_{ij}^{(k),U} z^{(k)}_j  +  W_{ij}^{(k)} z^{(k),L}_j - W_{ij}^{(k),U} z^{(k),L}_j \label{eq:mc2}
\end{align}
for every $i=1,\ldots,n_k$, $j=1,\ldots,n_{k-1}$ and $k=1,\ldots,K$.
The bounds of Equations \eqref{eq:mc1}--\eqref{eq:mc2} can thus be used in Equation \eqref{eq:nn2} to obtain LBFs on  the pre-activation function of the following layer, i.e.\ $\zeta^{(k+1)}$.
The final linear bound can be obtained by iterating the application of Lemma \ref{lemmma:act_bound} and Equations \eqref{eq:mc1}--\eqref{eq:mc2}  through every layer.
This is summarised in the following proposition, which is proved in Appendix~\ref{appendix:proofs} along with an explicit construction of the LBFs.
\begin{proposition}
\label{proposition:lbp}
Let $f^{w}(x)$ be the network defined by the set of Equations \eqref{eq:nn1}--\eqref{eq:nn3}. Then for every $k=0,\ldots,K$ there exists lower and upper LBFs on the pre-activation function of the form:
\begin{align*}
     &\zeta^{(k+1)}_i \geq \mu_i^{(k+1),L} \cdot x + \sum_{l = 0}^{k-1} \langle \nu_i^{(l,k+1),L} , W^{(l)}  \rangle + \\ &\nu_i^{(k,k+1),L} \cdot W^{(k)}_{i:} + \lambda_i^{(k+1),L} \quad \textrm{for} \; i=1,\ldots,n_{k+1}\\
     &\zeta^{(k+1)}_i \leq \mu_i^{(k+1),U} \cdot x + \sum_{l = 0}^{k-1} \langle \nu_i^{(l,k+1),U} , W^{(l)}  \rangle + \\ &\nu_i^{(k-1,k+1),U} \cdot W^{(k)}_{i:} + \lambda_i^{(k+1),U}  \quad \textrm{for} \; i=1,\ldots,n_{k+1}
\end{align*}
where  $\langle \cdot , \cdot  \rangle $ is the Frobenius product between matrices, $\cdot$ is the dot product between vectors, and the explicit formulas for the LBF coefficients, i.e.,  $\mu_i^{(k+1),L}$, $\nu_i^{(l,k+1),L}$, $\lambda_i^{(k+1),L}$, $\mu_i^{(k+1),U}$, $\nu_i^{(l,k+1),U}$, are given in Appendix~\ref{appendix:lbpproof}.

Now let $\zeta^{(k),L}_i$ and $\zeta^{(k),U}_i$,  respectively, be the minimum and the maximum of the right-hand side of the two equations above; then we have that $\forall x \in T$ and $\forall w \in R$:
\begin{equation*}
    f^w(x) =  \zeta^{(K + 1)} \in \left[  \zeta^{(K+1),L} , \zeta^{(K+1),U} \right].
\end{equation*}
\end{proposition}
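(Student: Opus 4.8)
The plan is to prove the statement by induction on the layer index $k$, constructing the linear bounding functions (LBFs) explicitly as we propagate forward through the network and using the two ingredients already established: Lemma~\ref{lemmma:act_bound} for bounding the activation functions, and McCormick's inequalities \eqref{eq:mc1}--\eqref{eq:mc2} for linearising the bilinear monomials $W^{(k)}_{ij} z^{(k)}_j$ that arise in \eqref{eq:nn2}. The invariant I would maintain is that, at each layer, $\zeta^{(k+1)}_i$ admits lower and upper bounds that are jointly affine in the input $x$ and in all the preceding weight matrices $W^{(0)},\ldots,W^{(k)}$ — exactly the form claimed.

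For the base case $k=0$, recall $z^{(0)}=x$, so \eqref{eq:nn2} gives $\zeta^{(1)}_i = \sum_j W^{(0)}_{ij} x_j + b^{(0)}_i$. Each monomial $W^{(0)}_{ij} x_j$ is bilinear in the weight and the input, both of which range over the boxes $R$ and $T$; applying \eqref{eq:mc1}--\eqref{eq:mc2} directly yields lower and upper expressions that are affine in $x$ and in $W^{(0)}_{i:}$, with the bias contributing a constant (or, at the concretisation stage, its interval endpoints). This establishes the claimed form with an empty Frobenius sum over $l$.

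For the inductive step, assume the LBFs hold for $\zeta^{(k)}$. First I would concretise them: since each bound is affine in $(x,W^{(0)},\ldots,W^{(k-1)})$ and $T$, $R$ are axis-aligned boxes, its minimum and maximum over $T\times R$ are attained at a vertex and obtained in closed form by selecting, coordinatewise, the endpoint matching the sign of each coefficient; this produces the pre-activation box $[\zeta^{(k),L}_i,\zeta^{(k),U}_i]$ required by Lemma~\ref{lemmma:act_bound}. That lemma then supplies affine bounds $\alpha^{(k),L}_i \zeta^{(k)}_i + \beta^{(k),L}_i \le \sigma(\zeta^{(k)}_i) \le \alpha^{(k),U}_i \zeta^{(k)}_i + \beta^{(k),U}_i$; substituting the inductive LBFs for $\zeta^{(k)}_i$ into these gives LBFs for $z^{(k)}_j$ that are again affine in $(x,W^{(0)},\ldots,W^{(k-1)})$. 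Finally, I would feed these into McCormick's bounds \eqref{eq:mc1}--\eqref{eq:mc2} applied to $W^{(k)}_{ij}z^{(k)}_j$: the term $W^{(k),L}_{ij} z^{(k)}_j$ carries the (now affine) bound on $z^{(k)}_j$, the term $W^{(k)}_{ij} z^{(k),L}_j$ is affine in the new weight $W^{(k)}_{ij}$, and the remainder is constant; summing over $j$ and adding $b^{(k)}_i$ collects everything into a single LBF for $\zeta^{(k+1)}_i$ of precisely the stated form, with the Frobenius products $\langle\nu^{(l,k+1)}_i,W^{(l)}\rangle$ accumulating the contributions of layers $l<k$ and the dot product $\nu^{(k,k+1)}_i\cdot W^{(k)}_{i:}$ carrying the newest weight row. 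Concretising the final LBFs on $\zeta^{(K+1)}$ over $T\times R$ then yields the output interval.

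The main obstacle is the sign bookkeeping in the two substitution steps. Whenever a linear lower/upper bound is plugged into another linear expression, validity requires using the lower or the upper substituted bound according to the sign of the multiplying coefficient — and for genuinely interval-valued weights these coefficients (e.g.\ $W^{(k),L}_{ij}$ and the activation slopes $\alpha^{(k)}_i$) may take either sign and vary per coordinate. Making the composition provably sound thus entails splitting each coefficient into positive and negative parts and routing them to the appropriate bound, which is exactly the delicate accounting that produces the explicit coefficient recursions for $\mu_i^{(k+1),\cdot}$, $\nu_i^{(l,k+1),\cdot}$ and $\lambda_i^{(k+1),\cdot}$ deferred to Appendix~\ref{appendix:lbpproof}. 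The correctness of the concretisation step, by contrast, is immediate from linearity over boxes.
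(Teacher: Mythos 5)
Your proposal is correct and follows essentially the same route as the paper: induction over layers, McCormick relaxation of the bilinear monomials, the activation-bound lemma for $\sigma$, and sign-dependent routing of lower/upper bounds when composing affine bounds (which the paper packages as a separate auxiliary lemma on propagating LBFs through linear maps, Lemma~\ref{lemmma:linear_prop}). The concretisation of the affine bounds at box vertices to obtain the pre-activation intervals is likewise identical to the paper's argument.
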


\begin{algorithm} \small
\caption{Lower Bounds for BNN Probabilistic Robustness}\label{alg:psafealgorithm}
\textbf{Input:} $T$ -- Input Region, $f^{\mathbf{w}}$ -- Bayesian Neural Network, $p(w | \mathcal{D})$ -- Posterior Distribution with variance $\Sigma$,  $N$ -- Number of Samples, $\gamma$ -- Weight margin.  \\
\textbf{Output:} A sound lower bound on $\psafe$.%\vspace{-1em}
\hrulealg 
\begin{algorithmic}[1]
\STATE \algcompro{\textit\# {$\mathcal{H}$ is a set of known safe weight intervals}}
\STATE $\mathcal{H} \gets \emptyset$   
\STATE \algcom{\textit\# {Elementwise product to obtain width of weight margin}}
\STATE $v \gets \gamma \cdot I \cdot \Sigma$
\FOR {$i\gets0$ to $N$} 
    \STATE $w^{(i)} \sim p(w | \mathcal{D})$
    \STATE \algcom{\textit\# {Assume weight intervals are built to be disjoint}}
    \STATE $[w^{(i), L}, w^{(i),U}] \gets [w_i - v , w_i + v]$
    \STATE \algcom{\textit{\# Interval/Linear Bound Propagation, Section~\ref{subsec:convexrelax}}}
    \STATE $\*y^{L}, \*y^{U} \gets \texttt{Propagate}(f, T, [w^{(i), L}, w^{(i), U}]$) %\ap{Matthew, is it ok if instead of method we say Propagate? Method sounds a bit too general, but maybe there is a reason for it?}\MRW{Agreed. Changed.}
    \IF{$[\*y^{L}, \*y^{U}] \subset S$}
        \STATE $\mathcal{H} \gets \mathcal{H} \bigcup \{[w^{(i), L}, w^{(i), U}]\}$
    \ENDIF
\ENDFOR
\STATE  $P_{\text{safe}}^L \gets 0.0$
\FOR {$[w^{(i), L}, w^{(i),U}] \in \mathcal{H}$} 
    \STATE \algcompro{\textit\# {Compute safe weight probs, Section \ref{sec:weight_intervals} }}
    \STATE $P_{\text{safe}}^L = P_{\text{safe}}^L+ P([w^{(i), L}, w^{(i),U}])$
\ENDFOR
\STATE return $P_{\text{safe}}^L$% \ap{Why introducing new notation for the lower bound? It should be $P_{\text{safe}}^L$ no?}
\end{algorithmic}
\end{algorithm}

\section{Complete Bounding Algorithm}\label{sec:algoirthm}
%\ap{I need to Recheck after algorithm has been updated}
%\ap{check comment on algorithm}
Using the results presented in Section \ref{sec:methodology}, it is possible to explicitly compute the bounds on probabilistic and decision robustness derived in Section \ref{sec:theorethical_bounds}. 
In this section, we bring together all the results discussed so far, and assemble complete algorithms for the computation of bounds on $\psafe$ and $\dsafe$. We discuss the procedure to lower bound $\psafe$ in Algorithm~\ref{alg:psafealgorithm}. We then discuss the details of upper bounds and bounds on $\dsafe$, leaving the algorithms and their description for these bounds to Appendix 
\ref{appendix:algorithms}.% We hence describe how this can be modified in the case of $\dsafe$ and for the upper bounds (explicit pseudocode listings for each case are given in Appendix \ref{appendix:algorithms}).

%Here we discuss the details of Algorithm~\ref{alg:generalalgorithm} where we present a procedure for lower bounding $\psafe$ and $\dsafe$. We conclude the section with a discussion of the upper bounding algorithm, and we state and further discuss the upper bounding algorithm in detail in Appendix \ref{appendix:algorithms} \ap{Missing} \MRW{Fixed}. %Further, discussion of the algorithm in this section assumes that weight intervals are constructed such that they are disjoint. 

\subsection{Lower Bounding Algorithm}

We provide a step-by-step outline for how to compute lower bounds on $\psafe$ in Algorithm \ref{alg:psafealgorithm}.
%In Algorithm~\ref{alg:generalalgorithm} we provide a step-by-step outline for how to compute lower bounds on $\psafe$.
We start on line 2 by initializing the family of safe weight sets $\mathcal{H}$ to be the empty set and by scaling the weight margin with the posterior weight scale (line 4).
We then iteratively (line 5) proceed by sampling weights from the posterior distribution (line 6), building candidate weight boxes (line 8), and propagate the input and weight box through the BNN (line 10).
We next check whether the propagated output set is inside the safe output region $S$, and if so update the family of weights $\mathcal{H}$ to include the weight box currently under consideration (lines 11 and 12).
Finally, we rely on the results in Section \ref{sec:weight_intervals} to compute the overall probabilities over all the weight sets in $\mathcal{H}$, yielding a valid lower bound for $\psafe$.
For clarity of presentation, we assume that all the weight boxes that we sample in lines 6--8 are pairwise disjoint, as this simplifies the probability computation. 
The general case with overlapping weight boxes relies on the Bonferroni bound and is given in Appendix \ref{appendix:bonferroni}.
The algorithm for the computation of the lower bound on $\dsafe$ (listed in the Appendix~\ref{appendix:algorithms} as Algorithm \ref{alg:dsafelower}) proceeds in an analogous way, but without the need to perform the check in line 11, and by adjusting line 18 to the formula from Proposition \ref{prop:pred_posterior_bound}.

\subsection{Upper Bounding Algorithm}

Upper bounding $\psafe$ and $\dsafe$ follows the same computational flow as Algorithm~\ref{alg:psafealgorithm}. The pseudocode outlines computation of probabilistic and decision robustness are listed respectively in Algorithm \ref{alg:upperbound} and \ref{alg:upperdecisionsafety} in Appendix~\ref{appendix:algorithms} subsection A and Appendix~\ref{appendix:algorithms} subsection B.
%\MK{which} \MRW{Added.}
We again proceed by sampling a rectangle around weights, propagate bounds through the NN, and compute the probabilities of weight intervals. The key change to the algorithm to allow upper bound computation involves computing the best case, rather than the worst case, for $y$ in for decision robustness (line 12 in Algorithm~\ref{alg:upperbound}) and ensuring that the entire interval $[y^{L}, y^{U}] \notin S$ (line 18) for probabilistic robustness. In Appendix \ref{appendix:algorithms} subsection B we also discuss how adversarial attacks can be leveraged to improve the upper bounds.

%\subsection{Sources of Approximation in Computations}
%\MRW{A new section systematically detailing the sources of approximation. }

\subsection{Computational Complexity}\label{sec:time_complexity}

%\MK{Revise to abbreviate the discussion of simple aspects}
Calculations for probabilistic robustness and decision robustness follow the same computational flow and include: bounding  of the neural network output, sampling from the posterior distribution, and computation of integrals over boxes on the input and weight space.

Regarding Algorithm \ref{alg:psafealgorithm} (or equivalently  Algorithm \ref{alg:dsafelower} for decision robustness), it is clear that the computational complexity scales linearly with the number of samples, $N$, taken from the posterior distribution.
Observe that, in order to obtain a tight bound on the integral computation, $N$ needs to be large enough such that $N$ samples of the posterior with width $\gamma$ span an area of high probability mass for $\post$. 
%\MK{This is very informal, need to make more rigorous}
Unfortunately, this means that, for a given approximation error magnitude, $N$ needs to scale quadratically on the number of hidden neurons. %on consecutive layers,  whose dimension is generally much larger than both the input and output dimensions and the number of neurons. 
%In fact, for a given number of neurons, we would expect the bound to perform better on wide rather than deep architectures (as confirmed by the experimental results discussed in Section \ref{sec:mnist_results}). 
%Note also that the tightness of the bound is related to the variance of the posterior, as the same number of weight intervals, $N$, can provide a better covering for $\post$ in case of smaller variances, and hence a smaller effective support.
%
Given the sampling of the hyper-rectangles, computation of the integral over the weight boxes is done through Equations \eqref{eq:vi_integral} and \eqref{eq:hmc_integral}. The integration over the weight boxes is done in constant time for HMC (though a good quality HMC posterior approximation scales with the number of parameters) and $\mathcal{O}(n_w)$ for VI.
The final step needed for the methodology is that of bound propagation, which clearly differs when using IBP or LBP. 
In particular, the cost of performing IBP is $\mathcal{O}(K\hat{n}\hat{m})$, where $K$ is the number of hidden layers and $\hat{n} \times \hat{m}$ is the size of the largest weight matrix $W^{(k)}$, for $k=0,\ldots,K$. 
LBP is, on the other hand,  $\mathcal{O}(K^2\hat{n}\hat{m})$.
Overall, the time complexity for certifying a VI BNN with IBP is therefore $\mathcal{O}(Nn_wK\hat{n}\hat{m})$, and similar formulas can be obtained for alternative combinations of  inference and propagation techniques that are employed.
We remark that, while sound, the bounds we compute are not guaranteed to converge to the true values of $\psafe$ and $\dsafe$ in the limit of the number of sample $N$ because of the introduction of over-approximation errors due to bound propagation. 

\section{Experiments}\label{sec:experiments}

\begin{figure}
    \centering
    \includegraphics[width=0.5\textwidth]{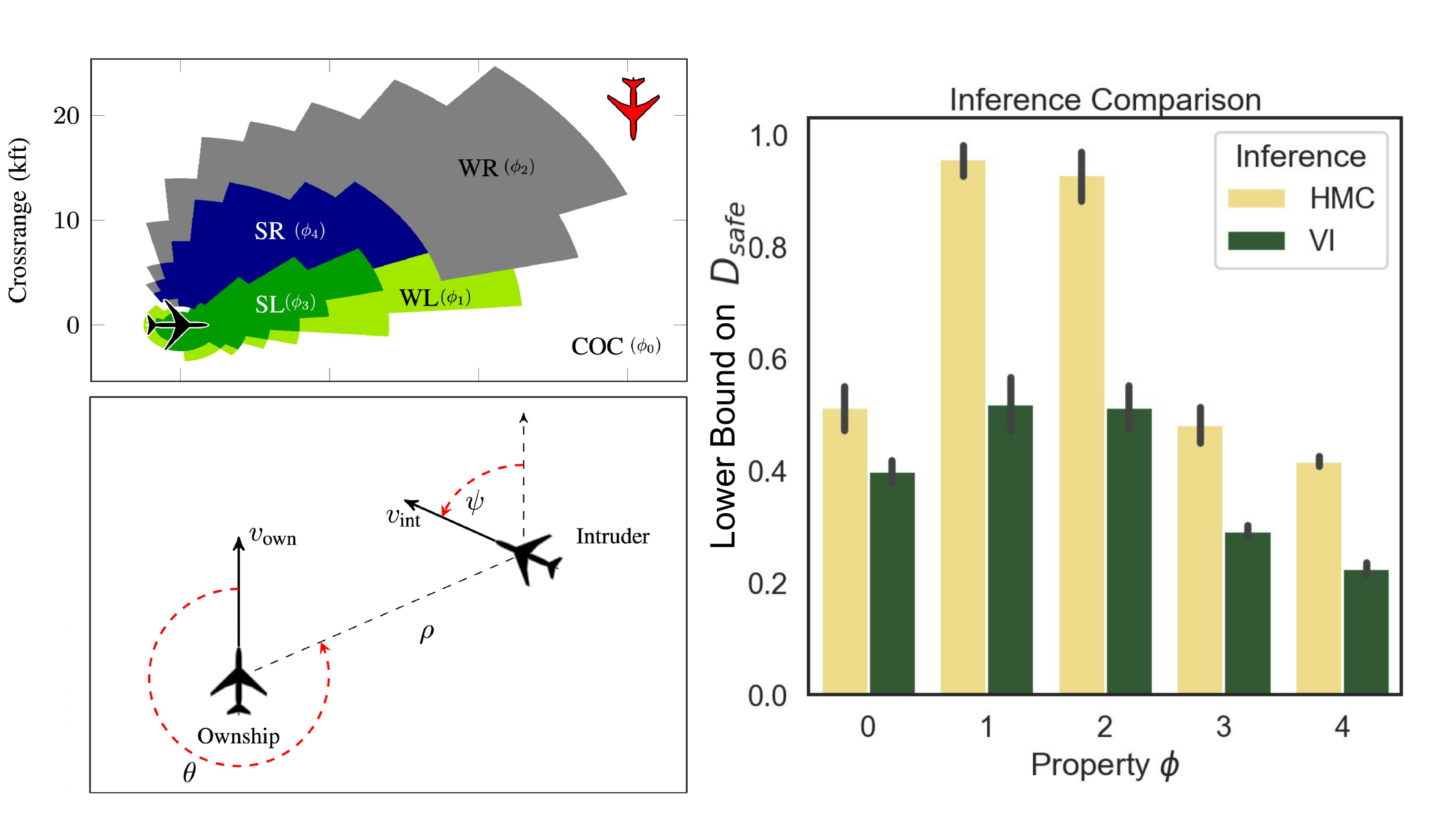}
    \caption{\textbf{Top Left:} Encounter geometry, ground truth and property labels: Clear of Conflict (COC), Strong Left/Right (SL/R), Weak Left/Right (WL/R), for a collision scenario. Diagrams modified from \cite{julian2019guaranteeing}. \textbf{Bottom Left:} Encounter geometry labelled with features used for collision avoidance prediction. \textbf{Right:} Bounds on decision robustness obtained for HMC and VI trained BNNs for each property.}
    \label{fig:HCAS_A}
\end{figure}

%\MK{The experimental section needs to be substantially pruned and reduced in size - at times too much space is devoted to simple aspects (such as detailed description of linear search), but the more difficult ones are not tackled/explained (such as bound tightness, effect of over-approximation, or coverage of the density function} 
%\MK{Concrete values for parameters are given, which implies that this was ran once and not for a range of values, making trends difficult to evaluate (they could be just random), and broader conclusions difficult to justify}
%\MK{Concepts such as 'Verified Uncertain' belong to the methodology and to derive such as answers needs to be explained in the abstract and applied}
%\MK{Also various typos}

In this section, we empirically investigate the suitability of our certification framework for the analysis of probabilistic and decision robustness in BNNs.
We focus our analysis on four different case studies.
First, we provide a comprehensive evaluation of an airborne collision avoidance system \cite{julian2019guaranteeing}. To do so, we partition the entire input space into 1.8 million different specifications and bound $\psafe$ and $\dsafe$ by computing the bounds for each specification.
%R \LL{This is not really true, because we compute bounds for each partition, but then we do not use union bounds or similar. I would tone it down a bit}\MRW{What I mention here is a fact: we partition the entire input domain. You are talking about a different kind of analysis (a global worst-case) that I don't mention anywhere. Also this is the opening of the experimental section so it is inappropriate to go into details here. If you think it is still confusing where we give the details please point that out.}
%, illustrating how our methods can be employed for systematic analysis of safety-critical applications, and further comparing the robustness of HMC- and VI-trained models. %\ap{You mean like that you have partitioned the input space in several $T$s and verified all of them? At the moment is not very clear what you mean} in order to comprehensively understand the safety of the learned avoidance system. We also use this case study to observe the tightness of our bounds as well as to demonstrate that we can verify both HMC and VI posteriors. 
%
We then turn our attention to an industrial regression benchmark \cite{ucidata} and demonstrate how our analysis can provide tight characterization of the worst-case error of predictions in adversarial settings in relation to the magnitude of the maximum attack allowed. %In particular, we study the threshold of adversarial noise that is tolerable in these scenarios which can be used by practitioners to ensure compliance with industrial standards.
Next, we analyse the scalability of our method in the well-known MNIST dataset for handwritten digits recognition \cite{lecun1998mnist}, along with its behaviour on out-of-distribution input samples.
Finally, we  we study a two-class subset of the German Traffic Sign Recognition Benchmark (GTSRB) dataset \cite{stallkamp2012gtsrb}, whose input space is 1500 dimensions larger 
%\ap{Gotta put a reference here} \MRW{Done.} 
than what has previously been studied for BNN certification against adversarial examples, showcasing that we are still able to compute non-trivial guarantees in this setting. For each dataset, we first describe the problem setting and BNN used to solve it, along with its hyperparameters. We then discuss the properties of interest for each dataset. Finally, we provide discussion and illustration of our bounds performance.
All the experiments have been run on 4 NVIDIA 2080Ti GPUs in conjunction with 4 24-core Intel Core Xeon 6230.

\subsection{Airborne Collision Avoidance}

Our first case study is the Horizontal airborne Collision Avoidance System (HCAS) \cite{julian2019guaranteeing}, a dataset composed of millions of labelled examples of intruder scenarios. %\MK{There is a lot of detail, which should probably be gathered in a table (of advisories) to make sure coverage is comprehensive. Same for parameters and their selection (justify why), and also thresholds}

%\begin{itemize}
    %\item On what was fig 4: \ap{We have a notation for the upper and lower bounds, please use that one rather than english words. It is confusing otherwise, especially because we have two different properties we are analysing in this paper. In fact in each Figure in the experiments you write Upper and Lower but each time with a very different meaning, it is a bit sloppy like this.}
    %\item \LL{See comment in the previous subsection, let's be sure we do not over-claim}
    %\item \ap{what is the width of each region?}
    %\item  \ap{A couple of sentences earlier you say 35. Which one is correct?}
%    \item  \ap{PUT NUMBER OF TOTAL BOUNDING RUNS NEEDED HERE.}
    %\item \ap{Put average computational time: this seems a massive experiments, the fact we can do it, plays in our advantage, definately worth writing some computational details down!}
%\end{itemize}

\subsubsection{Problem Setting}

The task of the BNN is to predict a turn advisory for an aircraft given another oncoming aircraft, including clear of conflict (COC), weak left (WL), weak right (WR), strong left (SL), and strong right (SR). These are depicted in the top left of Figure~\ref{fig:HCAS_A}. % clear of conflict (COC) advising no action, weak left (WL) advising a 2\textdegree/s left turn, weak right (WR) advising a 2\textdegree/s right turn, strong left (SL) advising a 4\textdegree/s left turn, and strong right (SR) advising a 4\textdegree/s right turn (depicted in the top left of Figure~\ref{fig:HCAS_A}) – given the geometric layout of the ownship and of the intruder (summarised in the bottom left of Figure~\ref{fig:HCAS_A}).
We follow the learning procedure described in %the original 
\cite{julian2019guaranteeing}, where encounter scenarios are partitioned into 40 distinct datasets. %where each dataset has a fixed time-to-collision and previous advisory. 
We then  learn a BNN to predict the correct advisories for each dataset, resulting in 40 different BNNs which need to be analysed. 

To analyze the system of 40 BNNs, we first discretize the entire state-space into 1.8 million mutually exclusive input specifications. The input specifications are sized according to the spacing of the ground truth labels supplied by \cite{julian2019guaranteeing}. Namely, we consider an $\ell_\infty$ norm ball with different widths for each input dimension. Those widths are $[0.016, 0.025, 0.025, 0.05]$. 
%We perform a complete certification
%for the HCAS system by first partitioning the entire input space in 1.8 mutually exclusive input regions %R(\ap{what is the width of each region?}),
%and bounding the behaviour of each one of the 40 BNNs %R \ap{A couple of sentences earlier you say 35. Which one is correct?}\MRW{40. As in the table. This is corrected.}on each one of the resulting regions.
%
The output specification is taken to be the set of all softmax vectors such that the argmax of the softmax corresponds to the true label. 
We separate these output specifications into 5 different properties, which we termed $\phi_j$ for $j=0,\ldots,4$ corresponding to each of the possible advisories. For all properties in this section we use LBP with 5 samples with a weight margin of 2.5 standard deviations. %R \ap{PUT NUMBER OF TOTAL BOUNDING RUNS NEEDED HERE.}\MRW{"Bounding runs" What does that mean? Forward passes?}\ap{Sorry, meant like how many verifications did you run. IT seems like a massive number, could be good to write it down explicitly}, \ap{Put average computational time: this seems a massive experiments, the fact we can do it, plays in our advantage, definately worth writing some computational details down!}\MRW{In progress. In general, though this kind of analysis is pointless. We have 8 GPUs and 96 CPU cores and that is the only thing that determines the computational time for this experiment. If you try to run it on your phone it will take years, laptop will take months, our server will take days, DeepMind servers will take hours. In the absence of standardization, computational time only measures how deep your pockets are. I have added it, but in general, as a reviewer, I always skip computational times.}

We train BNNs with Variational Online Gauss Newton (VOGN), where the posterior approximation is a diagonal covariance Gaussian, and wih Hamiltonian Monte Carlo (HMC). The BNN architecture has a single hidden layer with 125 hidden units, the same size as the original system proposed in \cite{julian2019guaranteeing}. %The output is a softmax layer with 5 dimensions each corresponding to one of the output advisories for the HCAS system.
We use a diagonal covariance Gaussian prior with variance 0.5 for VOGN and a prior variance of 2.5 for HMC.% We also use the sparse categorical crossentropy likelihood. 

\begin{table*}[h]\caption{\label{tab:HCASTable} Certification of airborne collision avoidance over a complete partition of the input space. Each state is either certified safe, unsafe, or not certifiable with the chosen thresholds. Number of states and proportions are reported along with the number of BNNs involved in the system for each property. %\MK{State what columns mean}.
}
\centering
\small
\begin{tabular}{|l|c|c|c|c|c|}
\hline
                             & \textbf{Total Inputs} & \textbf{\# Certified Safe ($\psafe > 0.98$)} & \textbf{\# Certified Unsafe ($\psafe < 0.05$)} & \textbf{\# Uncertifiable} & \textbf{\# BNNs} \\ \hline \hline
\textbf{$\phi_0$}            & 795,853               & 620,158 (77.9\%)                             & 168,431 (21.1\%)                               & 7,313 (0.9\%)          & 35                           \\ \hline
\textbf{$\phi_1$}            & 324,443               & 257,453 (79.3\%)                             & 34,379 (10.5\%)                                & 32,639 (10.0\%)         & 21                           \\ \hline
\textbf{$\phi_2$}            & 323,175               & 257,724 (79.7\%)                             & 36,839 (11.3\%)                                & 28,612 (8.8\%)         & 21                           \\ \hline
\textbf{$\phi_3$}            & 178,853               & 101,346 (53.4\%)                             & 64,618 (34.0\%)                                & 23,799 (12.5\%)         & 31                           \\ \hline
\textbf{$\phi_4$}            & 189,991               & 104,546 (55.0\%)                             & 70,310 (37.0\%)                                & 15,135 (7.9\%)         & 31                           \\ \hline 
\textbf{Total:} & 1,812,315             & 1,341,227 (74.0\%)                           & 374,577 (20.6\%)                               & 107,498 (5.9\%)        & 40                           \\ \hline
\end{tabular}
\end{table*}

\begin{figure}
    \centering
    \includegraphics[width=0.4\textwidth]{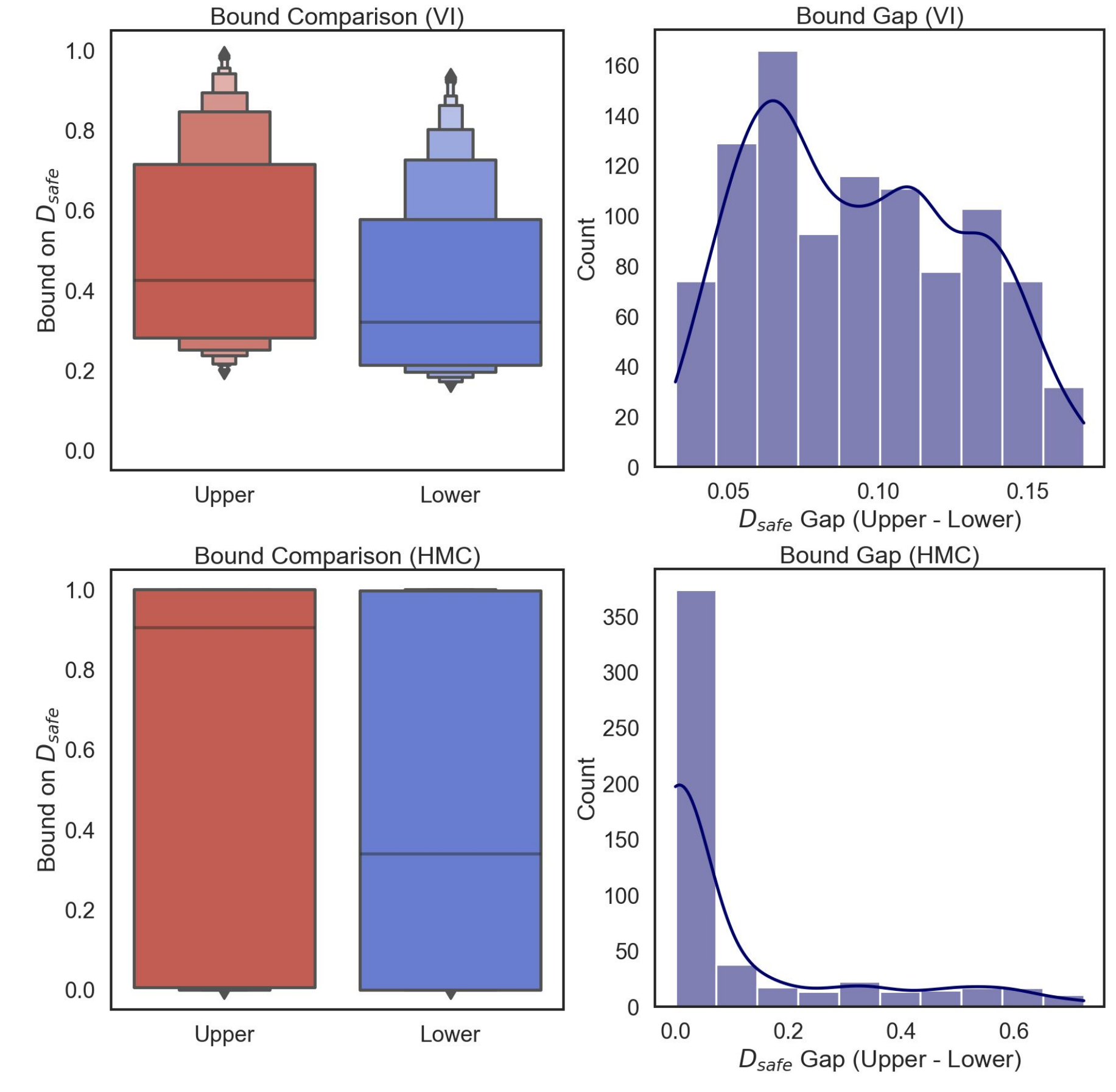}
    \caption{\textbf{Left:} Box plots showing the distribution of upper and lower bounds for VI (top) and HMC (bottom). \textbf{Right:} Histograms showing gap between upper and lower bounds for VI (top) and HMC (bottom). %We highlight that HMC has bounds on $\dsafe$ that are more evenly distributed and that (VI) has more concentrated bounds leading to a tighter bound on average, mean gap of 0.08 for VI mean gap of 0.11 for HMC. 
    }
    \label{fig:HCAS_B}
\end{figure}

\subsubsection{Analysis with $\psafe$ Certification}\label{subsec:prob_hcas} 

%In the case of HCAS, we use $\tau_{\text{safe}} = 0.98$ to determine if an input-output specification is safe. Thus, we declare a specification to be safe if and only if our lower bound guarantees that $\psafe > 0.98$. In practice, this translates to saying that we only consider an input specification to be safe if more than 98\% of the BNN posterior is adversarially robust and correct on a given input. In a less safety-critical scenario, however, one could select $\tau_{\text{safe}} = 0.8$ and be reasonably sure of safe behavior. %In addition, practitioners must set $\tau_{\text{unsafe}}$ which sets the probability threshold after which we determine the BNN is unsafe. For example, setting $\tau_{\text{unsafe}} = 0.05$ means that if less than 5\% of the posterior is safe, then we certify that the BNN is unsafe. This can be computed with our upper bound. A special case is when $\tau_{\text{safe}} = \tau_{\text{unsafe}}$. In this setting, every input is either safe or unsafe and this can be certified with our bounds. \ap{I am not sure I get the relevance of the above discussion. It could mostly be cut out, imho.} %We provide further discussion for when $\tau_{\text{safe}} \neq \tau_{\text{unsafe}}$ in the context of our results below.
%

For each of the 1.8 million disjoint input specification, we compute both upper and lower bounds on $\psafe$. Given that probabilistic robustness is a real-valued probability and not a binary predicate, practitioners must select thresholds that reflect a strong belief that a value is safe or unsafe. We call these thresholds $\tau_{\text{safe}}$ and $\tau_{\text{unsafe}}$. Once one has computed bounds on $\psafe$, the proportions of safe and unsafe states (as reported in Table~\ref{tab:HCASTable}) can be computed by checking thresholds. We check our bounds against strict safety and unsafety thresholds $\tau_{\text{safe}} = 0.98, \tau_{\text{unsafe}} = 0.05$.% and provide more discussion on this in the Appendix ???. \ap{There are some questions mark here}

For the selected threshold values, %we present
Table~\ref{tab:HCASTable} %which represents a 
reports the certified performance of the BNN system. Such a report can be used by regulators and practitioners to determine the if the system is safe for deployment. In this case, we find that across all properties 74\% of the states are certified to be safe while 20\% are certified to be unsafe. The remaining 6\% fall somewhere in between the two safety thresholds. These statistics indicate that roughly 18\% of the decisions issued by the system were correct but not robust, thus the systems accuracy of 92\% does not paint the complete story of its performance. Moreover, we break down each of the properties of the system, represented by each row of Table~\ref{tab:HCASTable}, to understand where the most common failure modes occur. We find that the most unsafe indicators are the strong left, $\phi_3$, and for strong right, $\phi_4$, the system has the lowest certified safety at 53.4\% and 55.0\% respectively. They also have the highest certified unsafety at 34.0\% and 37.0\%, respectively. We conjecture that these these specifications are less safe due to the fact that their is less labeled data representing them in the dataset. Less data has been shown to be correlated with less robustness for BNNs \cite{bekasov2018bayesian}. If the results in Table~\ref{tab:HCASTable} are deemed to be insufficient for deployment, then practitioners can collect more data for unsafe properties e.g.,  $\phi_3$ and  $\phi_4$, or could resort to certified safe training for BNNs as suggested in \cite{wicker2021bayesian}.

\subsubsection{Analysis with $\dsafe$ Certification}

In order to analyze the decision robustness of the BNNs, we again discretize the input space. For these results, we use a coarser discretization, with the input specification being an $\ell_\infty$ ball radius of $0.125$ over each input dimension, and for the sake of computational efficiency %our previous analysis 
we allow some gaps between the input specifications. %\ap{What? Why? I don't understand again, sorry.}
For each specification, we compute upper and lower bounds on $\dsafe$. We plot the result of our bounds on decision robustness in Figures~\ref{fig:HCAS_A} and \ref{fig:HCAS_B}. In the right hand portion of Figure~\ref{fig:HCAS_A} we visualize the average lower bound on decision robustness for two BNNs, one trained with HMC (yellow) and the other trained with VI (green). We find that we are able to certify a higher lower bound, indicating heightened robustness, for the HMC-trained BNN. This corroborates previous robustness studies that highlight that HMC is more adversarially robust \cite{bekasov2018bayesian, carbone2020robustness}. In Figure~\ref{fig:HCAS_B} we analyze the tightness of our bounds in this scenario by comparing the lower and upper bounds provided by our method. For VI, the gap, plotted in purple in the upper right, is tightly centered around a mean of $0.08$, with maximum gap observed in these experiments being approximately $0.16$ and a minimum $0.035$. For HMC, on the other hand, the mean gap is $0.11$, which is higher than VI, but this mean is affected by a small proportion of inputs that have a very high gap between upper and lower bounds, with the highest gap being $0.72$. We further highlight the higher variance bound distribution for the HMC-trained BNN (plotted as blue and red box plots). We hypothesize that this arises due to the higher uncertainty predictive of the HMC in areas of little data \cite{neal2012bayesian}

\subsubsection{Computational Requirements} %\ap{I would first give the scale of the computation requireds (1.8 millions verification is impressive) and then say how much it took. I don't think it is necessary to give the serial, since you have specified the CPU you are using, that is, imo, enough to give the parallel time. I don't fully get the times given. You say you have 4 CPUs of 24 cores each, but the results make it sounds like you have 4 CPUs of 1 core each.}  
For VI certification, we can compute upper and lower bounds in an average of 0.544 seconds. 
Thus, when run in serial mode, the 3.6 million probabilistic bound computations needed for Table~\ref{tab:HCASTable} takes an estimated 11.347 computational days. However, our parallelized certification %\MK{has it been parallelised?}\MRW{Yes.} 
procedure produces Table~\ref{tab:HCASTable} in under 3 days of computational time (61 hours). These computations include the 1.8 million lower bound runs and 1.8 million upper bound runs. For HMC, on the other hand, certification can be done in a fraction of the time, with bounds being computed in 0.073 seconds. This is due to the fact that weight intervals for HMC necessarily satisfy the pairwise disjoint precondition of Proposition~\ref{Prop:SafetyComputation2}, % \ap{What are you referring to in here is not clear! You mean proposition? Refer properly through the command, not with the number}
thus no Bonferroni correction is needed. 
%We highlight a large discrepency in the certification times for VI and HMC posteriors due to the computational cost of Bonferroni bounds needed in the case of VI posteriors, see Appendix~\ref{appendix:bonferroni}.

%\subsubsection{Limitations}

%In the lower row Figure~\ref{fig:HCASAnalysis} we analyze the decision robustness of the HCAS system. In order to do so, we take a coarser descritization of the input space resulting in thousands of input regions with width $\epsilon = 0.125$. In the bottom left, we compare our bounds on a BNN inferred with HMC to bounds on a BNN inferred with VI. We find that the HMC BNNs not only perform better but have more robust behavior. This is inline with the other empirical observations on the robustness of BNNs \cite{bekasov2018bayesian, carbone2020robustness}. Further research is needed to understand if HMC inferred BNNs are inherently more certifiable than VI inferred BNNs. 
%In the center and right plots of Figure~\ref{fig:HCASAnalysis} (bottom row), we plot the tightness of our bounds on the VI inferred BNNs. We find that the gap between upper and lower bounds on $\mathbb{E}_{\post} \sigma(f^{w}(x))$ is usually less than 0.1, indicating that our bounds on the expectation are tight. \ap{For Matthew: Let's do this in a systematic way, we need also the bound gap on HMC, otherwise it looks sloppy.}

\begin{figure*}[h]
    \centering
    \includegraphics[width=0.725\textwidth]{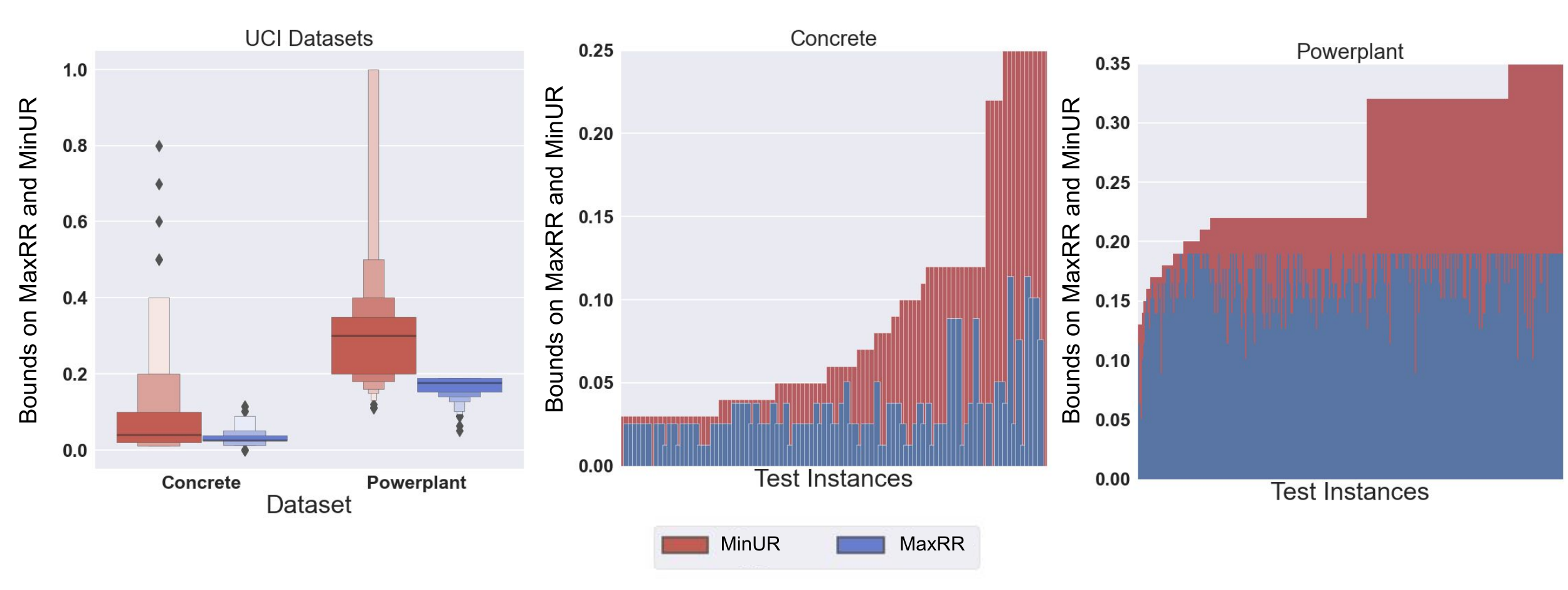}
    \caption{ 
    Computation of the minimum (MinUR) and maximum (MaxRR) safe radius for \textit{Concrete} and \textit{Powerplant} datasets. \textbf{Left:} Boxplots for the empirical distribution of MinUR and MaxRR computed over all test inputs. \textbf{Centre:} Per-test-instance certified radii for the \textit{Concrete} dataset. \textbf{Right:} Per-test-instance certified radii for the \textit{Powerplant} dataset.}
    \label{fig:UCIAnalysis}
\end{figure*}

\subsection{Industrial Benchmarks}\label{subsec:UCIBench}
We now focus our analyses on two safety-critical industrial regression problems taken from the UCI database \cite{ucidata}, and widely employed for benchmarking of Bayesian inference methods \cite{hernandez2015probabilistic, gal2016dropout, khan2018fast}. 

%\begin{itemize}
    %\item \LL{Complete definition.} for max and min RR
    %\item \ap{NUMBER OF VERIFICATIONS PERFORMED}
    %\item \ap{AVERAGE VERIFICATION TIME}
    %\item \ap{Do we have an idea why that is the case? Is powerplant more difficult / larger? Do we obtain similar accuracies in both case or not? Maybe there is some more sensitive features in the second case?}
    %\item On the figure for this section: \ap{Upper and Lower in the legend should be MaxRR and MinRR} \ap{linfinity norm in the axis label looks quite ugly, I'd remove that - and just keep the explanation in the main text. You could just use $\epsilon$ for the y-axis abel}
%\end{itemize}

\subsubsection{Problem Setting}

The \textit{Concrete} dataset involves predicting the compressive strength of concrete, based on 8 key factors including its ingredients and age. 
The \textit{Powerplant} dataset uses six years worth of observations from combined cycle power plants and poses the problem of predicting energy output from a plant given a range of environmental and plant parameters. For each dataset we learn a BNN by using the architecture (i.e., a single hidden layer with 100 hidden units) and inference settings proposed in \cite{khan2018fast}. The BNNs are inferred using VOGN with a diagnonal covariance prior over the weights with variance 0.5 for the \textit{Concrete} dataset and 0.25 for the \textit{Powerplant} dataset. We use a Gaussian likelihood corresponding to a mean squarred error loss function. In this setting we use IBP with 10 samples and a weight margin of 2 standard deviations.

\subsubsection{Analysis with $\psafe$ Certification}

%\ap{I think, rather than stating that this can be "very useful", you should try to pull a citation out of somewhere that states something on this line. Otherwise it sounds like a bachelor thesis :P .}
In industrial applications it is useful to understand the maximum amount of adversarial noise that a learned system can tolerate, as failures can be costly and unsafe \cite{shakiba2022robustness}. To this end, we introduce the maximum and minimum robust radius. Given a threshold $\tau_{\text{safe}}$ (as before) the maximum robust radius (MaxRR) is the largest $\ell_\infty$ radius for which we can certify the BNN satisfies $\psafe > \tau_{\text{safe}}$. Similarly, the minimum unrobust radius (MinUR) is the smallest radius such that we can certify  $\psafe < \tau_{\text{unsafe}}$. The MaxRR gives us a safe lower bound on the amount of adverarial noise a BNN is robust against, whereas the MinUR gives us a corresponding upper bound.

%We aim at computing the maximum (MaxRR) and minimum (MinRR) safe radius for $\psafe$ and on every test instance included in the two datasets, defined respectively as the minimum and minimum radius of $T$ for which  $\psafe$ \LL{Complete definition.}
%As explained in Section \ref{par:metrics} this involves the bounding of $\psafe$ at different values $\epsilon >0$ of the width of the input specification $T$.
%Notice that this is an iteration over each individual checks as those we have performed in Section \ref{subsec:prob_hcas} for the airborne collision dataset.
In our experiments on these datasets we considered $\tau_{\text{safe}} = \tau_{\text{unsafe}} = 0.7$, meaning that we request that over $70\%$ of the BNN probability mass is certifiably safe; however, we stress that similar results can be obtained for different values of $\tau_{\text{safe}}$  similarly to what is discussed in our previous analysis of the HCAS dataset. In order to compute the MaxRR we start with $\epsilon = 0$, check that $\psafe > \tau_{\text{safe}}$ using our lower bound, and if the inequality is satisfied we increase epsilon by 0.01 and  continue this process until the inequality no longer holds. Similarly for the MinUR, we start with $\epsilon = 0.5$ %\ap{From the figure it seems you have $MinUR > 0.5$. How is that possible?} 
and iteratively decrease the value of $\epsilon$ until the upper bound no longer certifies that $\psafe < \tau_{\text{unsafe}}$; if the property does not hold at $0.5$ one can increase the value of $\epsilon$ until the bound holds.

The result of computing the MaxRR and MinUR over the test datasets for the \textit{Concrete} and \textit{Powerplant} datasets are plotted in Figure~\ref{fig:UCIAnalysis}.
We highlight that in the overwhelming majority of the cases our methods is able to return non-vacuous bounds on MinUR (i.e., strictly less than $1$) and on MaxRR (i.e., strictly greater than $0$).
As expected we observe the MaxRR is strictly smaller than MinUR. Encouragingly, as MinUR grows, MaxRR tends to increase indicating that our bounds track the true value of $\psafe$.
We see that the \textit{Concrete} dataset is typically guaranteed to be robust for radius $\epsilon \approx 0.03$ and is typically guaranteed to be unsafe for $\epsilon \approx 0.06$. For the \textit{Powerplant} posterior we compute a MaxRR of roughly $0.18$ for most inputs and a MinUR lower than $0.32$. 
Notice how the results for the \textit{Concrete} datasets systematically display more robustness than those for \textit{Powerplant} and the gap between MaxRR and MinUR is significantly smaller in the former datasets than in the latter. 

\subsubsection{Computational Requirements}
On average, it takes 1.484 seconds to compute a certified upper or lower bound on $\psafe$ for the \textit{Powerplant} dataset and 1.718 seconds for the \textit{Concrete} dataset. We use a linear search in order to compute the MaxRR and MinUR which require, on average, 5 certifications for both MaxRR and MinUR computations. We compute these values over the entire test datasets for both \textit{Powerplant} and \textit{Concrete}, which requires tens of thousands of certifications and each input can be done in parallel.

\begin{figure*}
    \centering
    \subfigure{\includegraphics[width=0.36\textwidth]{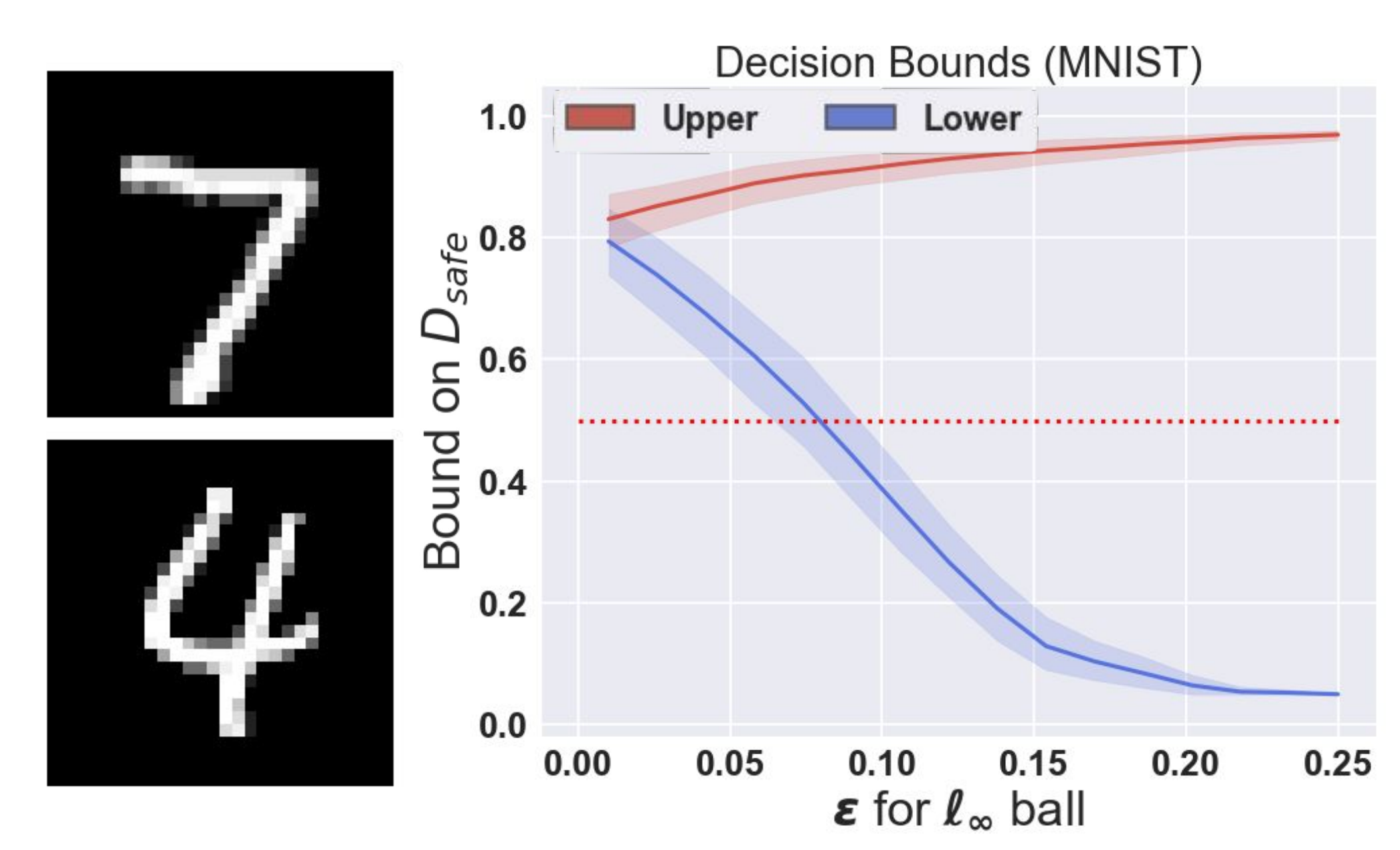}} 
    \subfigure{\includegraphics[width=0.36\textwidth]{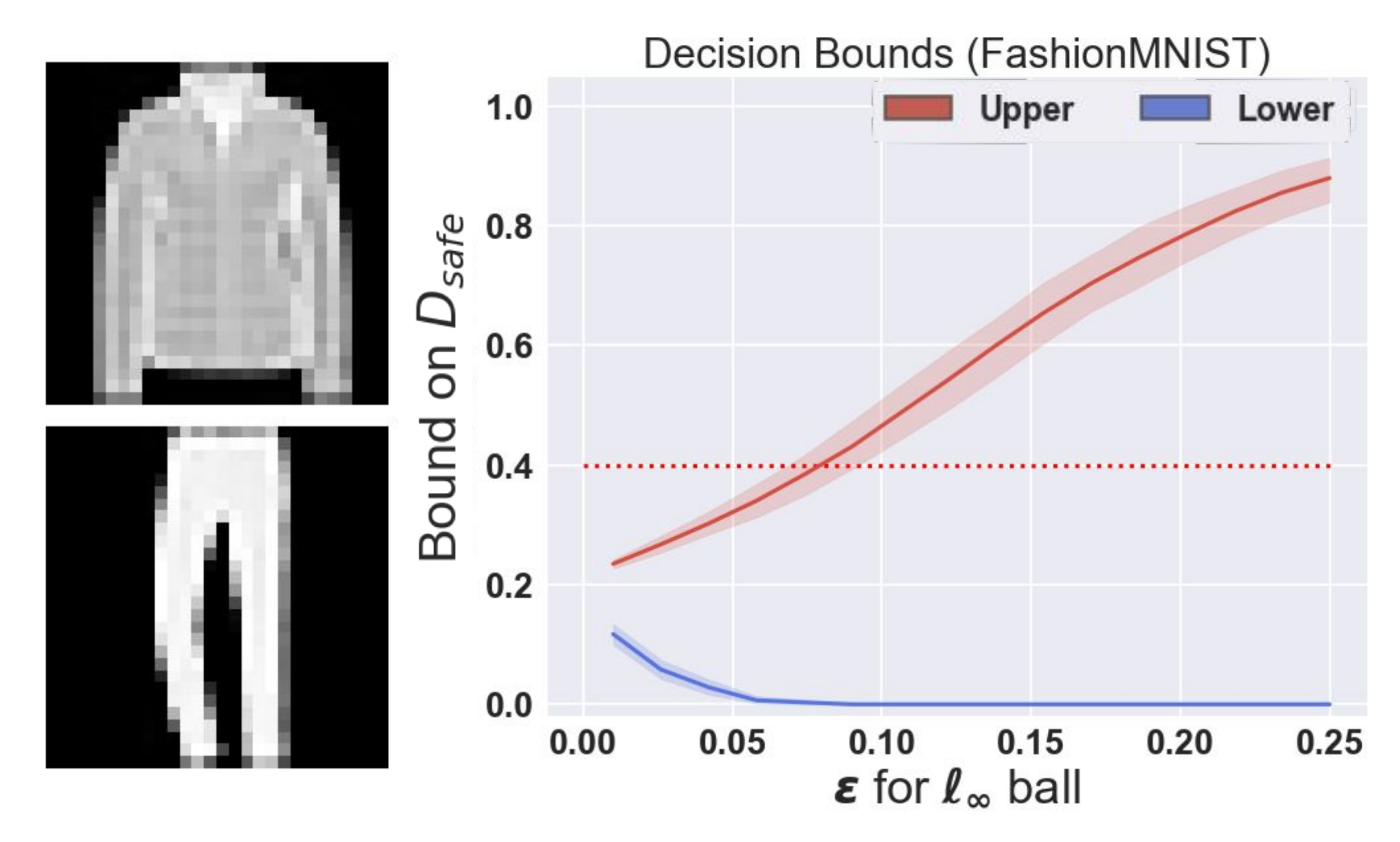}}
    \caption{\textbf{Left:} Mean and standard deviation of upper and lower bounds obtained on $D_{\text{safe}}$ on 1000 images taken from the MNIST test set. \textbf{Right:} Mean and standard deviation  on upper and lower bound on decision robustness on out-of-distribution samples taken from the FashionMNIST dataset.}
    \label{fig:MNISTAnalysis}
\end{figure*}

\begin{figure}
    \centering
    \subfigure{\includegraphics[width=0.425\textwidth]{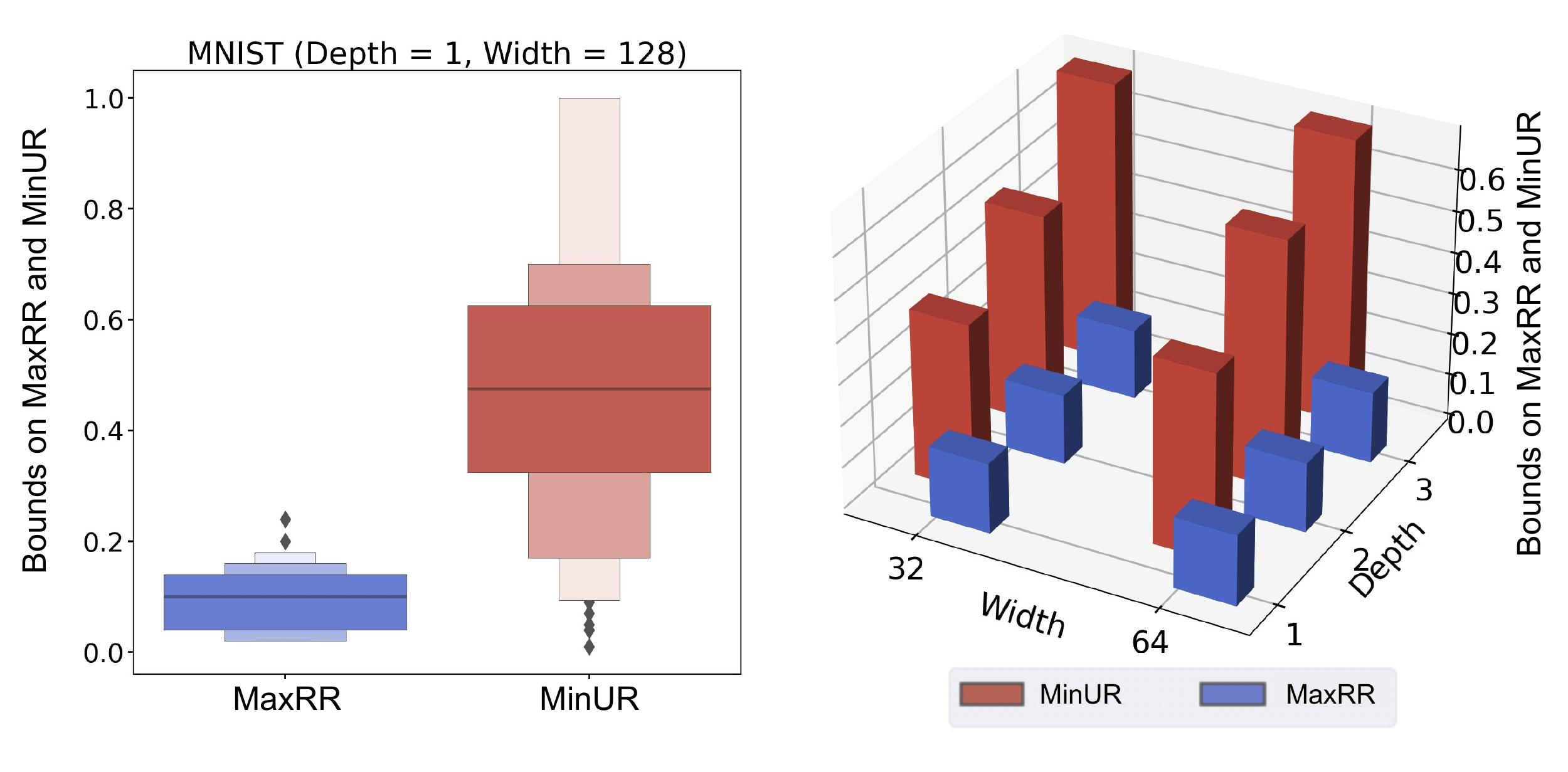}}
    \caption{\textbf{Left:} Boxplots for the empirical distribution of the maximum safe radius and minimum unsafe radius for a BNN with 128 hidden units and a single hidden layer. \textbf{Right:} For a range of architectures, we plot the mean certified safe and unsafe radius. }
    \label{fig:MNISTArch}
\end{figure}

%\begin{itemize}
    %\item \ap{Details of the architecture given somewhere? Perhaps in the Appendix?}
    %\item \LL{Not clear what overall means}
    %\item \LL{Need to define how you check if a BNN is uncertain or not. In your case if. I got it correctly you just check at the expectation of the softmax of the true class right?}
    %\item About figure 7: \ap{Upper and lower should respectively be $D_{\text{safe}}^U$ and $D_{\text{safe}}^L$, we have notation introduced for these quantities, let's use it to be more clear on what is being plotted. Putting the certification threshold a 0.5 is a very conservative overestimation given that the dataset is made of 10 classes, I would keep the line becuse it helps understanding what is going on, but I'd remove the legend about it. The y-axis, Decision Probability, is a term never introduced before, and that doesn't mean much. Either introduce a notation for the predictive probability – or simply put $D_{\text{safe}}$. $\epsilon$ for linfinite ball is a sloppy label, just put $\epsilon$, and we will make it clear from the text.}
    %\item About figure 7: \ap{Not sure why the plots legends is so small}
    %\item About figure 8 \ap{Upper and Lower need to be replaced by what they actually are. ie MinRR and MaxRR.}
%\end{itemize}
\subsection{MNIST}\label{sec:mnist_results}
We investigate the suitability of our methods in providing certifications for BNNs on larger input domains, specifically BNNs learnt for MNIST, a standard benchmark for the verification of deterministic neural networks whose inputs are 784-dimensional. In this setting we use IBP with 5 weight samples with a weight margin of 2.5 standard deviations.

\subsubsection{Problem Setting}

MNIST poses the problem of handwritten digit recognition. Given handwritten digits encoded as a 28 by 28 %(784 dimensional) 
black and white image, the task is to predict which digit – 0 through 9 – is depicted in the image (two images randomly sampled from the dataset are reproduced in the far left of Figure~\ref{fig:MNISTAnalysis}).
We learn BNNs using the standard 50,000/10,000 train/test split that is provided in the original work \cite{lecun1998mnist}. For our experimental analysis, we use one-layer neural network with 128 hidden neurons, each of which uses rectified linear unit activation functions. The BNN has 10 output neurons that use a softmax activation function. We train the network using VOGN with a diagonal covariance Gaussian prior that has variance 2.0. We use a sparse categorical cross-entropy loss modified with the method presented in \cite{wicker2021bayesian} to promote robustness in the BNN posteriors. %In our analysis we vary the architectural parameters, but keep all of the remaining training settings constant. %and investigate a number of different architectures using the variational inference algorithm for posterior estimation. We perform several analyses pertaining the behaviour of our certification methods w.r.t.\ the size of the input specification $T$, the number of layers and hidden units in the BNN architecture, and the behaviour on out-of-distribution samples.

%A few examples are given on the far left of Figure~\ref{fig:MNISTAnalysis}.  In order to learn on MNIST we use the standard 50,000/10,000 test/train split that is provided \cite{lecun1998mnist}. For each tested posterior, we again train using the VOGN algorithm from \cite{khan2018fast}.  In this section, we will study several different architecture configurations; we tune the learning rate parameters for each architecture individually. Each architecture uses rectified linear units as activation functions and a softmax as its final layer. % We take the likelihood of the model to be the sparse categorical cross-entropy. 

\subsubsection{Analysis using $\dsafe$ Certification}

We analyze the trained BNN using decision robustness on 1000 %one thousand 
images taken from the MNIST test dataset. We compute bounds on $\dsafe$ for increasing widths of an $\ell_\infty$ input region $\epsilon$. %\ap{Do these extra ldots have a meaning or is it a typo?}
We plot the mean and standard deviation obtained for the upper ($D_{\text{safe}}^U$, in red) and lower bound ($D_{\text{safe}}^L$, in blue) on decision robustness for the ground truth label of each image in the left hand portion of Figure \ref{fig:MNISTAnalysis}. 
As greater $\epsilon$ implies a larger input specification $T$, increasing values of $\epsilon$ leads to a widening of the gap between the lower and upper bounds, and hence an increased vulnerability of the network. 
Notice that even for $\epsilon = 0.25$, i.e., half of the whole input space, our method still obtains on average non-vacuous bounds (i.e., strictly within $(0,1)$).
In order to get a rough estimation of the adversarial robustness of the network, we observe that, for lower bound values above $0.5$, the BNN is guaranteed to correctly classify all the inputs in the region $T$ (however, as MNIST has 10 classes, even values of the lower bound lower than $0.5$ could still result in correct classification).
Using the $0.5$ threshold, we notice that our method guarantees that the BNN is still robust on average for $\epsilon = 0.075$. Notice that this is on par with results obtained for verification of deterministic neural networks, where $\epsilon = 0.05$ leads to adversarial attack robustness of around $70\%$ \cite{madry2017pgd}.

%In Figure~\ref{fig:MNISTAnalysis} we show how our bounds are affected by the radius of the $\ell_\infty$ ball around an input image. We see that for small radii ($< 0.06$) we are able to certify the correct classification w.r.t. the posterior predictive distribution. A certified robust radius of $0.06$ would be considered robust by deterministic neural network standards as adversarial attacks are readily found for DNNs with $\epsilon = 0.05$ the accuracy has typically decreased below 70\% \cite{madry2017pgd}. 

\subsubsection{Certification of Uncertainty Behaviour}

In this section we study how to certify the uncertainty behavior of a BNN in the presence of adversarial noise. We assume we have an out-of-distribution input, i.e., an input whose ground-truth does not belong to any of the classes in the range of the learned model. As with previous specifications, we build the set $T$ around such an input with an $\ell_\infty$ ball of radius $\epsilon$. Unlike for the previous specifications, we build $S$ as the set of all softmax vectors such that no entry in the vector is larger than a specified value $\tau_{\text{uncertain}}$. The function of $\tau_{\text{uncertain}}$ is to determine the confidence at which a classification is ruled to be uncertain. For example, in Figure~\ref{fig:MNISTAnalysis} we have set $\tau_{\text{uncertain}} = 0.4$, thus any classification that is made with confidence $<0.4$ will be ruled uncertain. By certifying that all values of $T$ are mapped into $S$, we guarantee that the BNN is uncertain on all points around the out-of-distribution input. 

In the right half of Figure~\ref{fig:MNISTAnalysis}, we plot two example images from the FashionMNIST dataset,  which are considered out-of-distribution for the BNN trained on MNIST. In our experiments we use 1000 test set images from the FashionMNIST dataset. On the right of the out-of-distribution samples in Figure~\ref{fig:MNISTAnalysis}, we plot the bounds on decision robustness with various values of $\epsilon$ for the $\ell_\infty$ ball. We start by noticing that the BNN never outputs a confidence of more than $\sim 0.25$ on the clean Fashion-MNIST dataset, which indicates that the network has good calibrated uncertainty on these samples. We notice that up to $\epsilon = 0.06$ we certify that no adversary can perturb the image to force a confident classification; however, at $\epsilon = 0.10$ no guarantees can be made.

\subsubsection{Architecture Width and Depth}
%\ap{Some reviewer will definately complain about the max width being only 64. I would suggest to include a small statement on the line of what kind of networks have been verified in the literature so far for MNIST, *if* they are in any way comparable.}
We now analyse the behaviour of our method when computing bounds on the certified radius on MNIST while varying the width and depth of the BNN architecture. % We select the output specification $S$ to represent the set such that, for each input point, the true class prediction of the BNN is the maximal one (i.e., we ensure that the classification is correct and robust).
The results of this analysis are given in Figure \ref{fig:MNISTArch}.
Notice that we are able to obtain non-vacuous bounds in all the cases analysed. However, as could be expected, we see that the gap between MinUR and MaxRR widens as we increase the depth and/or the width of the neural network.
This inevitably arises from the fact that the tightness of bound propagation techniques decreases as we need to perform more boundings and/or propagations, and because increasing the number of weights in the network renders the bounds obtained by Proposition \ref{prop:pred_posterior_bound} more coarse, particularly as we increase the number of layers of the BNN, as explained in Section \ref{sec:time_complexity}.
In particular, we observe that MinRR increases drastically as we increase the number of layers in the BNN architecture, while, empirically, the bounding for MaxRR is more stable w.r.t.\ the architecture parameters. % We highlight that these insights have been noted \MK{but this is premilinary version?} by previous works on BNNs \cite{wicker2020probabilistic} with architectures of similar size, where they do not analyze both upper and lower bounds. 

\subsubsection{Computational Requirements} On average, it takes 24.765 seconds to verify an MNIST image on a single CPU core. Each of the images in our experiments is run in parallel across 96 cores which allows us to compute all of the results for Figure~\ref{fig:MNISTAnalysis} in less than an hour. 

%\subsubsection{Limitations}

%In order to study how the choice of architecture affects the provable robustness of BNNs we vary both the width and depth of a fully connected BNN. We take the safe outputs, $S$, to be the set of softmax vectors such that the true class is maximal (i.e., we ensure that the classification is correct and robust). As we increase the width of the neural network (keeping all else equal, including the parameters of the algorithm) we see a slight decrease in MaxRR. However, as we increase depth we observe a noticeable increase in the computed MinRR. This indicates that increasing the depth of a BNN increases the computational burden of certification more than increasing the width. We hypothesize that this comes from the approximation introduced by convex relaxation techniques.

\begin{figure}
    \centering
    \subfigure{\includegraphics[width=0.425\textwidth]{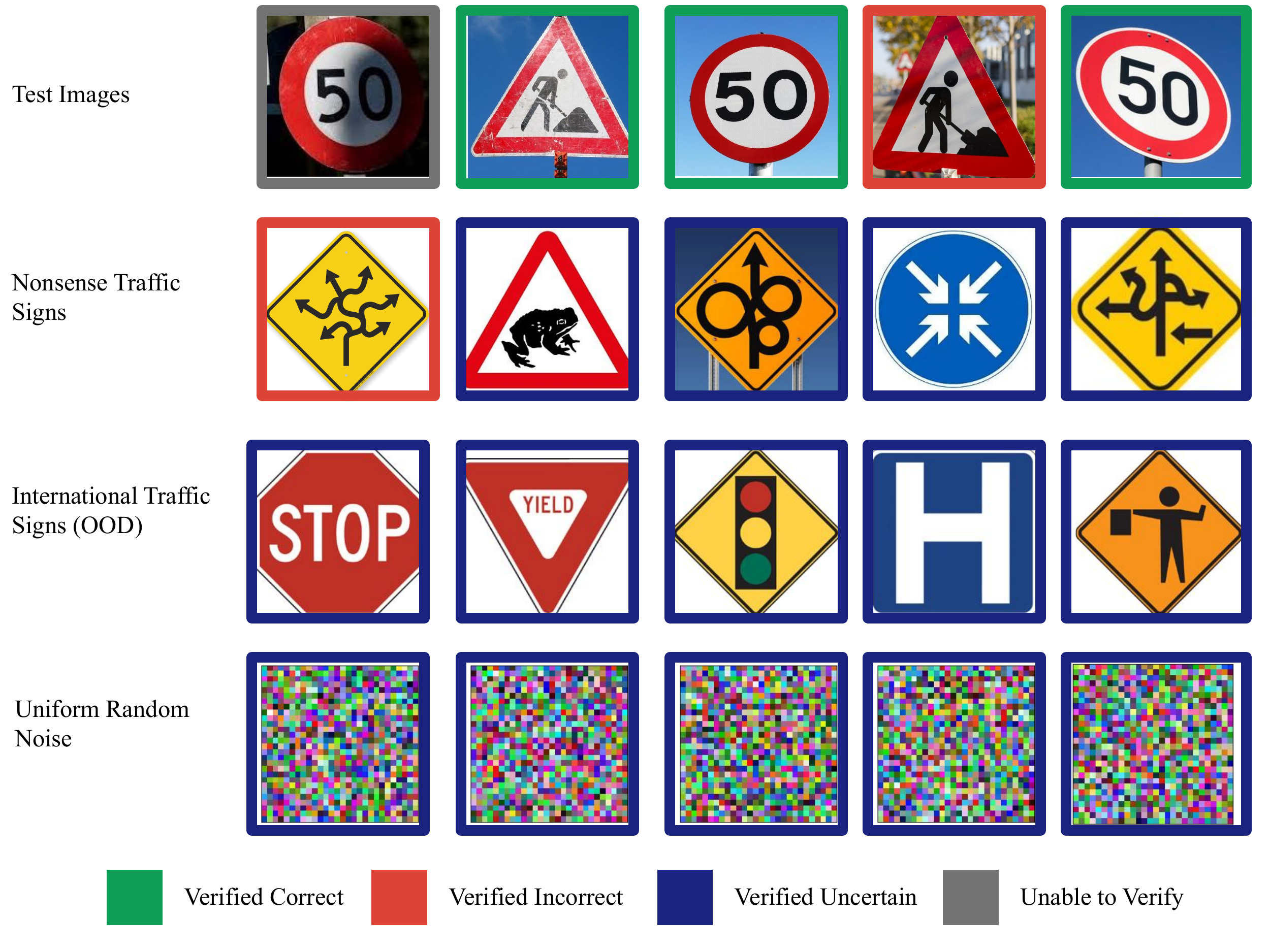}} 
    \caption{Certification of a Bayesian CNN on a two-class subset of the German Traffic Recognition (GTSRB) dataset. In the top row, we plot illustrative examples showing that we can verify correctness of test inputs. In the bottom three rows we visualize the the uncertainty guarantees on various out-of-distribution inputs including nonsense traffic signs (second row), international traffic signs (third row), and random noise (bottom row).
    }
    \label{fig:GTSRBAnalysis}
\end{figure}

%\begin{itemize}
    %\item \LL{Why the GTSR dataset is a paragraph of the MNIST subsection? SHould we call this all subsection Image Recognition instead of MNIST?} 
    %\item \ap{???} (Epsilon unstated)
    %\item \ap{Report average computational time}
    %\item  Overall... \ap{this analysis is a bit meh, It's just a few images with some frames around - underwhelming. Are these all the images you have verified? Can we run the experiments say on at least 50 of them, and we report the various percentages. don't need to make any new plot, but just report some number in the main text itself. }
    %\item \ap{You also need to put some discussion on why you think certain points (for example the top left one) are unverifiable etc. Otherwise is not an analysis, only a list of random results.}
    %\item \ap{If this is done anywhere else in the NN literature, it would be good to cite it}
    %\item \ap{Also this experiment, it would be much better if instead of 15 images, there is something like 100, so that we could report some numbers, other than just some anecdotal evidence like this.}
%\end{itemize}

\subsection{German Traffic Sign Recognition} 

In this section, we investigate the ability of our method to scale to a full-color image dataset, which represent safety-critical tasks with high-dimensional inputs ($2,352$ dimensions). 

\subsubsection{Problem Setting}

We study BNNs on a two-class subset of the German Traffic Sign Recognition Benchmark (GTSRB), consisting of the images that represent the `construction ahead' and `50 Km/H speed limit' \cite{stallkamp2012gtsrb}. Though this dataset is only comprised of two classes, full-colour images stretch the capabilities of BNN training methods, especially robust Bayesian inference. The dataset is comprised of 5000 training images and 1000 test set images. We employ VOGN to train a Bayesian convolutional architecture, with 2 convolutional layers and one fully-connected layer first proposed in \cite{gowal2018effectiveness}. We employ the method of \cite{wicker2021bayesian} in order to encourage robustness in the posterior. We find that this dataset poses a challenge to robust inference methods, with the BNN achieving 72\% accuracy over the test set after 200 epochs. We found that, without robust training, we are able to achieve 98\% accuracy over the test set, but were unable to certify robustness or uncertainty for any tested image (see  discussion of limitations below).
%in Section \ref{sec:gtsrb_limitations} below). 
We verify these networks with 3 weight samples with a weight margin of 3.0 standard deviations.

\subsubsection{Analysis with $\dsafe$}

For our analysis for GTSRB, we take $T$ to be a $\ell_\infty$ ball with radius $2/255$. As in our previous analysis, for test set images we take $S$ to be the set of all vectors such that the true class is the argmax. %For this analysis 
We study 250 images and find that 53.8\% of the images are certified to be correct. We plot a visual sample of these images in the top row of Figure~\ref{fig:GTSRBAnalysis}. We also study the out-of-distribution performance of various kinds of images with $\tau_{\text{uncertain}} = 0.55$. Of 400 images of random noise, visualized in the bottom row of Figure~\ref{fig:GTSRBAnalysis}, we certified that the BNN was uncertain on 398 images, indicating that on that set the BNN has correctly calibrated uncertainty as it does not issue confident predictions on random noise. We then turned our attention to two more realistic sets of out-of-distribution images: nonsense traffic signs and international traffic signs. We were limited to a small set of free-use images for these tests but found that for eight out of ten nonsense traffic signs we were able to certify the BNN's uncertainty, and for nine out of ten international traffic signs we were able to certify the uncertainty.  On average these certifications took 34.2 seconds.
%For our initial analysis, we provide certification on the correct behaviour of the BNN for input regions of radius $\epsilon = 2/255$. Illustrative results on this analysis are given in the first row of Figure~\ref{fig:GTSRBAnalysis}, where each image is framed with different color depending on whether it is certifiably robust (green), unrobust (red), or our method cannot provide any guarantees (grey). When run over 250 test set images we find that the BNN is proven to be correct and adversarailly robust on 53.8\% of the images. Each individual certification took an average computational time of approximately 34.271 seconds. In the remaining rows of Figure~\ref{fig:GTSRBAnalysis} we analyse the robustness of the BNN on making uncertain predictions around out-of-distribution samples – a task which is of particular relevance in autonomous driving settings. For this purpose we collect free-use images online representing nonsense traffic signs, traffic signs from the United States, and we generate random noise. As this is a two-class classification problem, for illustrative purposes we set the uncertainty threshold at the $0.55$ prediction level and we furthermore set $\epsilon =  2/255$.  Interestingly, we find that for all but one image in this set we are able to certify that the BNN is correctly uncertain.
%In addition, we tested the uncertainty of the BNN over 400 random additional noise images. Of these 400 we found that 398 were certified to have an upper bound on $\dsafe$ of less than $0.55$.

\subsubsection{Limitations}\label{sec:gtsrb_limitations}
While this analysis represents an encouraging proof of concept for certification of BNNs, we find that datasets whose inputs are of this scale and complexity are not yet fully accessible to robust inference for BNNs, as 74\% test set accuracy is not strong enough performance to warrant deployment. However, with approaches such as \cite{osawa2019practical, izmailov2021bayesian} investigating more powerful methods for scaling Bayesian inference for neural networks, we are optimistic that future works will be able to apply our method to more advanced Bayesian approximate posteriors.

%\vspace{2 cm}

% We find that for this BNN we are able to certify the robust prediction of many test-set inputs which can be seen in Figures~\ref{fig:GTSRBTitleFig} and \ref{fig:GTSRBAnalysis}. Given the highly unpredictable nature of autonomous navigation it is also highly desirable to certify the uncertainty of out-of-distribution points on this task. For this we collect free-use images online representing non-sense traffic signs, traffic signs from the United States, and we generate random noise. These three categories represent images on which our Bayesian CNN ought to be uncertain. Interestingly, we find that for all but one image in this set we are able to certify that the BNN is correctly uncertain with decision, posterior predictive, softmax guaranteed to be no greater than 0.55 for all points in a $\ell_\infty$ ball of radius 2/255.
%\input{Sections/Limitations}
\section{Conclusion}
In this work, we introduced a computational framework for evaluating robustness properties of BNNs operating under adversarial settings. In particular, we have discussed how probabilistic robustness and decision robustness -- both employed in the adversarial robustness literature for Bayesian machine learning \cite{cardelli2018robustness, smith2019adversarial} -- can be upper- and lower-bounded via a combination of posterior sampling, integral computation over boxes and bound propagation techniques. 
We have %provided explicit 
detailed how to compute these properties   %the computation can be made in both the case 
for the case
of HMC and VI posterior approximation, and how to instantiate the bounds for interval and linear propagation techniques, although the framework presented is general and can be adapted to different inference techniques and to most of the verification techniques employed for deterministic neural networks.

In an experimental analysis comprising 5 datasets (airborne collision avoidance, concrete, powerplant, MNIST, and GTSRB), we have showcased the suitability of our approach for computing effective robustness bounds in practice, and for various additional measures that can be computed using our technique including certified robust radius and analysis of uncertainty. 

With verification of deterministic neural networks already being NP-hard, inevitably certification of Bayesian neural networks poses several practical challenges.
The main limitation of the approach presented here arises directly from the Bayesian nature of the model analysed, i.e., the need to bound and partition at the weight space level (which is not needed for deterministic neural networks, with the weight fixed to a specific value). Unfortunately, this means that the computational complexity, and also the tightness of the bounds provided, scale quadratically with the number of neurons across successive layer connections. We have discussed methods for mitigating the resulting gap between the bounds, including adaptive partitioning based on weight variance and implementing a  branch-and-bound refinement approach for the bound, which would, however, results in a sharp increase in computational time.   
Nevertheless, the methods presented here provide the first formal technique for the verification of robustness in Bayesian neural network systematically and across various robustness notions, and as such can 
provide a sound basis for %provide significant 
%practical impact 
for future practical applications in safety-critical scenarios.

%\vspace{2cm}
%In this work we study certification of Bayesian neural networks (BNNs). We present two definitions of robustness for BNNs which can be used by both practitioners and regulators to ensure that a BNNs predictions are safe and well-calibrated. We validate that our method can guarantee BNN performance w.r.t. notions of robustness in the face of adversarial examples as well as guaranteed uncertainty on out-of-distribution points. We present upper and lower bounds on both of these definitions which allow users to formally certify the safety of a BNN prior to deployment. Our empirical investigation provides us with evidence that these algorithms are able strong certification guarantees for BNNs in safety-critical scenarios. This includes comprehensive verification of 1.8 million input specifications on an airborne collision avoidance dataset and safety verification of traffic sign classification task with full color input images.  
\bibliographystyle{IEEEtran}
\bibliography{bib}

\clearpage
\newpage
\appendices

%\ap{Doesn't the appendix need some sort of header? It's weird that it just starts like this}\MRW{This is the style guide way of doing it. }
In this Appendix, we provide further discussion, proofs, and hyper-parameters for reproducing our experiments.  We begin with a discussion on the use cases for the bounds. %, outlined in Table~\ref{tab:propertycomputation}. 
We then move on to discuss other decision criteria that we can certify. Following this, we provide proofs for the theoretical results stated in the main paper, algorithms for lower bounding  $\dsafe$ and upper bounding both $\psafe$ and $\dsafe$, as well as computing the cumulative probability of a hyper-rectangle using Bonferroni bounds. Finally, we summarise the training details, including all hyper-parameters needed to reproduce our results.

\section{Bound Use Case Discussion}

\begin{table}[]\caption{\label{tab:propertycomputation}  Use cases for each bound presented in this paper.}
\begin{tabular}{|l|c|c|}
\hline
\multicolumn{1}{|c|}{Property:} & App:   & Bound for Certification: \\ \hline \hline
Correctness                     & Cls. & Lower \& Upper on $\dsafe$      \\ \hline
Aleatoric Uncert.          & Cls. & Upper on $\dsafe$               \\ \hline
Epistemic Uncert. (OOD)     & Cls. & Lower \& Upper on $\psafe$      \\ \hline
Correctness                     & Reg.     & Lower \& Upper on $\dsafe$      \\ \hline
Aleatoric Uncert.          & Reg.     & Lower \& Upper on $\psafe$      \\ \hline
Epistemic Uncert. (OOD)     & Reg.     & Lower \& Upper on $\psafe$      \\ \hline
\end{tabular}
\end{table}

%\MK{Very informal and loose description, needs to be turned into a methodology }
We use this section to present the use cases for each bound we derive in this paper and highlight their importance. %We decompose our discussion into different properties of interest for BNNs. 
A summary of the use cases we suggest is given in Table~\ref{tab:propertycomputation}. %For determinsitic neural networks the only property that is generally considered is correctness i.e., can the adversary change the prediction of a given input. Discussion of this property and others can be found in \cite{dvijotham2018verification}. In the Bayesian setting, modellers are often concerened with \textit{uncertainty}. Below, 
We discuss various kinds of uncertainty quantities (full definitions and discussion in \cite{gal2016uncertainty}) as well as how one can gain relevant certification on these quantities. %Due to the fundemental difference that classification involves discrete outputs and regression involves continuous output objective their certification is treated differently, we provide further discussion below. 

%\ap{Since this discussion is not supposed to be something super new specific to this paper, I would put here, a list of references that the reader can read. I think it removes from us the responsability of justifying fully the various properties. Something like, 'Discussions on various certification properties needed for machine learning can be found here here here and there'.}
%\ap{Regression and classification have different entries in the table, point that out and explain why with a brief sentence}

\paragraph{Correctness} One of the most widely studied properties in NN robustness is that of ``correctness'' \cite{katz2017reluplex}, which requires prediction of the NN to match the ground truth even in the face of adversarial perturbations. For classification, as discussed in the main text, correctness boils down to checking that, for all adversarial perturbations, the argmax of the softmax output remains the same. For regression, due to the continuous nature of outputs correctness involves establishing a range of outputs that correspond to the tolerable error.
%Here we refer to a correctness property as the property of a NN (in our case a BNN) predicting the correct label (or within tolerable error for regression) for every input in $T$. 
Given that correctness relies on the ultimate decision of the BNN in either the classification or regression, we use upper and lower bounds on the posterior predictive expectation (i.e., $D_{\text{safe}}^L$ and $D_{\text{safe}}^U$). To prove that classification is correct, one must prove that the lower bound of the true class softmax probability is higher than the upper bound of all other classes softmax probability, which implies:
$$ \forall x \in T, \ \ \argmax 
\mathbb{E} \big[ \sigma(f^{w}(x)) \big] = c$$
For regression one must use upper and lower bounds in order to show that output prediction lies within tolerable error. For this, one needs to check that the end points of the decision, $[D^{L}_{\text{safe}}, D^{L}_{\text{safe}}]$, are contained within the tolerable noise.  

\paragraph{Aleatoric Uncertainty} Measures of aleatoric uncertainty are %irreducible \ap{What does irreducible mean?}, 
input-dependent and come from the noise within the data observation process \cite{gal2016uncertainty}. %Aleatoric uncertainty is often called irriducible due to the fact (unlike epsitemic uncertainty) increasing the amount of data you observe does not affect this source of uncertainty. Though difficult to disentangle at inference time, one typically uses the likelihood as a marker of aleatoric uncertainty \cite{bishop1995neural}. For classification one can use the softmax probability, the maximum value of the posterior predictive distribution, is taken as measure of aleatoric uncertainty. %Typically this value (the maximum of the expected softmax) is termed the `confidence.' % and the variance in the softmax probabilities arising from different sampled neural networks as the epistemic uncertainty (discussed below). 
For classification the aleatoric uncertainty is usually measured as $\max_{i \in [n]} \mathbb{E}_{\post} \sigma(f^{w}(x))_{i}$. This is also termed the `confidence.' For regression, one can predict both the mean and variance of a Gaussian likelihood, where the variance represents the aleatoric uncertainty \cite{nix1994estimating}.
%Thus a bound on the aleatoric uncertainty $\tau_{\text{uncertain}}$ is akin to placing a bound on confidence of the BNN. 
Computing bounds on the posterior predictive mean allows us to ensure that a point has sufficiently high or low aleatoric uncertainty. For classification, $D^{L}_{\text{safe}}$ represents a lower bound on $\mathbb{E}_{\post} \sigma(f^{w}(x))_{i}$, thus $D^{L}_{\text{safe}}$  allows for a bound on aleatoric uncertainty. For regression, the same holds save it is only the bound $D^{L}_{\text{safe}}$ in the dimension corresponding to the predicted variance. %For example: $D^{L}_{\text{safe}} > \tau_{\text{uncertain}}$ ensures that no adversary can reduce the aleatoric uncertainty below a threshold. Or for OOD points one can check that $D^{U}_{\text{safe}} < \tau_{\text{uncertain}}$ in order to ensure that no adversary can trick our model into producing a highly confidence prediction on an OOD. % \ap{Formula} 

%For regression, bounds on the mean tell us little about the uncertainty of the prediction. Typically the aleatoric uncertainty of a regression prediction is represented by a Gaussian with variance $\varsigma$. If one  wants to ensure that a BNN has a prediction spread that matches this variance $\varsigma$, then one could check the output spread of the BNN is in line with this assumption using bounds on $\psafe$. However, we note that this is not a property of significant interest, and so we do not discuss it further.  % Instead, we must rely on the spread predictions for a given point. If we would like check that our aleatoric uncertainty at an input $x$ is roughly calibrated according to a Gaussian with variance $\varsigma$, then we can ensure that for the output range $[f^{\mathbf{w}}(x)-\sqrt{\varsigma}, f^{\mathbf{w}}(x)+\sqrt{\varsigma}]$ we have that $\psafe = 0.67$ thus indicating that the BNN output respects the observation noise we expect. \ap{Rather than an example, write down the explicit formula, and then in case make the example.}
%\ap{OK but what information does a model get out of this? I guess something like that the problem is intrinsically noisy around T? Maybe one sentance about it?}

\paragraph{Epistemic Uncertainty} Model or epistemic uncertainty measures the uncertainty from the lack of data at training time. We expect that epistemic uncertainty is high for out-of-distribution samples. % which is indicated by a large variance in model behavior at a point. %Though one cannot disentangle epsitemic from aleatoric in the case of regression, %\ap{Is this a problem of our method, or a general problem?}
Epistemic uncertainty is measured as the spread of prediction from various models under the posterior distribution. To measure this, it is natural to consider the variance of the posterior predictive distribution. % probability from different models in the Bayesian posteriors as an indication of epistemic uncertainty. 
Given an input $x$ that is out-of-distribution one can certify that $P_{\text{safe}}$ is not sufficiently high for any class. This guarantees that there is not one class that the BNN maps all of its predictions into and thus guarantees that the BNN is uncertain. By checking $P_{\text{safe}}$ across in- and out-of-distribution points, modellers can certify that their BNN is well calibrated with respect to epistemic uncertainty. %This allow us to reason that model has diverse predictions at $x$.
%\ap{OK, but what information does the modeller obtain by all of this? One sentence about it would be interesting. I guess something like that T is around an out-of-distribution area?}

\section{Certifying Further Decision Rules}\label{appendix:decisionrules}

As discussed in the main paper, decision robustness is clearly dependent on the function used for Bayesian decisions on top of the learning model.
In the main paper we have given explicit results for the two standard losses, $\ell_{0-1}$ for classification and $\ell_2$ for regression. However, with some minor adjustments, our method can be employed for different losses too. In this section we give the example of the $\ell_1$ loss for regression and the weighted loss for classification.  

\subsection{Bounding Decisions for the $\ell_1$ Loss}
For the $\ell_1$ decision loss, it is known that the \textit{median} of the posterior predictive distribution is the value that minimizes the loss. Thus we must bound the median, defined as usual to be $m(Z) := x \iff \int_{-\infty}^{x} p_{Z}(v)dv = 0.5$. %In order to compute upper and lower bounds on the median using our method, first we consider the definition of a median. The median of a random variable $Z$ which is distributed according to $p_{Z}$ is given as $m(Z) := x \iff \int_{-\infty}^{x} p_{Z}(v)dv = 0.5$. That is, when the cumulative density or mass is 0.5. In order to lower bound this value, assume we have $N$ disjoint weight intervals $\{J_i\}_{i=1}^{N}$ as well as their corresponding lower bound outputs $\{y^{L}_{i}\}_{i=1}^{N}$. Firstly, we sort the two sets such that $y^{L}_{i} \leq y^{L}_{i+1}$. %Next, one can observe the sum $\sum_{i=0}^{M} P(J_i)$. 
Assuming $\sum_{i=1}^{N} P(J_i) = 1.0$, we can arrive at a lower bound by picking $y^{L}_m$ to be our median lower bound such that $\sum_{i=1}^{m} P(J_i) \leq 0.5$ but $\sum_{i=1}^{m+1} P(J_i) \geq 0.5$. One can similarly find an upper-bound via this routine by first computing upper bounds for each weight rectangle and then picking $y^{U}_m$ such that $\sum_{i=1}^{m} P(J_i) \geq 0.5$ but $\sum_{i=0}^{m-1} P(J_i) \leq 0.5$.  When the condition $\sum_{i=1}^{N} P(J_i) = 1.0$ does not hold, we can modify the procedure to get valid bounds on the median. We assume that $\sum_{i=1}^{N} P(J_i) = 1.0 - \eta$ for any $\eta$ such that $0.5 > \eta > 0$. Then we pick the lower bound to the median to be $y^{L}_m$ such that $\eta + \sum_{i=1}^{m} P(J_i) \leq 0.5$ but $\eta + \sum_{i=1}^{m+1} P(J_i) \geq 0.5$. This yields a valid bound on the median.
Similar formulas can be computed for the upper bound, by relying on the laws of complementary probabilities.

\subsection{Bounding Decisions for the $K-0$ Loss}

In some safety-critical decision-making problems, particularly in medical diagnosis, predicting one class comes with more risk (formally, loss) than predicting another. In this case, the 0-1 loss is made more general and is defined as the 0-$K$ loss, which assigns a penalty of 0 to the correct prediction, and $K_{i}$ otherwise, where $i$ indexes the classes. Thus, the posterior expected losses in a binary classification case are $K_0 p(y_{0} | x, \mathcal{D})$ and $K_1 p(y_{1} | x, \mathcal{D})$. In this scenario the decision rule is not to take the argmax as before, but to predict class $i$ if the $p(y_{i} | x, \mathcal{D}) > \dfrac{K_i}{\sum_{i=0}^{n_c} K_i}$. Thankfully, this is straightforward in our framework. To certify this decision rule it is enough to check that $D^{L}_{\text{safe}, i} \geq \dfrac{K_i}{\sum_{i=0}^{n_c} K_i}$. We refer interested readers to Section 4.4.3 of \cite{berger2013statistical} for more in-depth discussion.

%\ap{I would add some trivial sentences on how to deal with the classification when instead of being the 0-1 loss is the weighted one. In this way we give a sense of completeness to this section which is called "Decision Rules", while at the moment it discusses only one.}.

\section{Proofs}\label{appendix:proofs}
In this section of the Appendix, we provide proofs for the main theoretical results stated in the paper.
\subsection{Lemma~\ref{Prop:SafetyComputation}}
\begin{proof}
By the definition of the maximal safe weight set we have $w \in H \iff \forall x \in T, f^{{w}}(x)\in S$. Moreover, we have that the probability of a weight being in such a set is given as $Prob_{w \sim \post}(w \in H) = \int_{H} \post dw $. By making explicit the definition of $H$, together these two give us $Prob_{w \sim \post}(\forall x \in T, f^{{w}}(x)\in S) = \int_{H} \post dw $.
The second equality stated in the lemma formulation follows directly from the latter result and the property of complementary probabilities, with $w \in H$ and $w \in K$ being two complementary events.
\end{proof}

\subsection{Proposition~\ref{prop:pred_posterior_bound}}
\begin{proof}
We prove the results explicitly for the lower bound; the derivation of the upper bound is analogous. 
Consider the minimisation over $T$ of the expected value computed over the posterior distribution of Problem \ref{prob:adv_pred} for output index $c \in \{1,\ldots,m\}$:
\begin{align*}
    \min_{x \in T} \mathbb{E}_{\post} [\sigma_c(f^w(x))] = \min_{x \in T} \int \sigma_c(f^w(x)) p(w | \mathcal{D}) dw.
\end{align*} 
Let $I = \mathbb{R}^{n_w} \setminus \bigcup_{i=1}^{n_J} J_i$  
Since the weight intervals in $\mathcal{J   }$ are disjointed we can rely on the linearity of integrals to obtain:
\begin{align*}
    &\min_{x \in T} \int \sigma_c(f^w(x)) p(w | \mathcal{D}) dw = \min_{x \in T} \Bigg( \sum_{i=1}^{n_J} \int_{J_i} \sigma_c(f^w(x)) p(w | \mathcal{D}) \\
    & + \int_{I} \sigma_c(f^w(x)) p(w | \mathcal{D} ) \Bigg).
\end{align*}
We notice that, for every $x$, $\int_{J_i} \sigma_c(f^w(x)) p(w | \mathcal{D}) \geq \min_{w \in J_i} \sigma_c(f^w(x))\int_{J_i}  p(w | \mathcal{D}) $.
By combining this result with the above chain of equalities, and further relying on the property of minimum, we obtain that:
\begin{align*}
    \min_{x \in T} \Bigg( \sum_{i=1}^{n_J} \int_{J_i} \sigma_c(f^w(x)) p(w | \mathcal{D}) + \int_{I} \sigma_c(f^w(x)) p(w | \mathcal{D} ) \Bigg) \geq \\ \sum_{i=1}^{n_J}  \int_{J_i}\post dw  \min_{\substack{x \in T\\ w \in J_i }} \sigma_c(f^w(x)) + \\ \sigma^L \left( 1 - \sum_{i=1}^{n_J}  \int_{J_i}\post dw  \right) = D_{\text{safe},c}^L, 
\end{align*}
which proves the theorem statement.
\end{proof}

\subsection{Proposition \ref{Prop:IBP}}\label{appendix:ibpproof}
The bounding box can be computed iteratively in the number of hidden layers of the network, $K$.
We show how to compute the lower bound of the bounding box; the computation for the maximum is analogous. 

Consider the $k$-th network layer, for $k=0,\ldots,K$, we want to find for $i=1,\ldots n_{k+1}$:
\begin{equation*}
 \min_{ \substack{ W^{(k)}_{i:} \in [W^{(k),L}_{i:} , W^{(k),U}_{i:}]  \\ z^{(k)} \in [z^{(k),L},z^{(k),U}] \\ b^{(k)}_i \in [b^{(k),L}_i, b^{(k),U}_i ] }}    z^{(k+1)}_i = \sigma \left( \sum_{j=1}^{n_k} W^{(k)}_{ij} z^{(k)}_j + b^{(k)}_i  \right).
\end{equation*}
As the activation function $\sigma$ is monotonic, it suffices to find the minimum of:  $\sum_{j=1}^{n_k} W^{(k)}_{ij} z^{(k)}_j + b^{(k)}_i$.
Since $W^{(k)}_{ij} z^{(k)}_j$ is a bi-linear form defined on an hyper-rectangle, it follows that it obtains its minimum in one of the four corners of the rectangle $[W^{(k),L}_{ij} , W^{(k),U}_{ij}] \times [z^{(k),L}_j,z^{(k),U}_j]$.\\
Let  $t_{ij}^{(k),L} = \min \{ W_{ij}^{(k),L} z_j^{(k),L} , W_{ij}^{(k),U} z_j^{(k),L} , \\ W_{ij}^{(k),L} z_j^{(k),U} , W_{ij}^{(k),U} z_j^{(k),U} \}$
we hence have:
$$
\sum_{j=1}^{n_k} W^{(k)}_{ij} z^{(k)}_j + b^{(k)}_i \geq  \sum_{j=1}^{n_k} t_{ij}^{(k),L} + b^{(k),L}_i =: \zeta^{(k+1),L}_i.
$$
Thus for every $W^{(k)}_{i:} \in [W^{(k),L}_{i:} , W^{(k),U}_{i:}]$,  $z^{(k)} \in [z^{(k),L},z^{(k),U}]$ and $b^{(k)}_i \in [b^{(k),L}_i, b^{(k),U}_i ]$ we have:
$$
    \sigma \left( \sum_{j=1}^{n_k} W^{(k)}_{ij} z^{(k)}_j + b^{(k)}_i  \right) \geq \sigma \left( \zeta^{(k+1),L}_i \right) 
$$
that is $z^{(k+1),L}_i = \sigma \left( \zeta^{(k+1),L}_i \right)  $ is a lower bound to the solution of the minimisation problem posed above.

%Consider $ z^{(0)} = \sigma ( W^{(0)} x + b^{(0)}  )$ with $W^{(0)} \in [w^{(0),L},w^{(0),U} ] \subset \mathbb{R}^{n_{out} \times n_{in}}$  , $b^{(0)} \in [b^{(0),L},b^{(0),U} ] \subset \mathbb{R}^{n_{out}}$ and $x \in [x^L,x^U]  \subset \mathbb{R}^{n_{in}} $.
%We want to find the minimum and maximum values for the entries of $z^{(0)} = [z^{(0)}_1,\ldots,z^{(0)}_{n_{out}}]$.
%Consider the $i$th index of this, as $\sigma$ is monotonic increasing entry-wise it suffice to find minimum and maximum of the linear form  $\zeta^{(0)} = W^{(0)} x + b^{(0)}$, and pass them through the activation function.
%So we just need to compute
%\begin{align*}
%    \min_{x,W^{(0)}_i,b^{(0)}_i } \zeta^{(0)}_i = (W^{(0)}_i x + b^{(0)}_i) \\
%    \max_{x,W^{(0)}_i,b^{(0)}_i } \zeta^{(0)}_i =  (W^{(0)}_i x + b^{(0)}_i) 
%\end{align*}
%As we have $W^{(0)}_i x + b^{(0)}_i = \sum_j  W^{(0)}_{ij} x_j + b^{(0)}_i $ it suffice to propagate the interval through the multiplication and we are done. Defining $\underline{w}_j = \min \{ W^{(0),L}_{ij} \cdot x_j^L, W^{(0),L}_{ij} \cdot x_j^U, W^{(0),U}_{ij} \cdot x_j^L , W^{(0),U}_{ij} \cdot x_j^U \}$ we have that:
%$\min \zeta^{(0)}_i = \sum_j \underline{w}_j + b^{(0)}_i $. Analogous stuff for the maximum. We can propagate this through the activation function, and we find in this way lower and upper bounds on $z^{(0)}$.
%These we can use to iterate the process for the next layer.

%
\subsection{Proposition \ref{proposition:lbp}}\label{appendix:lbpproof}
We first state the following lemma that follows directly from the definition of linear functions:
\begin{lemma}
\label{lemmma:linear_prop}
Let $f^L(t) = \sum_j a_j^L t_j + b^L $ and $f^U(t) = \sum_j a_j^U t_j + b^U$ be lower and upper  LBFs to a function $g(t)$ $\forall t \in \mathcal{T}$, i.e., $ f^L(t) \leq g(t) \leq  f^U(t) $ $\forall t \in \mathcal{T}$. Consider two real coefficients $\alpha \in \mathbb{R}$ and $\beta \in \mathbb{R}$.
Define
\begin{align}
    &\bar{a}_j^L = \begin{cases}  \alpha a_j^L \; \textrm{if} \,   \alpha \geq 0  \\ \alpha a_j^U \; \textrm{if} \,   \alpha < 0  \end{cases}
    \bar{b}^L = \begin{cases}  \alpha b^L + \beta \; \textrm{if} \,   \alpha \geq 0  \\ \alpha b^U  + \beta \; \textrm{if} \,  \alpha < 0  \end{cases} \label{eq:lin_transf1}\\
    &\bar{a}_j^U = \begin{cases}  \alpha a_j^U \; \textrm{if} \,   \alpha \geq 0  \\ \alpha a_j^L \; \textrm{if} \,   \alpha < 0  \end{cases}
    \bar{b}^U = \begin{cases}  \alpha b^U + \beta \; \textrm{if} \,   \alpha \geq 0  \\ \alpha b^L  + \beta \; \textrm{if} \,   \alpha < 0  \end{cases} \label{eq:lin_transf2}
\end{align}
Then:
\begin{align*}
\bar{f}^L(t) := \sum_j \bar{a}_j^L t_j + \bar{b}^L \leq \alpha g(t) + \beta \leq \sum_j \bar{a}_j^U t_j + \bar{b}^U \\ =: \bar{f}^U(t)
\end{align*}
That is, LBFs can be propagated through linear transformation by redefining the coefficients through Equations \eqref{eq:lin_transf1}--\eqref{eq:lin_transf2}.
\end{lemma}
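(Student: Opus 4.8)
The plan is to prove the two inequalities by a case split on the sign of $\alpha$, since the entire content of the lemma is the elementary fact that multiplying an inequality by a scalar preserves its direction when the scalar is non-negative and reverses it when the scalar is negative. The piecewise definitions in Equations \eqref{eq:lin_transf1}--\eqref{eq:lin_transf2} are engineered precisely so that, in either case, the transformed coefficients reproduce the correct lower and upper affine bounds on $\alpha g(t) + \beta$. Throughout, I start from the hypothesis $f^L(t) \leq g(t) \leq f^U(t)$ holding for all $t \in \mathcal{T}$ and carry both endpoints through the affine map $g \mapsto \alpha g + \beta$.

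First, in the case $\alpha \geq 0$, I would multiply the chain $f^L(t) \leq g(t) \leq f^U(t)$ by $\alpha$, which preserves the ordering, and then add $\beta$ to obtain
\begin{align*}
\alpha f^L(t) + \beta \leq \alpha g(t) + \beta \leq \alpha f^U(t) + \beta.
\end{align*}
Expanding the left-hand side gives $\alpha f^L(t) + \beta = \sum_j (\alpha a_j^L) t_j + (\alpha b^L + \beta)$, and matching these coefficients against the $\alpha \geq 0$ branches of \eqref{eq:lin_transf1} shows this equals $\bar{f}^L(t)$; the analogous expansion of the right-hand side matches the $\alpha \geq 0$ branch of \eqref{eq:lin_transf2} and equals $\bar{f}^U(t)$. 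This settles the non-negative case.

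Second, in the case $\alpha < 0$, multiplying the chain by $\alpha$ reverses the inequalities, yielding $\alpha f^U(t) \leq \alpha g(t) \leq \alpha f^L(t)$, so after adding $\beta$ the original upper bound supplies the new lower bound and vice versa. I would then verify that $\alpha f^U(t) + \beta = \sum_j (\alpha a_j^U) t_j + (\alpha b^U + \beta)$ coincides with the $\alpha < 0$ branch defining $\bar a_j^L, \bar b^L$, and that $\alpha f^L(t) + \beta$ coincides with the $\alpha < 0$ branch defining $\bar a_j^U, \bar b^U$ — which is exactly how \eqref{eq:lin_transf1}--\eqref{eq:lin_transf2} are stated. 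The only obstacle is clerical: one must confirm that the swap of the lower and upper endpoints induced by the sign reversal is correctly absorbed into the piecewise redefinition of the coefficients, so that $\bar f^L$ and $\bar f^U$ retain their roles as lower and upper LBFs. There is no analytic difficulty whatsoever; the result is an immediate consequence of the order-reversing property of multiplication by a negative number, and the two cases together establish $\bar f^L(t) \leq \alpha g(t) + \beta \leq \bar f^U(t)$ for all $t \in \mathcal{T}$.
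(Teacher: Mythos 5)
Your proof is correct and is exactly the argument the paper has in mind: the paper states this lemma without proof, remarking only that it ``follows directly from the definition of linear functions,'' and your case split on the sign of $\alpha$ (order preserved for $\alpha \geq 0$, order reversed with the roles of $f^L$ and $f^U$ swapped for $\alpha < 0$) is precisely what the piecewise definitions in Equations \eqref{eq:lin_transf1}--\eqref{eq:lin_transf2} encode. Nothing is missing.
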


We now proof Proposition  \ref{proposition:lbp} iteratively on $k=1,\ldots,K$ that is that for $i=1,\ldots,n_k$ there exist $f_i^{(k),L}(x,W)$ and $f_i^{(k),U}(x,W)$ lower and upper LBFs such that:
\begin{align}
        &\zeta^{(k)}_i \geq  f_i^{(k),L}(x,W) := \mu_i^{(k),L} \cdot x +  \label{eq:lbp1} \\ & \sum_{l = 0}^{k-2} \langle \nu_i^{(l,k),L} , W^{(l)}  \rangle + \nu_i^{(k-1,k),L} \cdot W^{(k-1)}_{i:} + \lambda_i^{(k),L}  \nonumber\\
     &\zeta^{(k)}_i \leq  f_i^{(k),U}(x,W) := \mu_i^{(k),U} \cdot x + \label{eq:lbp2} \\ & \sum_{l = 0}^{k-2} \langle \nu_i^{(l,k),U} , W^{(l)}  \rangle + \nu_i^{(k-1,k),U} \cdot W^{(k-1)}_{i:} + \lambda_i^{(k),U} \nonumber
\end{align}
and iteratively find valid values for the LBFs coefficients, i.e., $\mu_i^{(k),L}$, $\nu_i^{(l,k),L}$, $\lambda_i^{(k),L}$, $\mu_i^{(k),U}$, $\nu_i^{(l,k),U}$ and $\lambda_i^{(k),U}$.

For the first hidden-layer we have that $\zeta^{(1)}_i = \sum_j W_{ij}^{(0)} x_j + b^{(0)}_i$.
By inequality \eqref{eq:mc1} and using the lower bound for $b^{(0)}_i$ we have:
\begin{align*}
&\zeta^{(1)}_i \geq \sum_j \left( W_{ij}^{(0),L} x_j  +  W_{ij}^{(0)} x^{L}_j - W_{ij}^{(0),L} x^{L}_j  \right) + b^{(0),L}_i \\
&= W_{i:}^{(0),L} \cdot x + W_{i:}^{(0)} \cdot x^{L} - W_{i:}^{(0),L} \cdot x^{L} + b_i^{(0),L}
\end{align*}
which is a lower LBF on $\zeta^{(1)}$.
Similarly, using Equation \eqref{eq:mc2} we obtain:
\begin{align*}
   \zeta^{(1)}_i \leq W_{i:}^{(0),U} \cdot x + W_{i:}^{(0)} \cdot x^{L} - W_{i:}^{(0),U} \cdot x^{L} + b_i^{(0),U}
\end{align*}
which is an upper LBF on $\zeta^{(1)}$. 
By setting: 
\begin{align*}
    \mu_i^{(1),L} &= W_{i:}^{(0),L} \quad , \quad  \mu_i^{(1),U} = W_{i:}^{(0),U} \\
    \nu_i^{(0,1),L} &= z^{(0),L} \quad , \quad  \nu_i^{(0,1),U} = x^{L}  \\
    \lambda_i^{(1),L} &= - W_{i:}^{(0),L} \cdot x^{L} + b_i^{(0),L}  \\  
    \lambda_i^{(1),U} &= - W_{i:}^{(0),U} \cdot x^{L} + b_i^{(0),U}
\end{align*}
we obtain LBFs $f_i^{(1),L}(x,W)$ and $f_i^{(1),U}(x,W)$ of the form \eqref{eq:lbp1}--\eqref{eq:lbp2}.

Given the validity of Equations \eqref{eq:lbp1}--\eqref{eq:lbp2} up to a certain $k$, we now show how to compute the LBF for layer $k+1$, that is, given $f_i^{(k),L}(x,W)$ and $f_i^{(k),U}(x,W)$ we explicitly compute $f_i^{(k+1),L}(x,W)$ and $f_i^{(k+1),U}(x,W)$.
Let $\zeta^{(k),L}_i = \min f_i^{(k),L}(x,W) $ and $\zeta^{(k),U}_i = \max f_i^{(k),U}(x,W) $ be the minimum and maximum of the two LBFs (which can be computed analytically as the functions are linear). 
For Lemma \ref{lemmma:act_bound} there exists a set of coefficients such that $ z^{(k)}_i = \sigma(\zeta^{(k)}_i) \geq \alpha^{(k),L}_i \zeta^{(k)}_i + \beta^{(k),L}_i$.
By Lemma \ref{lemmma:linear_prop} we know that there exists $\bar{f}_i^{(k),L}(x,W)$ with coefficients $\bar{\mu}_i^{(k),L}$, $\bar{\nu}_i^{(l,k),L}$, $\bar{\lambda}_i^{(k),L}$  obtained through Equations \ref{eq:lin_transf1}--\ref{eq:lin_transf2} such that:
\begin{align*}
    z^{(k)}_i  \geq \alpha^{(k),L}_i f_i^{(k),L}(x,W) + \beta^{(k),L}_i \geq \bar{f}_i^{(k),L}(x,W) 
\end{align*}
%
%\begin{align*}
%    &z^{(k)}_i = \sigma(\zeta^{(k)}_i) \geq \alpha^{(k),L}_i \zeta^{(k)}_i + \beta^{(k),L}_i \geq \\
%    & \geq \begin{cases}
%    \alpha^{(k),L}_i f_i^{(k),L}(x,W) +  \beta^{(k),L}_i \quad \textrm{if} \; \alpha^{(k),L}_i \geq 0    \\
%    \alpha^{(k),L}_i f_i^{(k),U}(x,W) +  \beta^{(k),L}_i \quad \textrm{if} \; \alpha^{(k),L}_i < 0.
%    \end{cases}
%\end{align*}
%
that is $\bar{f}_i^{(k),L}(x,W)$ is a lower LBF on $z^{(k)}_i$ with coefficients $\bar{\mu}_i^{(k),L}$, $\bar{\nu}_i^{(l,k),L}$, $\bar{\lambda}_i^{(k),L}$.
Analogously, let $\bar{f}_i^{(k),U}(x,W)$ be the upper LBF on $z^{(k)}_i$ computed in a similar way.

Consider now the bi-linear layer $\zeta_i^{(k+1)} = \sum_{j} W_{ij}^{(k)} z_j^{(k)} + b_i^{(k)}$.
From Equation \eqref{eq:mc1} we know that: $W_{ij}^{(k)} z_j^{(k)} \geq  W_{ij}^{(k),L} z^{(k)}_j  +  W_{ij}^{(k)} z^{(k),L}_j - W_{ij}^{(k),L} z^{(k),L}_j$.
By applying Lemma \ref{lemmma:linear_prop} with $\alpha = W_{ij}^{(k),L}$ and $\beta = 0$ we know that there exists a lower LBF $\hat{f}_{ij}^{(k),L}(x,W)$ with  a set of  coefficients $a_{ij}^{(k),L}$, $b_{ij}^{(l,k),L}$ and  $c_{ij}^{(k),L}$ computed applying Equations \eqref{eq:lin_transf1}--\eqref{eq:lin_transf2} to $\bar{\mu}_i^{(k),L}$, $\bar{\nu}_i^{(l,k),L}$, $\bar{\lambda}_i^{(k),L}$ such that: $W_{ij}^{(k),L} z^{(k)}_j \geq \hat{f}_{ij}^{(k),L}(x,W)$.
Hence we have:
\begin{align*}
    & \zeta_i^{(k+1)} = \sum_j  W_{ij}^{(k)} z_j^{(k)} + b_i^{(k)} \geq \sum_j \big( W_{ij}^{(k),L} z^{(k)}_j  + \\
    & W_{ij}^{(k)} z^{(k),L}_j  - W_{ij}^{(k),L}  z^{(k),L}_j \big) + b_i^{(k),L} \geq  \\
    & \sum_j \hat{f}_{ij}^{(k),L}(x,W) +  \sum_j W_{ij}^{(k)} z^{(k),L}_j   - \\ & \sum_j W_{ij}^{(k),L}  z^{(k),L}_j + b_i^{(k),L} = \\
    & \sum_j  \big( a_{ij}^{(k),L} \cdot x +   \sum_{l = 0}^{k-2} \langle b_{ij}^{(l,k),L} , W^{(l)} \rangle \\ &  + b_{ij}^{kl-1,k),L} \cdot W^{(k-1)}_{j:} + c_{ij}^{(k),L} \big) + \\ & W_{i:}^{(k)} \cdot z^{(k),L} - W_{i:}^{(k),L}  z^{(k),L}.
\end{align*}
By setting 
\begin{align*}
&  \mu_i^{(k+1),L} = \sum_j a_{ij}^{(k),L}\\
&  \nu_i^{(l,k+1),L} = \sum_j b_{ij}^{(l,k),L}  \quad k = 0,\ldots,l-2 \\
&  \nu_i^{(k-1,k+1),L} = b_{i}^{(k-1,k),L} \\
&  \nu_i^{(k,k+1),L} =  z^{(k),L} \\
&  \lambda_i^{(k+1),L} = \sum_j c_{ij}^{(k),L} - W_{i:}^{(k),L} \cdot z^{(k),L} +  b_i^{(k),L} 
\end{align*}
and re-arranging the elements in the above inequality, we finally obtain:
\begin{align*}
    &\zeta_i^{(k+1)} \geq \mu_i^{(k+1),L} \cdot x + \sum_{l = 0}^{k-1} \langle \nu_i^{(l,k+1),L} , W^{(l)}  \rangle + \\
    &\nu_i^{(k,k+1),L} \cdot W^{(k)}_{i:} + \lambda_i^{k+1),L}  =: f_i^{(k + 1),L}(x,W)
\end{align*}
which is of the form of Equation \eqref{eq:lbp1} for the lower LBF for the $k+1$-th layer.
Similarly, an upper LBF of the form of Equation \eqref{eq:lbp2} can be obtained by using Equation \eqref{eq:mc2} in the chain of inequalities above.

\section{Algorithms and Discussion}\label{appendix:algorithms}
\iffalse
\subsection{Computational Complexity}\label{sec:complexity}

Each or our algorithms is built in two main stages: first is output over-approximation of a set of $N$ NN weight samples, and second is the probability computation by Bonferroni bounds. 

The first stage has a complexity which is linear in the number of samples, $N$, taken from the posterior distribution of $w$. The computational complexity of this stage method is then determined by the computational complexity of the method used to propagate a given interval $\hat{H}$ (that is, line 5 in Algorithm 1). The cost of performing IBP is $\mathcal{O}(Knm)$ where $K$ is the number of hidden layers and $n \times m$ is the size of the largest weight matrix $W^{(k)}$, for $k=0,\ldots,K$. LBP is instead  $\mathcal{O}(K^2nm)$.

The second stage, the probability computation is $\mathcal{O}(N)$ in the case that all weight intervals are disjoint where each weight interval requires $\mathcal{O}(2Knm)$ time to compute its posterior probability. In the event that Bonferroni bounds are used, they dominate the complexity, requiring $\mathcal{O}(N^{d})$ where $d$ is the `depth' of the Bonferroni computation, $v$ or $u$ in Corollary~\eqref{Corollary:Bonferroni}. The exponential complexity of computing such bounds hints and the incredible speed up achievable through smart sampling schemes. 
\fi

\begin{algorithm} 
\caption{Lower Bounds for $\dsafe$}\label{alg:dsafelower}
\textbf{Input:} $T$ -- Compact Input Region, $f^{\mathbf{w}}$ -- Bayesian Neural Network, $p(w | \mathcal{D})$ -- Posterior Distribution,  $N$ -- Number of Samples, $\gamma$ -- Weight margin.  \\
\textbf{Output:} A sound lower bound on $\dsafe$.%\vspace{-1em}
\hrulealg 
\begin{algorithmic}[1]
\STATE \algcomdec{\textit\# {$\mathcal{J}$ is an arbitrary set of weight intervals}}
\STATE $\mathcal{J} \gets \emptyset$ 
\STATE \algcomdec{\textit\# {$\hat{\Psi}$ is a set of worst-case predicted outputs}}
\STATE $\hat{\Psi} \gets \emptyset$  
\STATE \algcom{\textit\# {Element-wise products to get width of weight margin.}}
\STATE $v \gets \gamma \cdot I \cdot \Sigma$
\FOR {$i\gets0$ to $N$} 
    \STATE $w^{(i)} \sim p(w | \mathcal{D})$
    \STATE \algcom{\textit\# {Assume weight intervals are built to be disjoint}}
    \STATE $[w^{(i), L}, w^{(i),U}] \gets [w_i - v , w_i + v]$
    \STATE \algcom{\textit{\# Interval/Linear Bound Propagation, Section~\ref{subsec:convexrelax}}}
    \STATE $\*y^{L}, \*y^{U} \gets \texttt{Propagate}(f, T, [w^{(i), L}, w^{(i), U}]$) 
    \STATE \algcom{\textit{\# Output worst-case see Section VI-F}}
    \STATE $y^{\text{worst}} \gets \texttt{Output-Worst}([\*y^{L}, \*y^{U}])$
    \STATE $ \mathcal{J} \gets \mathcal{J} \bigcup \{[w^{(i), L}, w^{(i), U}]\}, \quad \hat{\Psi} \gets \hat{\Psi} \bigcup \{y^{\text{worst}}\}$ %\ap{Is this supposed to be worst instead of lower?}
\ENDFOR
\STATE $y_{\text{mean}} \gets 0.0$;  $p_{\text{total}} \gets 0.0$
%\STATE $p \gets 0.0$ 
\FOR {$i\gets0$ to $|\mathcal{J}_i|$} 
    \STATE \algcomdec{\textit\# {Mult. weight probs and output bounds.}}
    \STATE $y_{\text{mean}} = y_{\text{mean}} + \hat{\Psi}_i P(\mathcal{J}_i)$ %\ap{$\Psi$ seems to be defined as a set, so I am not sure what $\Psi_i$ means? Do you want it to be a vector?}\MRW{$\Psi$ is a set of vectors. The $i$th value is the $i$th vector. I think its fine as it is.}
    \STATE $p_{\text{total}} = p_{\text{total}} + P(\mathcal{J}_i)$
\ENDFOR
\STATE \algcomdec{\textit\# {Complete the bound according to Proposition 2.}}
\STATE $D_{\text{safe}}^{L} = y_{\text{mean}} + (1-p_{\text{total}}) \sigma^{L}$
%\STATE \algcomdec{\textit{We now know that  $y_{\text{lower}} \leq \dsafe$.}} \ap{Why $y_{lower}$??? It should be $D_{\text{safe}}^L$ no?} %\ap{Carefull $y_{\text{lower}}$ appears to be overloaded}\MRW{It was just a typo above :)} 
\STATE return $D_{\text{safe}}^{L}$
\end{algorithmic}
\end{algorithm}

\subsection{Lower Bound on $\dsafe$}
%\ap{Check comment on pseudocode!}
In Algorithm~\ref{alg:dsafelower}, we provide step-by-step pseudocode for lower bounding $\dsafe$. One can notice that the algorithm follows a similar computational flow to Algorithm~\ref{alg:psafealgorithm} in the main text. Namely, on lines 2 and 4 we establish the sets that we will keep track of (weight intervals and worst-case outputs, respectively). Then in lines 7--16 we iteratively sample pairwise disjoint weight intervals and compute their worst case outputs. On line 14, a key modification is added compared to the lower bound on $\psafe$, which is the computation of the worst-case output.  In the case of softmax classification we have that $\texttt{Output-Worst}$ takes the form: %\ap{See comment on $O$ on the pseudocode. In general I don't like to name something big $O$ it might be confusing to some for the computational time stuff.}\MRW{I have made the change :) }
\begin{equation}\label{eq:worst-softmax}
    \texttt{Output-Worst}([y^{L}, y^{U}]) = \dfrac{\exp(y^{L}_c)}{\exp(y^{L}_c) + \sum_{l\neq c}^{n_c}\exp(y^{U}_l)}
\end{equation}
That is, the lower bound for the true class and the upper bound for all other classes. For regression $\texttt{Output-Worst} = y^{L}$.  %\ap{Oh I guess $O$ is either Eq (23) or this $y^l$ thing. Maybe you can call it something else? But define it properly, cause at the moment is not clear.}.
Both of these represent the worst-case output and satisfy the conditions needed for Proposition 2 in the main text. Finally, in lines 17-22 we compute the necessary components for our bound in Proposition 2 and complete the bound on line 24. Overall, the computational complexity of this algorithm is exactly the same as the lower bound on probabilistic safety and in practice the computational times are only fractionally different.

\subsection{Upper Bound on $\psafe$}

%\MK{Likewise, too informal and needs more mathematical rigour}

We provide a pseudocode for the computation of the upper bound on $\psafe$ in Algorithm \ref{alg:upperbound}.
%\ap{Maybe begin by addind an initial sentence that refers to the algorithm, and the continue like you are doing to explain the differences?}
To do this we compute unsafe weight sets. We wish to determine that a weight interval is unsafe i.e., the logical inverse of our safety property: $\lnot (f^{w}(x) \in S \  \forall x \in T) = (\exists x \ s.t. \  f^{w}(x) \notin S)$. %\ap{What is happening with that big cross?} 
Notice that, unlike the procedure for computing safety, here we do not need to jointly propagate a weight-space interval together with the full input specification $T$ %\ap{together with the input space region $T$?}\MRW{Yes, corrected. But maybe this is still confusing. Let me know!}
as we only need to find %\MK{how to guarantee this can be found? what if not?} 
a single $x$ which causes the entire weight interval to be mapped outside of $S$, and note that every $x \in T$ returns a valid bound.  Finding an $x$ that violates the property is identical to the formulation for adversarial examples. Thus, in order to test if there exists a single input that causes the weight interval to be unsafe, we leverage the developments in adversarial attacks in order to attack each sampled weight $w_i$ (done on line 4 of Algorithm~\ref{alg:upperbound}). %We highlight that our algorithm is transparent to the methods of computing adversarial examples as well as the method for performing bound propagation. As such, users can select an attack or propagation method corresponding to the desired computational complexity. Moreover, using fast methods such as FGSM and IBP requires the computation equivalent of only four forward passes through the neural network architecture. % which can be done efficiently with modern deep learning software.

\begin{algorithm}
\caption{Upper Bounding $\psafe$}\label{alg:upperbound}
\textbf{Input:} $T$ -- Input Set, $S$ -- Safe Set, $f^{\mathbf{w}}$ -- Bayesian Neural Network, $\mathbf{w}$ -- Posterior Distribution,  $N$ -- Number of Samples, $\gamma$ -- Weight Margin.  \\
%\MK{$S$ not explicit but used below, also fix line break}
\textbf{Output:} Safe upper bound on $\psafe$.
\hrulealg
\begin{algorithmic}[1]
\STATE \algcom{\textit\# {$\mathcal{K}$ is a set of known unsafe weight intervals}}
\STATE $\mathcal{K} \gets \emptyset$  
\STATE \algcom{\textit\# {Element-wise products to get width of weight margin.}}
\STATE $v \gets \gamma \cdot I \cdot \Sigma$ \hfill
\FOR {$i\gets0$ to $N$} 
    \STATE $w^{(i)} \sim p(w | \mathcal{D})$
    \STATE \algcom{\textit\# {Assume weight intervals are built to be disjoint}}
    \STATE $[w^{(i), L}, w^{(i),U}] \gets [w_i - v , w_i + v]$
    \STATE \algcom{\textit{\# FGSM/PGD}}
    \STATE $\*x_{\text{adv}} \gets \texttt{Attack}(f, w_i, T)$ 
    \STATE \algcom{\textit{\# Interval/Linear Bound Propagation}}
    \STATE $\*y^{L}, \*y^{U} \gets \texttt{Propagate}(f, \*x_{\text{adv}} , [w^{(i), L}, w^{(i), U}]$) 
    \IF{$\forall \*y \in [y^L, y^U] \*y \notin S$)}
        \STATE $ \mathcal{K} \gets \mathcal{K} \bigcup \{[w^{(i), L}, w^{(i), U}]\}$
    \ENDIF
\ENDFOR
\STATE $P_{\text{unsafe}} \gets 0.0 \quad$  %\ap{Can you make the following bit more similar to Algorithm 1? Or is there a reason why you have written it differently?} \MRW{Yes! Done.} \ap{Is actually still different! You should use that interval belonging to the family notation. Also use the proper notation for p, the one for the upper bound introduced in the text, not this random p thing}
\FOR{$i=0..|\mathcal{K}|$} 
    \STATE $P_{\text{unsafe}} = P_{\text{unsafe}} + P(\mathcal{K}_{i})$
\ENDFOR
\STATE $P_{\text{safe}}^{U} = 1 - P_{\text{unsafe}}$
\STATE return $P_{\text{safe}}^{U}$
\end{algorithmic}
\end{algorithm}

\subsection{Upper Bound on $\dsafe$}
%\ap{check comments on algorithm.}
We provide a pseudocode for the computation of the upper bound on $\dsafe$ in Algorithm \ref{alg:upperdecisionsafety}.
The main change to this algorithm is a change from computing the worst-case output to computing the best-case output. This is done with the \texttt{Output-Best} function. %\ap{$O$ function again is kind of undefined} \MRW{Fixed!} 
In the case of softmax classification $\texttt{Output-Best}$ takes the form: 
\begin{equation}\label{eq:upperboundsoftmax}
    \texttt{Output-Best}(y^{L}, y^{U}) = \dfrac{exp(y^{U}_c)}{exp(y^{U}_c) + \sum_{l\neq c}^{n_c}exp(y^{L}_l)}
\end{equation}
and for regression, $\texttt{Output-Best} = y^{U}$. 

\begin{algorithm} [h]
\caption{Upper Bounding $\dsafe$}\label{alg:upperdecisionsafety}
\textbf{Input:} $T$ -- Input Set, $f^{\mathbf{w}}$ -- Bayesian Neural Network, $p(w | \mathcal{D})$ -- Posterior Distribution,  $N$ -- Number of Samples, $\gamma$ -- Weight margin. \\
\textbf{Output:} A sound lower bound on $\dsafe$.
%\ap{I feel this intervals should be $\mathcal{J}$ and they need not be safe.}
\hrulealg
\begin{algorithmic}[1]
\STATE \algcom{\textit\# {$\mathcal{H}$ is a set of known safe weight intervals}}
\STATE $\mathcal{J} \gets \emptyset$  
\STATE \algcom{\textit\# {$\hat{\Psi}$ is a set of best-case predicted outputs}}
\STATE $\hat{\Psi} \gets \emptyset$  
\STATE \algcom{\textit\# {Element-wise products to get width of weight margin.}}
\STATE $v \gets \gamma \cdot I \cdot \Sigma$
\FOR {$i\gets0$ to $N$} 
    \STATE $w^{(i)} \sim p(w | \mathcal{D})$
    \STATE $[w^{(i), L}, w^{(i),U}] \gets [w_i - v , w_i + v]$
    \STATE \algcom{\textit{\# Interval/Linear Bound Propagation, Section~\ref{subsec:convexrelax}}}
    \STATE $\*y^{L}, \*y^{U} \gets \texttt{Propagate}(f, T, [w^{(i), L}, w^{(i), U}]$) 
    \STATE \algcom{\textit{\# Output upperbound see Eq~\eqref{eq:upperboundsoftmax}}}
    \STATE $y^{\text{upper}} \gets \texttt{Output-Best}([\*y^{L}, \*y^{U}])$ 
    \STATE $ \mathcal{J} \gets \mathcal{J} \bigcup \{[w^{(i), L}, w^{(i), U}]\}, \quad \hat{\Psi} \gets \hat{\Psi} \bigcup \{y^{\text{upper}}\}$
    %\STATE $ \hat{\Psi} \gets \hat{\Psi} \bigcup \{y^{\text{worst}}\}$
\ENDFOR
\STATE $y_{\text{mean}} \gets 0.0$;  $p_{\text{total}} \gets 0.0$
%\STATE $p \gets 0.0$ 
\FOR {$i\gets0$ to $N$} 
    \STATE \algcomdec{\textit\# {Mult. weight probs and output bounds}}
    \STATE $y_{\text{mean}} = y_{\text{mean}} + \hat{\Psi}_i P(\mathcal{H}_i)$ 
    \STATE $p_{\text{total}} = p_{\text{total}} + P(\mathcal{H}_i)$
\ENDFOR
\STATE \algcomdec{\textit\# {Complete the bound according to Proposition 2.}}
\STATE $D_{\text{safe}}^{U} = y_{\text{mean}} + (1-p_{\text{total}}) \sigma^{U}$
%\STATE \algcomdec{\textit{\# $y_{\text{upper}} \geq \dsafe$.}} \ap{Careful $y_{\text{upper}}$ appears to be overloaded.} \ap{!!!!} \ap{USe the aactual notation for the upper bound not this $y_{upper}$ thing you ahve made up here on the spot.}
\STATE return $D_{\text{safe}}^{U}$
\end{algorithmic}
\end{algorithm}

\subsection{Bonferroni Bounds for Overlapping Weight Intervals}\label{appendix:bonferroni}
\iffalse
\ap{I think the paragraph below is legacy and not needed anymore}
For sample-based posteriors where our hyper-rectangles have zero-width we can simplify this computation to the following: 
\begin{corollary}
\label{Corollary:SafetyComp}
Assume that that we have a sample-based posterior approximation, thus that we have a set of $c$ non-unique weight-samples $\{w^{(i)}\}_{i=0}^{c}$.  Further, let $H = \{\hat{H}_1,...,\hat{H}_m\}$ be $m$ safe sets of weights such that, for $i\in \{1,..,m \}$, $\hat{H}_i=w^{(i)}$. Then, it holds that
\begin{align}
  P_{\text{safe}}(T,S)\geq \dfrac{1}{c}\sum_{i=0}^{c} \mathbb{I}(w^{(i)} \in H)
\end{align}
\end{corollary}
where $\mathbb{I}(w_i \in H)$ returns 1 if the sample is inside of the safe interval(s) and 0 otherwise. 
%
\ap{End legacy here?}
\fi

%\MK{Not sure I follow - citation for Bonferoni bounds?}
A key challenge of Proposition~\ref{Prop:SafetyComputation2} in the variational inference case  is ensuring that the hyper-rectangles are pairwise disjoint (i.e., $\hat{H}_i \cap \hat{H}_j=\emptyset$). If this is not the case, then enforcing independence can be computationally tricky, as the relative complement of two or more hyper-rectangles is not necessarily a hyper-rectangle. While one could modify the sampling procedure so to reject overlapping intervals, or could devise a scheme for sampling pairwise disjoint hyper-rectangles, for a high values of $N$ and for a large number of parameters this becomes computationally intensive. %\ap{While one could modify the sampling procedure so to reject overlapping intervals, for a high values of $N$ this becomes computationally intensive.} \MRW{Revised and added.}
To solve this, we highlight that the disjoint union of two or more hyper-rectangles is necessarily a hyper-rectangle. %\ap{Use $\sqcup$ where/if necessary} \MRW{Done :) }
Therefore, we can employ Bonferroni inequalities \cite{bonferroni1936teoria} to get upper and lower bound on the posterior probability of non-disjoint hyper-rectangles: %we first consider that a single hyper rectangle, $\hat{H}_i$ satisfies the conditions of Corrollary 2 and therefore we can compute $Prob(\hat{H}_i)$. For multiple dependent hyperrectangles, we note that unlike the relative compliment, the union of any $n$ hyperrectangles is necessarily a hyperrectangle. Thus, one can bound the cumulative probability of dependent hyperrectangles from above and below through the use of Bonferroni inequalities as follows: 
\begin{corollary}
\label{Corollary:Bonferroni}
Assume that $\Sigma$, the covariance matrix of the posterior distribution of the weights, is diagonal with diagonal elements $\Sigma_1,...,\Sigma_{n_w}$. Let $\hat{H}_1,...,\hat{H}_M$ be $M$ safe sets of weights not necessary satisfying $\hat{H}_i \cap \hat{H}_j=\emptyset$ and let the probability of any $k$ of these safe sets simultaneously occurring be defined as:
\begin{align*}
  S_{k} := \bigsqcup_{i_{1} < ... < i_{k}} H_{i_{1}} \sqcup ... \sqcup H_{i_{k}}
\end{align*}
We then have that for any even integer $v$ and odd integer $u$ that the probability of the weights under the posterior is bounded: 
\begin{align*}
  \sum_{j=1}^{v} (-1)^{j} Prob(S_{j})
  \leq Prob(\hat{H}_1,...,\hat{H}_M) \leq  \sum_{j=1}^{u} (-1)^{j} Prob(S_{j})
\end{align*}
where $Prob(S_{j})$ is computed according to Corollary 2 as $S_{j}$ is a single hyper-rectangle.
\end{corollary}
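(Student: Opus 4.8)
The plan is to recognise that, once probabilities are in hand, this is the classical Bonferroni inequality applied to the posterior measure $P(\cdot)$ of Equation~\eqref{eq:cumulative}, with the only BNN-specific ingredient being the closed-form computability of each term. First I would note that the safe sets $\hat{H}_1,\ldots,\hat{H}_M$ are measurable subsets of $\mathbb{R}^{n_w}$, so $Prob(\hat{H}_1,\ldots,\hat{H}_M)$ is shorthand for $P\!\left(\bigcup_{i=1}^M \hat{H}_i\right)$, i.e.\ the probability of the non-maximal safe set $\hat{H}$ that lower-bounds $\psafe$ via Lemma~\ref{Prop:SafetyComputation} and monotonicity of the measure. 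The object $Prob(S_k)$ is the sum of all $k$-fold intersection probabilities, $Prob(S_k)=\sum_{i_1<\cdots<i_k}P\!\left(H_{i_1}\cap\cdots\cap H_{i_k}\right)$. Since each $\hat{H}_i$ is an axis-aligned hyper-rectangle and the intersection of axis-aligned hyper-rectangles is again one (take the coordinatewise maximum of lower endpoints and minimum of upper endpoints), every term $P(H_{i_1}\cap\cdots\cap H_{i_k})$ is evaluable in closed form through Equation~\eqref{eq:vi_integral}; this is exactly what justifies the final sentence of the statement that each contribution is computed as for a single hyper-rectangle.

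The core of the argument is the indicator-counting proof of truncated inclusion--exclusion. I would fix a weight $w$ lying in exactly $r$ of the sets $\hat{H}_i$ (with $r\ge 1$; points in none of the sets contribute $0$ to both sides and are ignored). For such a $w$, the number of $k$-subsets of indices whose intersection contains $w$ is precisely $\binom{r}{k}$, so $w$ contributes $\binom{r}{k}$ to $Prob(S_k)$ and $1$ to $P\!\left(\bigcup_i\hat{H}_i\right)$. The bound then follows from the elementary identity $\sum_{k=0}^{t}(-1)^k\binom{r}{k}=(-1)^t\binom{r-1}{t}$, which gives for the truncated alternating partial sum $\sum_{k=1}^{t}(-1)^{k+1}\binom{r}{k}=1-(-1)^t\binom{r-1}{t}$; this is $\ge 1$ when $t$ is odd and $\le 1$ when $t$ is even, and collapses to exactly $1$ once $t\ge r$. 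Integrating these pointwise inequalities against the posterior measure turns the $\binom{r}{k}$-counts into the $Prob(S_k)$ terms and the indicator into $P\!\left(\bigcup_i\hat{H}_i\right)$, delivering the odd-length upper bound and the even-length lower bound asserted in the corollary.

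Finally I would reconcile the sign bookkeeping: the standard inclusion--exclusion partial sum $\sum_{j=1}^{t}(-1)^{j+1}Prob(S_j)$ is the one bounded above, so the displayed form is obtained by grouping the global alternating sign, with odd $u$ yielding the upper bound and even $v$ the lower bound. I expect the only genuinely delicate point to be the combinatorial step---verifying that the truncation error of inclusion--exclusion alternates in sign with the parity of the number of retained terms---since everything else reduces either to measurability of the sets or to the Gaussian integration already established for disjoint rectangles. It is worth remarking that in practice one truncates at small $u$ and $v$, since the number of intersection evaluations entering $Prob(S_k)$ scales like $\binom{M}{k}$, so tightness is traded against computational cost.
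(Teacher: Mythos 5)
Your argument is correct and is essentially the paper's own route: the paper offers no proof of this corollary, simply invoking the classical Bonferroni inequalities by citation, and your indicator-counting derivation via $\sum_{k=0}^{t}(-1)^k\binom{r}{k}=(-1)^t\binom{r-1}{t}$ is the standard proof of exactly that result, completed by integrating the pointwise inequality against the posterior. You also correctly repair the two notational defects in the statement as printed --- reading $Prob(S_k)$ as the sum $\sum_{i_1<\cdots<i_k}P\left(H_{i_1}\cap\cdots\cap H_{i_k}\right)$ of closed-form rectangle integrals rather than as the measure of a single union, and restoring the $(-1)^{j+1}$ sign convention under which odd truncation yields the upper bound and even truncation the lower --- both of which are necessary for the inequalities to hold as claimed.
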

Now that we can compute if a weight interval is guaranteed to be safe and we can compute a lower bound to the posterior probability covered by many weight intervals, we can combine these subroutines into algorithms for computing the required probability bounds. For bounds on decision robustness we need to consider the upper or lower bound output in conjunction with this probability. Recall that the upper or lower bound output determined by $\texttt{Output-Worst}$ or $\texttt{Output-Best}$ described in Appendix~\ref{appendix:algorithms} and the upper and lower bounds are stored such that the output bound of $\mathcal{J}_i$ is stored in $\hat{\Psi}_i$. To get a lower bound we modify the above corollary to be: 
\begin{align}
  \sum_{j=1}^{v} (-1)^{j} Prob(S_{j}) \max \{ \hat{\Psi}_i \}_{i=1}^{j}
  \leq \sum_{i=1}^{M} \hat{\Psi}_i Prob(\hat{J}_i).
\end{align}
To get an upper bound we use:
\begin{align*}
  \sum_{i=1}^{M} \hat{\Psi}_i Prob(\hat{J}_i) \leq \sum_{j=1}^{u} (-1)^{j} Prob(S_{j}) \min \{ \hat{\Psi}_i \}_{i=1}^{j}
\end{align*}
Here we can use the $\max$ operator for our lower bound and $\min$ for our upper bound as every value in the set $\{ \hat{\Psi}_i \}_{i=1}^{M}$ is a valid output bound for the disjoint union of hyper-rectangles.
%\ap{Do we do something similar for the decision robustness case as well? In that case I'd just mention that very briefly with one or two sentences}\MRW{You are right this is a pretty big oversight actually. I have added it. Perhaps these bounds need to be stated as a proposition and given a proof? After I have written them down they don't seem as obvious anymore.} \ap{If you are sure that they are correct, no need to put a proof. I would put a sentence on the computational time though.}

\section{Empirical Bound Validation}

\begin{figure}
    \centering
    \includegraphics[width=0.5\textwidth]{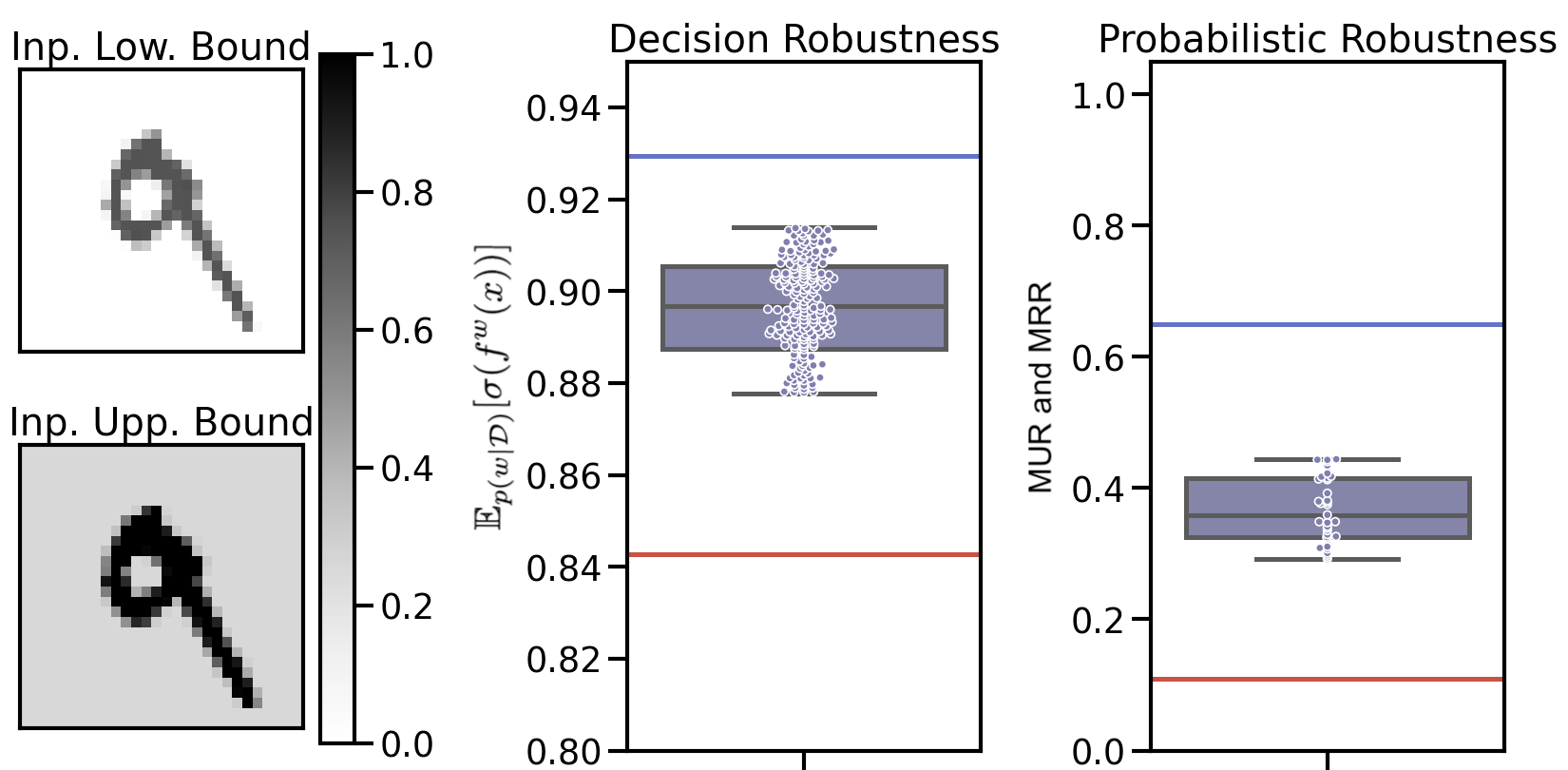}
    \caption{Comparing numerical bounds with adversarial examples to our certifications. \textbf{Left:} Example upper and lower bounds on a MNIST image. \textbf{Center:} Bounds on softmax output of a BNN, thus $\dsafe$. \textbf{Right:} Bounds on $\psafe$ maximum safe radius MinUR in red and MaxRR in blue. %\ap{MUR and MRR are still wrong notation.}
    }
    \label{fig:empanal}
\end{figure}

\subsection{Numerical Result}
%\MK{Unclear what adversarial examples are, this needs to be clarified and improved}
In Figure~\ref{fig:empanal}, we study how our bounds compare against empirical estimates of robustness achieved with adversarial attacks. In the centre panel of Figure~\ref{fig:empanal} ,we plot the upper- and lower-bounds on $\mathbb{E}[\sigma(f^{w}(x))]$ computed according to $\dsafe$. We also use 25 iterations of PGD to attempt to minimize $\mathbb{E}[\sigma(f^{w}(x))]$ for the true class, i.e., an adversarial attack on the BNN. % the confidence of the BNN, $\mathbb{E}_{\post} \sigma(f^{w}(x_{\text{adv}}))$, 
We run the attack 100 different times and plot the distribution of the results as the purple box plot in Figure~\ref{fig:empanal}. We see that our lower bound is strictly less than what any of the adversaries were able to achieve, and the upper bound is strictly greater than any of the points. This is due to the conservative nature of certification compared with attacks. Following the same procedure i.e., first using our bounds and then using PGD attacks for the same optimization, we study the MaxRR and MinUR. In the right-hand plot of Figure~\ref{fig:empanal}, we plot bounds on the MaxRR and MinUR with respect to probabilistic robustness. Similarly, we plot the empirical robust radius, which is the radius at which an adversarial attack was able to reduce a statistical estimate of $\psafe$ below 0.5. As we expect, the adversarial attacks all fall between our upper and lower bounds due to the conservative nature of certification.

\end{document}